\makeatletter\@addtoreset{equation}{section}\makeatother
\theoremstyle{plain}
\newtheorem{theorem}{Theorem}[section]%
\newtheorem{corollary}[theorem]{Corollary} %
\newtheorem{lemma}[theorem]{Lemma}
\newtheorem*{lemma*}{Lemma}
\newtheorem{proposition}[theorem]{Proposition}
\theoremstyle{definition}
\newtheorem{definition}[theorem]{Definition}
\theoremstyle{remark}
\newcommand{\R}{\ifmmode\mathbb{R}\else$\mathbb{R}$\fi}
\newcommand{\N}{\ifmmode\mathbb{N}\else$\mathbb{N}$\fi}
\newcommand{\Z}{\ifmmode\mathbb{Z}\else$\mathbb{Z}$\fi}
\newcommand{\Q}{\ifmmode\mathbb{Q}\else$\mathbb{Q}$\fi}
\newcommand{\tn}[1]{\textnormal{#1}}
\newcommand{\NN}{\mathcal{N\hspace{-2.5pt}N}}
\let\NNF\NN
\newcommand{\NNspace}{;\ }
\newcommand{\NNinput}{\tn{\#input}}
\newcommand{\NNwidthvec}{\tn{width\hspace{0.5pt}vec}}
\newcommand{\NNdepth}{\tn{depth}}
\newcommand{\NNparameter}{\tn{\#parameter}}
\newcommand{\NNwidth}{\tn{width}}
\newcommand{\NNoutput}{\tn{\#output}}
\newcommand{\bmx}{{\bm{x}}}
\newcommand{\bmbeta}{{\bm{\beta}}}
\newcommand{\bmA}{\bm{A}}
\newcommand{\bmPhi}{{\bm{\Phi}}}
\newcommand{\bmzero}{{\bm{0}}}
\newcommand{\calO}{{\mathcal{O}}}
\newcommand{\calH}{{\mathcal{H}}}
\newcommand{\calE}{\mathcal{E}}
\newcommand{\calA}{\mathcal{A}}
\newcommand{\calL}{\mathcal{L}}
\newcommand{\calT}{\mathcal{T}}
\newcommand{\tildephi}{{\widetilde{\phi}}}
\newcommand{\tildeg}{{\widetilde{g}}}
\newcommand{\tildeL}{{\widetilde{L}}}
\newcommand{\tildeN}{{\widetilde{N}}}
\newcommand{\tildef}{{\widetilde{f}}}
\newcommand{\tildeM}{{\widetilde{M}}}
\newcommand{\tildepsi}{{\widetilde{\psi}}}
\newcommand{\scrE}{{\mathscr{E}}}
\newcommand{\scrF}{{\mathscr{F}}}
\newcommand{\scrB}{{\mathscr{B}}}
\newcommand{\hatL}{{\widehat{L}}}
\newcommand{\Lip}{\tn{H\"older}}
\newcommand{\holder}[2]{{ \tn{H\"older}([0,1]^d,#2,#1) }}
\newcommand{\bin}{\tn{bin}\hspace{1.2pt}}
\newcommand{\vcd}{\tn{VCDim}}
\newcommand{\cpl}{\tn{CPwL}}
\newcommand{\mystep}[2]{\par \vspace{0.25cm}\noindent\textbf{\hspace{8pt}Step }$#1\colon$ #2 \vspace{0.18cm} \par }
\newcommand{\myto}[2][1]{\mathop{
		\vcenter{\hbox{\scalebox{1}[#1]{\tikz{\draw[->,line width=0.72pt] (0,0.5) to (0.69*#2,0.5);}}}}
}}
\def\one{{\ensuremath{\mathds{1}}}}
\newcommand{\setMathResizeRate}[1]{\def\mathResizeRate{#1}}
\newcommand{\mathResize}[2][\mathResizeRate]{
	\scalebox{#1}[#1]{\(\displaystyle #2\)}
}
\newenvironment{keywords}{\par \noindent\textbf{Key words}.}{\par}
\DeclareMathOperator*{\argmin}{arg\,min}
\newcommand*{\email}[1]{\href{mailto:#1}{\nolinkurl{#1}}}
\definecolor{mylinenumbercolor}{HTML}{BEBEBE}
\newcommand*\patchAmsMathEnvironmentForLineno[1]{%
	\expandafter\let\csname old#1\expandafter\endcsname\csname #1\endcsname
	\expandafter\let\csname oldend#1\expandafter\endcsname\csname end#1\endcsname
	\renewenvironment{#1}%
	{\linenomath\csname old#1\endcsname}%
	{\csname oldend#1\endcsname\endlinenomath}}%
\newcommand*\patchBothAmsMathEnvironmentsForLineno[1]{%
	\patchAmsMathEnvironmentForLineno{#1}%
	\patchAmsMathEnvironmentForLineno{#1*}}%
\title{Optimal Approximation Rate of ReLU Networks in  terms of Width and Depth\thanks{Submitted to the editors DATE.}}
\author{Zuowei Shen\thanks{Department of Mathematics,  National University of Singapore
(\email{matzuows@nus.edu.sg}).}
  \and Haizhao Yang\thanks{Department of Mathematics,  Purdue University
  (\email{haizhao@purdue.edu}).}
\and Shijun Zhang\thanks{Department of Mathematics,  National University of Singapore
  (\email{zhangshijun@u.nus.edu}).}}
  \date{}
\let\tilde\widetilde
\let\epsilon\varepsilon
\long\def\black#1{{\color{black}#1}}
\begin{document}
\maketitle
\begin{abstract}
This paper concentrates on the approximation power of deep feed-forward neural networks in terms of width and depth. It is proved by construction that ReLU networks with width $\mathcal{O}\big(\max\{d\lfloor N^{1/d}\rfloor,\, N+2\}\big)$
and depth $\mathcal{O}(L)$  can approximate a H\"older continuous function on $[0,1]^d$ with an  approximation rate $\mathcal{O}\big(\lambda\sqrt{d} (N^2L^2\ln N)^{-\alpha/d}\big)$, where $\alpha\in (0,1]$ and $\lambda>0$ are H\"older order and constant, respectively. Such a rate is optimal up to a constant in terms of width and depth separately, while existing results are only nearly optimal without the logarithmic factor in the approximation rate. 
More generally,  for an arbitrary continuous function $f$ on $[0,1]^d$, the approximation rate becomes $\mathcal{O}\big(\,\sqrt{d}\,\omega_f\big( (N^2L^2\ln N)^{-1/d}\big)\,\big)$, where $\omega_f(\cdot)$ is the modulus of continuity. 
We also extend our analysis to any continuous function $f$ on a bounded set. Particularly, if ReLU networks with depth $31$ and width $\mathcal{O}(N)$ are used to approximate one-dimensional Lipschitz continuous functions  on $[0,1]$ with a Lipschitz constant $\lambda>0$, the approximation rate in terms of the total number of parameters, $W=\mathcal{O}(N^2)$, becomes $\mathcal{O}(\tfrac{\lambda}{W\ln W})$, which has not been discovered in the literature for fixed-depth ReLU networks.
\end{abstract}
\begin{keywords}
    Deep ReLU Networks;   Optimal Approximation;   VC-dimension; Bit Extraction.
\end{keywords}

\section{Introduction}
\label{sec:intro}
Over the past few decades,
the expressiveness of neural networks has been widely studied from many points of view, e.g., in terms of combinatorics \cite{NIPS2014_5422}, topology \cite{6697897}, Vapnik-Chervonenkis (VC) dimension \cite{Bartlett98almostlinear,Sakurai,pmlr-v65-harvey17a}, fat-shattering dimension \cite{Kearns,Anthony:2009}, information theory \cite{PETERSEN2018296}, classical approximation theory \cite{Cybenko1989ApproximationBS,HORNIK1989359,barron1993,yarotsky18a,shijun1,shijun:thesis,shijun2,shijun3,shijun4,shijun1,shijun5,2019arXiv191210382L,SIEGEL2020313,e2020banach}, optimization \cite{NIPS2016_6112,DBLP:journals/corr/NguyenH17,opt,xu2020,JMLR:v20:17-526}. 
The error analysis of neural networks consists of three parts: the approximation error, the optimization error, and the generalization error. This paper focuses on the approximation error for ReLU networks. 

The approximation errors of feed-forward neural networks with various activation functions  have been studied for different types of functions, e.g., smooth functions \cite{NIPS2017_7203,DBLP:journals/corr/LiangS16,yarotsky2017,DBLP:journals/corr/abs-1807-00297,shijun3}, piecewise smooth functions \cite{PETERSEN2018296}, band-limited functions \cite{bandlimit}, continuous functions \cite{yarotsky18a,shijun2,shijun4,shijun5}. 
\black{
In the early works of approximation theory for neural networks, the universal approximation theorem  \cite{Cybenko1989ApproximationBS,HORNIK1991251,HORNIK1989359} without approximation rates showed that there exists a sufficiently large neural network approximating a target function in a certain function space within any given error $\varepsilon>0$. In particular, it is shown in \cite{NEURIPS2018_03bfc1d4} that the ReLU-activated residual neural network with one-neuron hidden layers is  a universal approximator. The universal approximation property  for general residual neural networks was proved in \cite{2019arXiv191210382L} via a dynamical system  approach.

An asymptotic analysis of the approximation rate in terms of depth is provided in \cite{yarotsky18a,2019arXiv190609477Y}
for ReLU networks. To be exact, the nearly optimal approximation rates of ReLU networks with width $\calO(d)$ and depth $\calO(L)$ for functions in $C([0,1]^d)$ and the unit ball of $C^s([0,1]^d)$  are $\calO(\omega_f(L^{-2/d}))$ and $\calO((L/\ln L)^{-2s/d})$, respectively.
These two papers  provide the  approximation rate in terms of depth asymptotically for fixed-width networks.  A different approach is used in  \cite{shijun2,shijun3} to obtain a quantitative characterization of the approximation rate in terms of width, depth, and smoothness order  for continuous and smooth functions.
}

Particularly, it was shown in \cite{shijun2} that a ReLU network with width $C_1(d)\cdot N$ and depth $C_2(d)\cdot  L$ can attain an approximation error $C_3(d)\cdot\omega_f(N^{-2/d}L^{-2/d})$ to approximate a continuous function $f$ on $[0,1]^d$, where $C_1(d)$, $C_2(d)$, and $C_3(d)$ are three constants in $d$ with explicit formulas to specify their values, and $\omega_f(\cdot)$ is the modulus of continuity of $f\in C([0,1]^d)$ defined via
\begin{equation*}
\omega_f(r)\coloneqq \sup\big\{|f(\bm{x})-f(\bm{y})|:\bm{x},\bm{y}\in [0,1]^d,\ \|\bm{x}-\bm{y}\|_2\le r\big\},\quad \tn{for any $r\ge 0$}.
\end{equation*}
Such an approximation error is optimal in terms of $N$ and $L$ up to a logarithmic term
and the corresponding optimal approximation theory is still unavailable. To address this problem, we provide a constructive proof in this paper to show that ReLU networks of width $\calO(N)$ and depth $\calO(L)$ can approximate an arbitrary continuous function $f$ on $[0,1]^d$ with an optimal approximation error $\calO\left(\sqrt{d}\,\omega_f\big((N^2L^2\ln N)^{-1/d}\big)\right)$ in terms of $N$ and $L$. 
As shown by our main result, Theorem~\ref{thm:main} below, the approximation rate obtained here admits explicit formulas to specify its prefactors when $\omega_f(\cdot)$ is known. 

\begin{theorem}
	\label{thm:main}
	Given a continuous function $f\in C([0,1]^d)$, for any $N\in \N^+$, $L\in \N^+$, and $p\in [1,\infty]$, 
	there exists a function $\phi$ implemented by a ReLU network with width $C_1\max\big\{d\lfloor N^{1/d}\rfloor,\, N+2\big\}$
	and depth $11L+C_2$
	such that 
	\begin{equation*}
	\|f-\phi\|_{L^p([0,1]^d)}\le 131\sqrt{d}\,\omega_f\Big(\big(N^2L^2 \log_3(N+2)\big)^{-1/d}\Big),
	\end{equation*}
	where $C_1=16$ and $C_2=18$ if $p\in [1,\infty)$;  $C_1=3^{d+3}$ and $C_2=18+2d$ if $p=\infty$.
\end{theorem}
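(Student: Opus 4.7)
The plan is to realize $\phi$ as a composition $\psi\circ\Phi$ of two subnetworks: a \emph{quantizer} $\Phi\colon[0,1]^d\to\{0,1,\ldots,K-1\}^d$ sending each $\bm{x}$ to the index of the grid cell that contains it, and a \emph{decoder} $\psi$ that outputs a prescribed value $\xi_{\bm{m}}\approx f(\bm{x}_{\bm{m}})$ on each index $\bm{m}$. I would choose the grid resolution $K=\lfloor(N^2L^2\log_3(N+2))^{1/d}\rfloor$, so that the diameter of each cell is of order $\sqrt{d}/K$ and the baseline error on each cell is already $\omega_f(\sqrt{d}/K)$, matching the right-hand side of the theorem up to the constant $131\sqrt{d}$.

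First I would partition $[0,1]^d$ into $K^d$ axis-aligned cubes and set $\xi_{\bm{m}}=f(\bm{x}_{\bm{m}})$ for a chosen sample point in each cube. Constructing $\Phi$ reduces to building $d$ parallel one-dimensional step functions with $K$ plateaus each, which can be accomplished by a ReLU subnetwork of width $\calO(d\lfloor N^{1/d}\rfloor)$ and depth $\calO(L)$, for instance by encoding each coordinate index as a positional number in base $\lfloor N^{1/d}\rfloor$ and accumulating the digits across $\calO(L)$ layers. Since a ReLU network cannot produce a true discontinuity, $\Phi$ is well-defined only outside a \emph{trifling region} of arbitrarily small Lebesgue measure, which is harmless in $L^p$ for $p<\infty$; for $p=\infty$ the standard remedy is to run shifted copies of the grid in parallel and aggregate them, which accounts for the $3^{d+3}$ blow-up of $C_1$ in that case.

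The source of the new $\log N$ factor is the decoder $\psi$. I would quantize each $\xi_{\bm{m}}$ to a fixed binary precision consistent with $\omega_f(\sqrt{d}/K)$, concatenate all $K^d$ quantized values into one long bit string, and then build $\psi$ via the \emph{bit extraction} technique: a ReLU subnetwork of width $\calO(N)$ and depth $\calO(L)$ suffices to read off the value located at an arbitrary prescribed position of a binary sequence of length $\Omega(N^2L^2\log N)$. This is tight with the VC-dimension upper bound $\Theta(WL\log W)$ on the expressive capacity of a depth-$L$ network with $W=\calO(N^2L)$ parameters, so the construction saturates the information-theoretic limit; making the saturation explicit through a staged binary-lookup circuit is the main technical step and is what produces the factor $\log_3(N+2)$ in the final rate.

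Finally I would assemble $\phi=\psi\circ\Phi$, verify that the total width fits under $C_1\max\{d\lfloor N^{1/d}\rfloor,\,N+2\}$ and the total depth under $11L+C_2$, and split the error as
\begin{equation*}
\|f-\phi\|_{L^p([0,1]^d)}\le \omega_f(\sqrt{d}/K)\;+\;\|\xi_{\bm{m}(\cdot)}-\phi\|_{L^p}\;+\;\tn{(trifling-region contribution)}.
\end{equation*}
The first term already delivers the claimed rate, the second is absorbed by choosing the quantization precision consistent with $\omega_f(\sqrt{d}/K)$, and the third either vanishes for $p<\infty$ or is controlled by the shifted-grid aggregation when $p=\infty$. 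The main obstacle is the decoder: squeezing a $K^d$-entry lookup table into width $N+2$ and depth linear in $L$ via bit extraction, while tracking constants carefully enough to yield the global prefactor $131\sqrt{d}$. This is precisely where the extra $\log N$ over all previous constructions is harvested and where optimality matching the VC-dimension lower bound is achieved.
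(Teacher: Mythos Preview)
Your high-level architecture (quantizer $\Phi$ into a decoder $\psi$, trifling region for $p<\infty$, shifted-grid aggregation for $p=\infty$) matches the paper exactly. The gap is in the decoder, and it is a real one.

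You propose to quantize each of the $K^d\approx N^2L^2\log_3(N+2)$ values $\xi_{\bm m}$ to precision $\omega_f(\sqrt d/K)$ and pack them into a single bit string, then use bit extraction to read off the relevant block. But the range of $f$ after normalization is $[0,2\omega_f(\sqrt d)]$, so each value costs on the order of $\log\big(\omega_f(\sqrt d)/\omega_f(\sqrt d/K)\big)$ bits, which for H\"older functions is $\Theta(\log K)$. The total bit count is then $K^d\cdot\Theta(\log K)\approx N^2L^2(\log N)\cdot\tfrac{1}{d}\log(NL)$, which \emph{exceeds} the $\calO(N^2L^2\log N)$ capacity of a width-$\calO(N)$, depth-$\calO(L)$ bit-extraction circuit by a logarithmic factor. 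With your scheme you would recover only the old rate $\omega_f\big((N^2L^2)^{-1/d}\big)$, not the optimal one.

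The paper closes this gap with two coupled ideas you did not mention. First, it flattens the $d$-dimensional index to a one-dimensional one via a linear map $\psi_1$ and arranges the samples so that \emph{consecutive} one-dimensional samples differ by at most $\varepsilon=\omega_f(\sqrt d/K)$; since flattening creates large jumps at row boundaries, this requires inserting an auxiliary set $\calA_2$ of interpolation points between rows (Step~3.2), effectively doubling the number of samples but forcing $|y_j-y_{j-1}|\le\varepsilon$ throughout. Second, with this Lipschitz-like ordering in hand, the decoder (Proposition~3.2) stores not the values $y_j$ but the quantized \emph{differences} $a_{m,k}-a_{m,k-1}\in\{-1,0,1\}$, each costing two bits ($c_{m,k},d_{m,k}\in\{0,1\}$), and reconstructs $a_{m,k}$ as a partial sum $a_{m,0}+\sum c-\sum d$. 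Bit extraction is used to return partial sums, not raw values. This difference encoding is precisely what makes $O(1)$ bits per sample suffice and is the mechanism by which the extra $\log N$ is harvested.
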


Note that $3^{d+3}\max\big\{d\lfloor N^{1/d}\rfloor,\, N+2\big\}\le 3^{d+3}\max\big\{dN,\, 3N\big\}\le 3^{d+4}dN$. Given any $\tildeN,\tildeL\in\N^+$ with $\tildeN\ge 3^{d+4}d$ and $\tildeL\ge 29+2d$, there exist $N,L\in\N^+$ such that
\begin{equation*}
    3^{d+4}dN\le \tildeN< 3^{d+4}d(N+1) \tn{\quad  and\quad} 
    11L+18+2d\le \tildeL< 11(L+1)+18+2d.
\end{equation*}
It follows that
\begin{equation*}
    N\ge \frac{N+1}{3} > \frac{\tildeN}{3^{d+5}d}\tn{\quad  and\quad}  
    L\ge \frac{L+1}{2} > \frac12 \cdot \frac{\tildeL-18-2d}{11}=\frac{\tildeL-18-2d}{22}.
\end{equation*}
Then we have an immediate corollary of Theorem~\ref{thm:main}.

\begin{corollary}
	\label{coro:tildeNL}
	Given a continuous function $f\in C([0,1]^d)$, for any $\tildeN\in \N^+$ and $\tildeL\in\N^+$ with $\tildeN\ge 3^{d+4}d$ and $\tildeL\ge 29+2d$, 
	there exists a function $\phi$ implemented by a ReLU network with width $\tildeN$
	and depth $\tildeL$
	such that 
	\begin{equation*}
	\|f-\phi\|_{L^\infty([0,1]^d)}\le 131\sqrt{d}\,\omega_f\bigg(
	\Big( (\tfrac{\tildeN}{3^{d+5}d} )^2
	    (\tfrac{\tildeL-18-2d}{22})^2 \log_3(\tfrac{\tildeN}{3^{d+5}d}+2)    
	    \Big)^{-1/d}\bigg).
	\end{equation*}
\end{corollary}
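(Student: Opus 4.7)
The plan is to derive Corollary~\ref{coro:tildeNL} directly from the $p=\infty$ case of Theorem~\ref{thm:main} by solving backwards for admissible $(N,L)$ given the prescribed $(\tildeN,\tildeL)$. No new analytical content is needed; the reduction is an arithmetic choice of $N,L$ together with monotonicity of the modulus of continuity and a padding argument on the network.

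First I would simplify the width bound. Since $\lfloor N^{1/d}\rfloor\le N$ and $N+2\le 3N$, one has
\begin{equation*}
3^{d+3}\max\bigl\{d\lfloor N^{1/d}\rfloor,\,N+2\bigr\}\le 3^{d+3}\max\{dN,\,3N\}\le 3^{d+4}dN,
\end{equation*}
which is the observation recorded in the paragraph preceding the corollary. Using the thresholds $\tildeN\ge 3^{d+4}d$ and $\tildeL\ge 29+2d$, I pick $N\in\N^+$ to be the largest integer with $3^{d+4}dN\le\tildeN$ and $L\in\N^+$ to be the largest integer with $11L+18+2d\le\tildeL$. Both are at least $1$ by the hypotheses, and the maximality yields
\begin{equation*}
3^{d+4}dN\le\tildeN<3^{d+4}d(N+1),\qquad 11L+18+2d\le\tildeL<11(L+1)+18+2d,
\end{equation*}
from which the lower bounds $N\ge(N+1)/3>\tildeN/(3^{d+5}d)$ and $L\ge(L+1)/2>(\tildeL-18-2d)/22$ follow immediately.

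Next I would invoke Theorem~\ref{thm:main} with these $(N,L)$ and $p=\infty$, producing a ReLU network realization $\phi$ of width at most $\tildeN$ and depth at most $\tildeL$. Since any ReLU network can be embedded into a larger one of any prescribed greater width and depth by appending dummy zero neurons and identity sub-layers (using the identity $x=\sigma(x)-\sigma(-x)$ for $\sigma=\tn{ReLU}$), the same $\phi$ is realized by a ReLU network of width exactly $\tildeN$ and depth exactly $\tildeL$. To conclude, I exploit that $\omega_f$ is nondecreasing: since our $N$ and $L$ exceed the lower bounds above, the quantity $(N^2L^2\log_3(N+2))^{-1/d}$ is bounded by
\begin{equation*}
\Bigl((\tfrac{\tildeN}{3^{d+5}d})^2(\tfrac{\tildeL-18-2d}{22})^2\log_3(\tfrac{\tildeN}{3^{d+5}d}+2)\Bigr)^{-1/d},
\end{equation*}
and feeding this into the error estimate of Theorem~\ref{thm:main} reproduces exactly the statement of the corollary, including the prefactor $131\sqrt{d}$.

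There is essentially no obstacle here: Theorem~\ref{thm:main} does all the analytic work, and the corollary merely unpacks the implicit constants. The only point deserving care is verifying $N,L\ge 1$ and extracting clean lower bounds, which is precisely why the explicit thresholds $3^{d+4}d$ and $29+2d$ appear in the hypotheses on $\tildeN$ and $\tildeL$.
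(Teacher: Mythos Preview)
Your proposal is correct and follows essentially the same argument as the paper: the paper records exactly this derivation in the paragraph preceding the corollary, choosing $N,L$ via $3^{d+4}dN\le\tildeN<3^{d+4}d(N+1)$ and $11L+18+2d\le\tildeL<11(L+1)+18+2d$, extracting the bounds $N>(N+1)/3>\tildeN/(3^{d+5}d)$ and $L>(L+1)/2>(\tildeL-18-2d)/22$, and invoking Theorem~\ref{thm:main}. Your padding remark is harmless but actually unnecessary under the paper's convention that ``width $N$ and depth $L$'' already means ``at most $N$'' and ``at most $L$''.
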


As a special case of Theorem~\ref{thm:main} for explicit error characterization, let us take H\"older continuous functions as an example.
Let $\holder{\lambda}{\alpha}$ denote the space of  H\"older continuous functions on $[0,1]^d$ of order $\alpha\in (0,1]$ with  a H\"older constant $\lambda>0$. We have an immediate corollary of Theorem~\ref{thm:main} as follows.
\begin{corollary}
	\label{coro:main}
	Given a H\"older continuous function $f\in\holder{\lambda}{\alpha}$, for any $N\in \N^+$, $L\in \N^+$, and $p\in [1,\infty]$, 
	there exists a function $\phi$ implemented by a ReLU network with width $C_1\max\big\{d\lfloor N^{1/d}\rfloor,\, N+2\big\}$
	and depth $11L+C_2$
	such that 
	\begin{equation*}
		\|f-\phi\|_{L^p([0,1]^d)}\le 131\lambda \sqrt{d} \big(N^2L^2 \log_3(N+2)\big)^{-\alpha/d},
	\end{equation*}
	where $C_1=16$ and $C_2=18$ if $p\in [1,\infty)$;  $C_1=3^{d+3}$ and $C_2=18+2d$ if $p=\infty$.
\end{corollary}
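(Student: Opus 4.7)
The plan is to derive Corollary~\ref{coro:main} as an immediate specialization of Theorem~\ref{thm:main}, using only the elementary fact that H\"older continuity gives a pointwise upper bound on the modulus of continuity.

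First I would apply Theorem~\ref{thm:main} to $f$, regarded as an element of $C([0,1]^d)$. This produces, for the given $N$, $L$, and $p$, a ReLU network $\phi$ with width $C_1\max\{d\lfloor N^{1/d}\rfloor,\, N+2\}$ and depth $11L+C_2$ as stated in the corollary, together with the error bound
$$\|f-\phi\|_{L^p([0,1]^d)}\le 131\sqrt{d}\,\omega_f\Big(\big(N^2L^2\log_3(N+2)\big)^{-1/d}\Big).$$
The constants $C_1$ and $C_2$ carry over verbatim from Theorem~\ref{thm:main}, with the same case distinction between $p\in[1,\infty)$ and $p=\infty$.

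Next I would invoke the defining property of the H\"older class $\holder{\lambda}{\alpha}$, namely $|f(\bm{x})-f(\bm{y})|\le \lambda\|\bm{x}-\bm{y}\|_2^\alpha$ for all $\bm{x},\bm{y}\in [0,1]^d$, which yields the pointwise domination $\omega_f(r)\le \lambda r^\alpha$ for every $r\ge 0$. Substituting $r=(N^2L^2\log_3(N+2))^{-1/d}$ and combining with the display above gives
$$\|f-\phi\|_{L^p([0,1]^d)}\le 131\lambda\sqrt{d}\,\big(N^2L^2\log_3(N+2)\big)^{-\alpha/d},$$
which is exactly the asserted inequality.

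There is essentially no obstacle to surmount here: the argument is a one-line substitution, and all of the genuine work — the constructive network design, the bit-extraction technique, and the VC-dimension lower bound matching the logarithmic factor — lives inside the proof of Theorem~\ref{thm:main} itself. The purpose of recording Corollary~\ref{coro:main} separately is simply to exhibit the clean, explicit dependence of the rate on the H\"older parameters $\lambda$ and $\alpha$, and to make transparent the comparison with existing near-optimal rates that lack the $\log_3(N+2)$ improvement in the denominator.
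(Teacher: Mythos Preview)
Your proposal is correct and matches the paper's approach exactly: the paper states Corollary~\ref{coro:main} as an ``immediate corollary of Theorem~\ref{thm:main}'' with no further argument, and your one-line substitution $\omega_f(r)\le \lambda r^\alpha$ is precisely the intended step.
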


\vspace{8pt}
To better illustrate the importance of our theory, we summarize our key contributions as follows.
\begin{enumerate}[(1)]
	\item Upper bound: We provide a quantitative and non-asymptotic approximation rate $131\sqrt{d}\,\omega_f\Big(\big(N^2L^2 \log_3(N+2)\big)^{-1/d}\Big)$ in terms of width $\calO(N)$ and depth $\calO(L)$ for any $f\in  C([0,1]^d)$ in Theorem~\ref{thm:main}. 
	\begin{enumerate}[(\theenumi.1)]
		\item  This approximation error analysis can be extended to $f\in C(E)$ for any $E\subseteq [-R,R]^d$ with $R>0$ as we shall see later in Theorem~\ref{thm:main:E}.
		\item In the case of one-dimensional Lipschitz continuous functions on $[0,1]$ with a Lipschitz constant $\lambda>0$, the approximation rate in Theorem~\ref{thm:main} becomes  $\mathcal{O}(\tfrac{\lambda}{W\ln W})$
		for ReLU networks with $31$ hidden layers and $\mathcal{O}(W)$ parameters via setting $L=1$ and $W=\calO(N^2)$ therein. To the best of our knowledge, the approximation rate $\mathcal{O}(\tfrac{\lambda}{W\ln W})$ is better than existing known results using fixed-depth ReLU networks to approximate Lipschitz continuous functions on $[0,1]$.
	\end{enumerate}
	
	\item Lower bound: Through the VC-dimension bounds of ReLU networks given in \cite{pmlr-v65-harvey17a}, we show, in Section~\ref{sec:optimality}, that the approximation rate $131\lambda \sqrt{d} \big(N^2L^2 \log_3(N+2)\big)^{-\alpha/d}$ in terms of width $\calO(N)$ and depth $\calO(L)$ for $\holder{\lambda}{\alpha}$ is optimal as follows. 
		\begin{enumerate}[(\theenumi.1)]
		\item When the width is fixed, both the approximation upper and lower bounds take the form of $CL^{-2\alpha/d}$ for a positive constant $C$.
		\item When the depth is fixed, both the approximation upper and lower bounds take the form of $C (N^2\ln N)^{-\alpha/d}$ for a positive constant $C$.
	\end{enumerate}
\end{enumerate}

\begin{figure}[!htp]        
     \centering
      \includegraphics[width=0.78\textwidth]{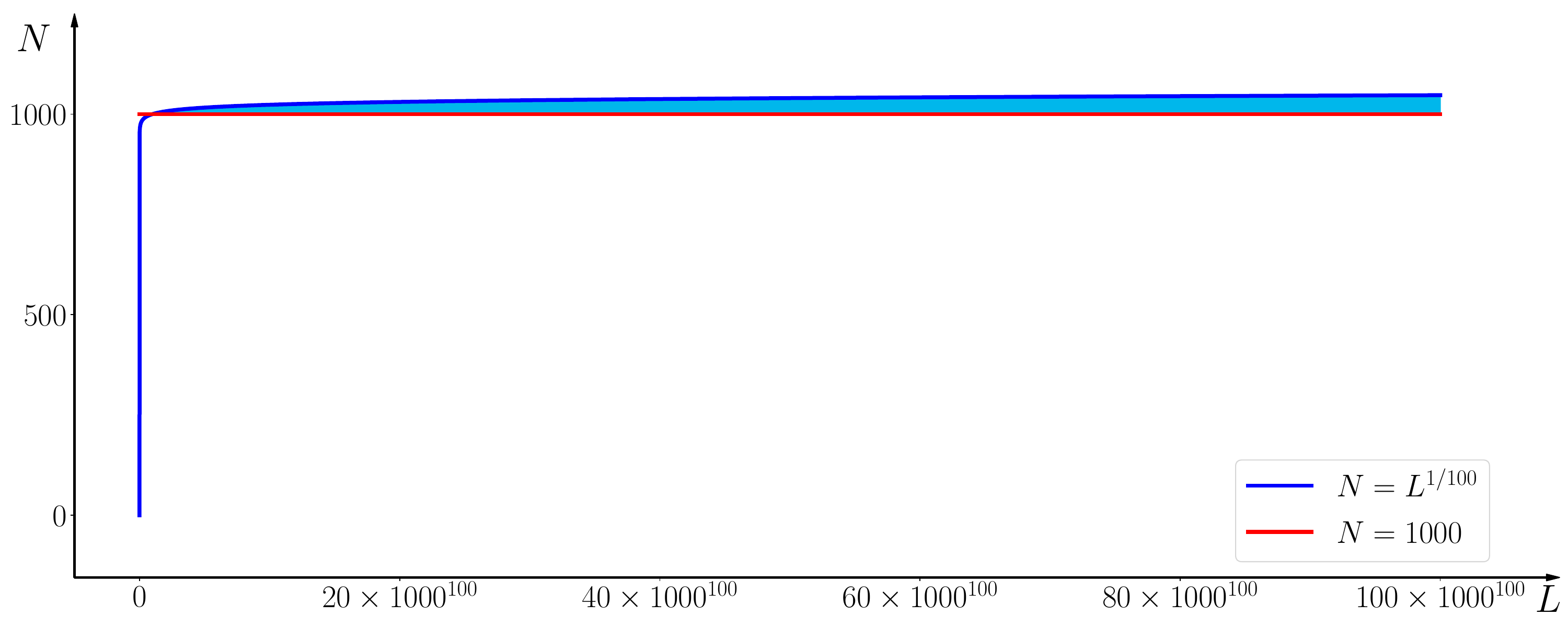}
     \caption{Our rate is optimal in terms of width $\calO(N)$ and depth $\calO(L)$  simultaneously except for the region marked in cyan characterized by $\{(N,L)\in \N^2: C_1\le N\le L^{C_2}\}$, where $C_i=C_i(\alpha,d)$ for $i=1,2$ are two positive constants. This figure is an example for  $C_1=1000$ and $C_2=1/100$. }
     \label{fig:opt:NLplane}
\end{figure}

\black{We would like to point out that if  $N$ and $L$ vary simultaneously,
the rate is optimal in the $N$-$L$ plane except for a small region as shown in Figure~\ref{fig:opt:NLplane}. See Section~\ref{sec:optimality} for a detailed discussion. The earlier result in \cite{shijun2} provides a {nearly} optimal approximation error that has a gap (a logarithmic term) between the lower and upper bounds.  It is technically challenging to match  the upper bound  with the  lower bound. Compared to the nearly optimal rate $19\lambda \sqrt{d}N^{-2\alpha/d}L^{-2\alpha/d}$ for H\"older continuous functions in $\holder{\lambda}{\alpha}$ in \cite{shijun2}, this paper achieves the optimal rate $131\lambda \sqrt{d} \big(N^2L^2 \log_3(N+2)\big)^{-\alpha/d}$ using  more technical and sophisticated construction. For example, a novel bit extraction technique different to that in \cite{Bartlett98almostlinear} is proposed, and new  ReLU networks are constructed to approximate step functions more efficiently than those in \cite{shijun2}. The optimal result obtained in this paper could also be extended to other functions spaces, leading to better understanding of deep network approximation. } 

We have obtained the optimal approximation rate for (H\"older) continuous functions approximated by ReLU networks. There are two possible directions to improve the approximation rate or reduce the effect of the curse of dimensionality. The first one is to consider proper target function spaces, e.g., Barron
spaces \cite{barron1993,Weinan2019,e2020representation,siegel2021optimal},  band-limited functions \cite{doi:10.1002/mma.5575,bandlimit},  smooth functions \cite{2019arXiv190609477Y,shijun3}, and analytic functions \cite{DBLP:journals/corr/abs-1807-00297}.
The other direction is to consider neural networks with other activation functions.
For example, the results of  \cite{2019arXiv190609477Y} imply that $(\sin,\tn{ReLU})$-activated networks with $W$ parameters  can achieve an asymptotic approximation error $\calO(2^{-c_d\sqrt{W}})$ for Lipschitz continuous functions defined on $[0,1]^d$, where $c_d$ is an unknown constant depending on $d$.
 Floor-ReLU networks with width $\calO(N)$ and depth $\calO(L)$ are constructed in \cite{shijun4} to admit an approximation rate $\omega_f(\sqrt{d}N^{-\sqrt{L}})+2\omega_f(\sqrt{d})N^{-\sqrt{L}}$ for any continuous function $f\in C([0,1]^d)$.
It is shown in \cite{shijun5} that three-hidden-layer networks with $\calO(W)$ parameters using the floor function ($\lfloor x \rfloor)$, the exponential function ($2^x$), and the step function ($\one_{x\ge 0}$) as activation functions can approximate Lipschitz functions defined on $[0,1]^d$ with an exponentially
small error $\calO(\sqrt{d}2^{-W})$. 
By the use of more sophisticated activation functions instead of those used in \cite{shijun4,shijun5,2019arXiv190609477Y}, a recent paper \cite{Yarotsky2021ElementarySA} shows that 
there exists a network of size depending on $d$ implicitly, achieving an arbitrary approximation error for any continuous function in $C([0,1]^d)$.
A key ingredient  of the approaches mentioned above is to use  more than one  activation functions to design neural  network architectures.  

 The  error analysis of deep learning  is to estimate  approximation, generalization,  and  optimization errors. Here, we give a brief discussion, the interested reader can find more details in \cite{shijun3,shijun4}.
Let $\phi(\bm{x};\bm{\theta})$ denote a function computed by  a  network parameterized with $\bm{\theta}$. 
Given a target function $f$, the final goal is to find the expected risk minimizer
\begin{equation*}
	\bm{\theta}_{\mathcal{D}}\coloneqq \argmin_{\bm{\theta}} R_{\mathcal{D}}(\bm{\theta}),\quad \tn{where}\   R_{\mathcal{D}}(\bm{\theta})\coloneqq \mathbb{E}_{\bm{x}\sim U(\mathcal{X})} \left[\ell( \phi(\bm{x};\bm{\theta}),f(\bm{x}))\right],
\end{equation*}
with a loss function $\ell(\cdot,\cdot)$ and an unknown data {distribution} $U(\mathcal{X})$.

In practice, for given {samples} $\{( \bm{x}_i,f(\bm{x}_i))\}_{i=1}^n$,  the goal of supervised learning
is to identify  the empirical risk minimizer
\begin{equation*}\label{eqn:emloss}
	\bm{\theta}_{\mathcal{S}}\coloneqq\argmin_{\bm{\theta}}R_{\mathcal{S}}(\bm{\theta}),\quad \tn{where}\ R_{\mathcal{S}}(\bm{\theta}):=
	\frac{1}{n}\sum_{i=1}^n \ell\big( \phi(\bm{x}_i;\bm{\theta}),f(\bm{x}_i)\big).
\end{equation*}
In fact, one could only get a numerical minimizer $\bm{\theta}_{\mathcal{N}}$ via  a numerical optimization method. The discrepancy between the target function $f$ and the learned function $\phi(\bm{x};\bm{\theta}_{\mathcal{N}})$ is measured by $R_{\mathcal{D}}(\bm{\theta}_{\mathcal{N}}) $, which is bounded by
\begin{equation*}	
		\mathResize[0.86]{
		R_{\mathcal{D}}(\bm{\theta}_{\mathcal{N}})    
		\le \underbrace{R_{\mathcal{D}}(\bm{\theta}_{\mathcal{D}})}_{\tn{\color{blue}Approximation error}} \ +\  \underbrace{[R_{\mathcal{S}}(\bm{\theta}_{\mathcal{N}})-R_{\mathcal{S}}(\bm{\theta}_{\mathcal{S}})]}_{\tn{\color{blue}Optimization error}}\  + \  \underbrace{[R_{\mathcal{D}}(\bm{\theta}_{\mathcal{N}})-R_{\mathcal{S}}(\bm{\theta}_{\mathcal{N}})]
			+[R_{\mathcal{S}}(\bm{\theta}_{\mathcal{D}})-R_{\mathcal{D}}(\bm{\theta}_{\mathcal{D}})]}_{\tn{\color{blue}Generalization error }}.  
			}
\end{equation*}
 This paper deals with the approximation error of ReLU networks for continuous functions and  gives an upper bound of $R_{\mathcal{D}}(\bm{\theta}_{\mathcal{D}})$ which is optimal up to a constant.  
 Note that the approximation error analysis given here is  independent of  data samples and  deep learning algorithms.   However,  the analysis of optimization and generalization errors  do depend on data samples, deep learning algorithms, models, etc. 
 \black{ For example, refer to \cite{neyshabur2018the,Weinan2019,Weinan2019APE,2020arXiv200913500E,NIPS2016_6112,DBLP:journals/corr/NguyenH17,opt,xu2020,JMLR:v20:17-526} for a further understanding of the generalization and optimization errors.
}

The rest of this paper is organized as follows. In Section~\ref{sec:analysis},  we prove Theorem~\ref{thm:main} by assuming Theorem~\ref{thm:mainGap} is true, show the optimality of Theorem~\ref{thm:main}, and extend our analysis to continuous functions defined on any bounded set. Next, Theorem~\ref{thm:mainGap} is proved in Section~\ref{sec:proof:mainGap} based on Propositions~\ref{prop:stepFunc} and \ref{prop:pointFitting}, the proofs of which can be found in Section~\ref{sec:proof:props}.
 Finally, Section~\ref{sec:conclusion} concludes this paper with a short discussion.

\section{Theoretical analysis}
\label{sec:analysis}
In this section, we first prove Theorem~\ref{thm:main} and discuss its optimality. Next, we extend our analysis to general continuous functions defined on any bounded set. Notations throughout this paper are summarized in Section~\ref{sec:notation}. 

\subsection{Notations}
\label{sec:notation}

    Let us summarize all basic notations used in this paper as follows.
\begin{itemize}
   \item Let $\R$, $\Q$, and $\Z$ denote the set of real numbers, rational numbers, and integers, respectively.
    
    \item Let $\N$ and $\N^+$ denote the set of natural numbers and positive natural numbers, respectively.  That is,
    $\N^+=\{1,2,3,\cdots\}$ and $\N=\N^+\bigcup\{0\}$.

     \item Matrices are denoted by bold uppercase letters. For instance,  $\bm{A}\in\mathbb{R}^{m\times n}$ is a real matrix of size $m\times n$, and $\bm{A}^T$ denotes the transpose of $\bm{A}$.  
     Vectors are denoted as bold lowercase letters. For example, $\bm{v}=[v_1,\cdots,v_d]^T=\left[\def\arraystretch{0.748}\begin{array}{c}
          v_1  \\
          \vdots \\
          v_d
     \end{array}\right]\in \R^d$ is a column vector 
     with $\bm{v}(i)=v_i$ being the $i$-th element. Besides, ``['' and ``]''  are used to  partition matrices (vectors) into blocks, e.g., $\bmA=\left[\begin{smallmatrix}\bmA_{11}&\bmA_{12}\\ \bmA_{21}&\bmA_{22}\end{smallmatrix}\right]$.
     
      \item For any $p\in [1,\infty)$, the $p$-norm (or $\ell^p$-norm) of a vector $\bmx=[x_1,x_2,\cdots,x_d]^T\in\R^d$ is defined by 
    \begin{equation*}
        \|\bmx\|_p\coloneqq \big(|x_1|^p+|x_2|^p+\cdots+|x_d|^p\big)^{1/p}.
    \end{equation*}

     
     \item For any $x\in \R$, let $\lfloor x\rfloor:=\max \{n: n\le x,\ n\in \Z\}$ and $\lceil x\rceil:=\min \{n: n\ge x,\ n\in \Z\}$.
     
     \item Assume $\bm{n}\in \N^d$, then $f(\bm{n})=\mathcal{O}(g(\bm{n}))$ means that there exists positive $C$ independent of $\bm{n}$, $f$, and $g$ such that $ f(\bm{n})\le Cg(\bm{n})$ when all entries of $\bm{n}$ go to $+\infty$.

     \item For any $\theta\in[0,1)$, suppose its binary representation is $\theta=\sum_{\ell=1}^{\infty}\theta_\ell2^{-\ell}$ with $\theta_\ell\in \{0,1\}$, we introduce a special notation $\bin 0.\theta_1\theta_2\cdots \theta_L$ to denote the $L$-term binary representation of $\theta$, i.e., $\bin 0.\theta_1\theta_2\cdots \theta_L\coloneqq\sum_{\ell=1}^{L}\theta_\ell2^{-\ell}$. 

     \item  Let $\mu(\cdot)$ denote the Lebesgue measure. 
     
     \item Let $\one_{S}$ be the characteristic function on a set $S$, i.e., $\one_{S}$ is equal to $1$ on $S$ and $0$ outside $S$.
     
     \item Let $|S|$ denote the size of a set $S$, i.e., the number of all elements in $S$.
     
     \item The set difference of two sets $A$ and $B$ is denoted by $A\backslash B:=\{x:x\in A,\ x\notin B\}$. 
     
     \item 
     Given any $K\in \N^+$ and $\delta\in (0, \tfrac{1}{K})$, define a trifling region  $\Omega([0,1]^d,K,\delta)$ of $[0,1]^d$ as 
     \begin{equation}
     \label{eq:triflingRegionDef}
     \Omega([0,1]^d,K,\delta)\coloneqq\bigcup_{j=1}^{d} \bigg\{\bmx=[x_1,x_2,\cdots,x_d]^T\black{\in [0,1]^d}: x_j\in \bigcup_{k=1}^{K-1}(\tfrac{k}{K}-\delta,\tfrac{k}{K})\bigg\}.
     \end{equation}
     In particular, $\Omega([0,1]^d,K,\delta)=\emptyset$ if $K=1$. See Figure~\ref{fig:region} for two examples of trifling regions.
     
     \begin{figure}[!htp]        
     	\centering
     \begin{minipage}{0.805\textwidth}
     	\centering
     	\begin{subfigure}[b]{0.33\textwidth}
     		\centering            
     		\includegraphics[width=0.999\textwidth]{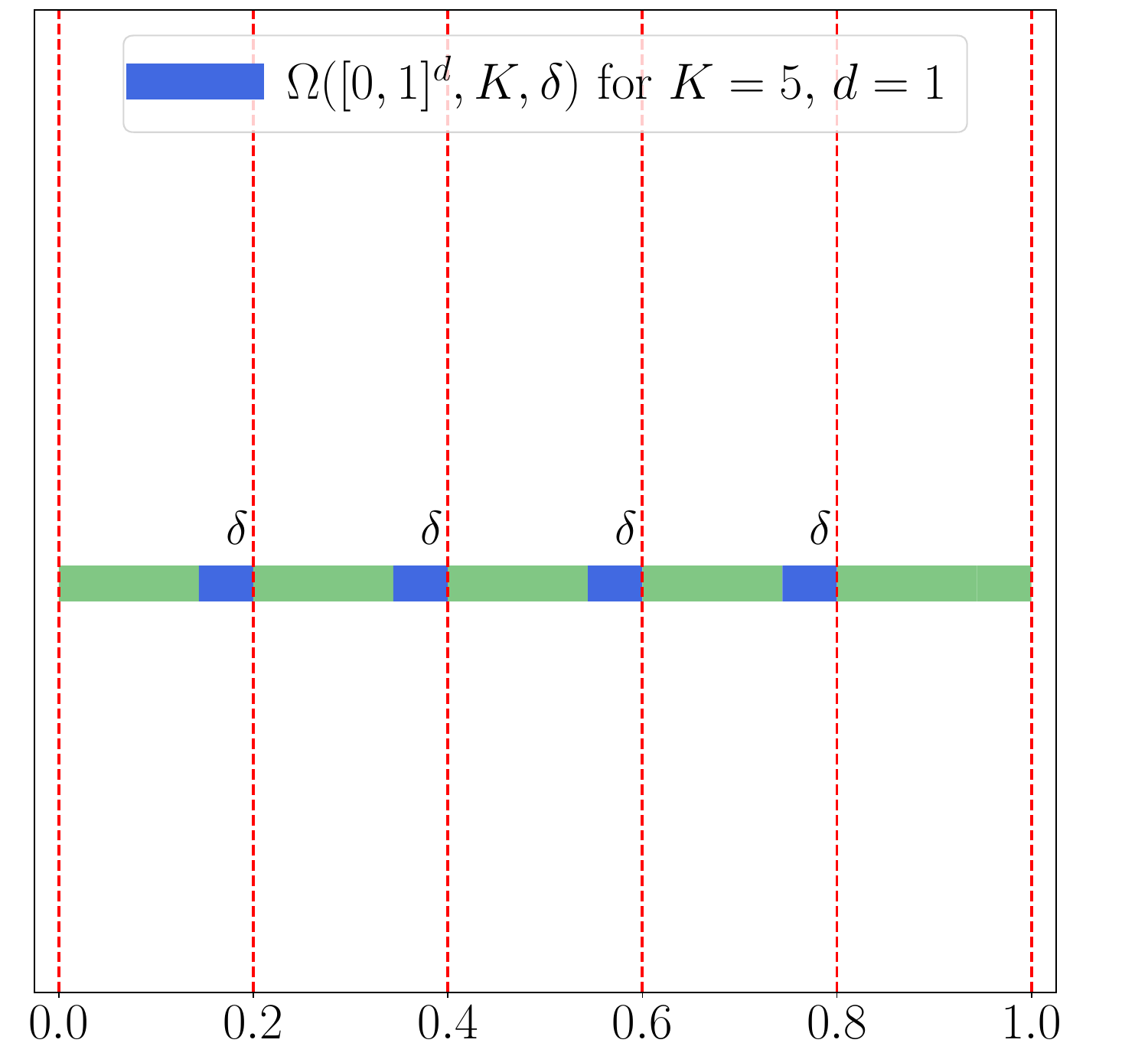}
     		\subcaption{}
     	\end{subfigure}
     \begin{minipage}{0.07\textwidth}
     	\,
     \end{minipage}
     	\begin{subfigure}[b]{0.33\textwidth}
     		\centering            \includegraphics[width=0.999\textwidth]{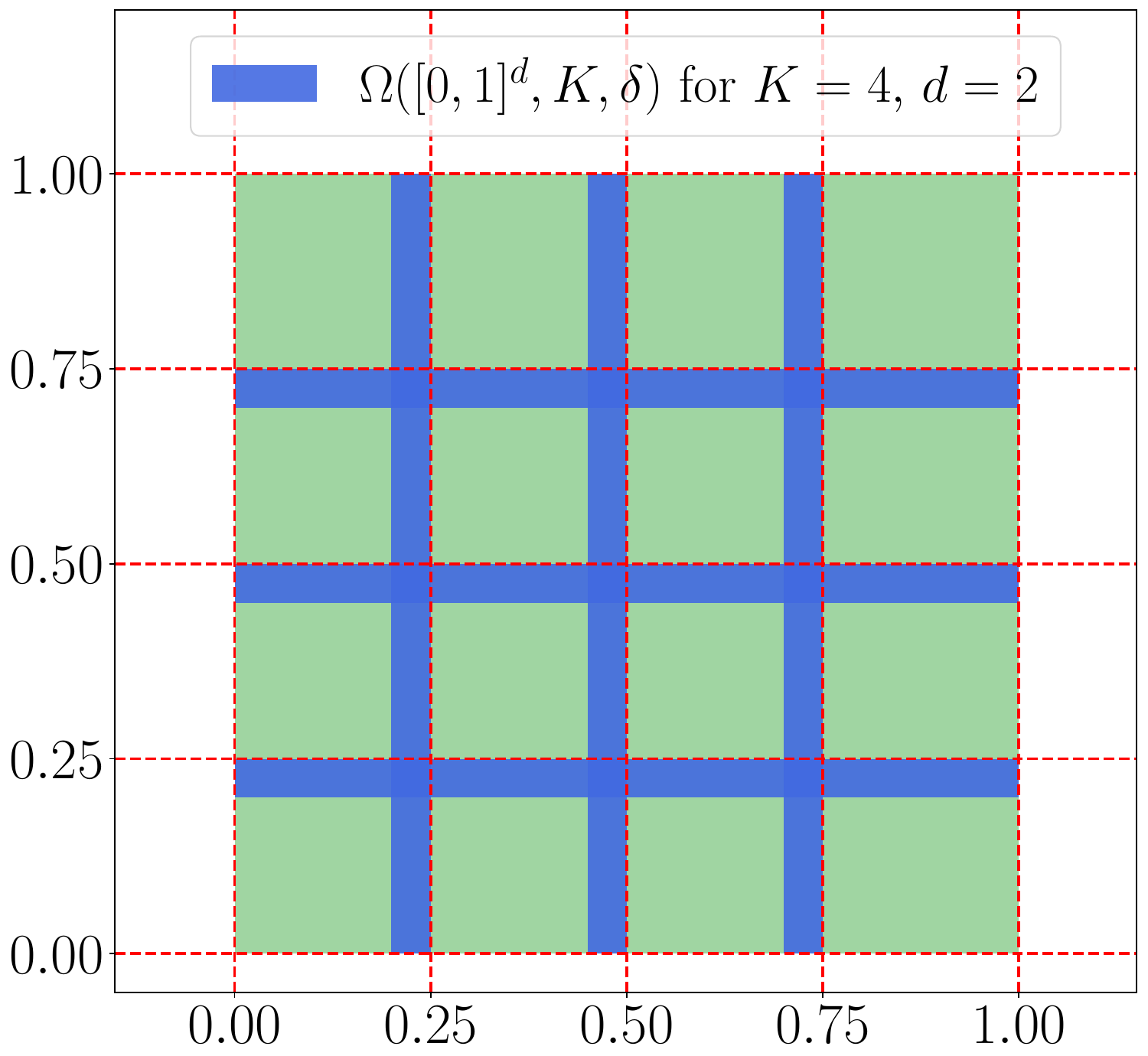}
     		\subcaption{}
     	\end{subfigure}
 	\end{minipage}
     	\caption{Two examples of trifling regions. (a)  $K=5,d=1$. (b) $K=4,d=2$.}
     	\label{fig:region}
     \end{figure}
 
     \item Let $\holder{\lambda}{\alpha}$ denote the space of  H{\"o}lder continuous functions on $[0,1]^d$ of order $\alpha\in (0,1]$ with  a H\"older constant $\lambda>0$. 
     
     \item For a continuous piecewise linear function $f(x)$, the $x$ values where the slope changes are typically called \textbf{breakpoints}. 
     
     \item Let $\cpl(\R,n)$ denote the space that consists of all continuous piecewise linear functions with at most $n$ breakpoints on $\R$.

     \item Let $\sigma:\R\to \R$ denote the rectified linear unit (ReLU), i.e. $\sigma(x)=\max\{0,x\}$. With a slight abuse of notation, we define $\sigma:\R^d\to \R^d$ as $\sigma(\bmx)=\left[\begin{array}{c}
     	\max\{0,x_1\}  \\
     	\vdots \\
     	\max\{0,x_d\}
     \end{array}\right]$ for any $\bmx=[x_1,\cdots,x_d]^T\in \R^d$.

     \item We will use $\NNF$ to denote a function implemented by a ReLU network for short and use Python-type notations to specify a class of functions implemented by ReLU networks with several conditions, e.g., $\NNF(\tn{c}_1;\ \tn{c}_2;\ \cdots;\ \tn{c}_m)$ is a set of functions implemented by  ReLU networks satisfying $m$ conditions given by $\{\tn{c}_i\}_{1\leq i\leq m}$, each of which may specify the number of inputs ($\NNinput$), the number of outputs ($\NNoutput$), 
     the number of hidden layers ($\NNdepth$), the total  number of parameters ($\NNparameter$), and the width in each hidden layer ($\NNwidthvec$), the maximum width of all hidden layers ($\NNwidth$), etc. For example, if $\phi\in \NNF(\NNinput=2\NNspace \NNwidthvec=[100,100]\NNspace\NNoutput=1)$,  then $\phi$ is a function satisfying
     \begin{itemize}
         \item $\phi$ maps from $\R^2$ to $\R$.
         \item $\phi$ can be implemented by a ReLU network with two hidden layers and the number of neurons in each hidden layer is $100$.
     \end{itemize}
 

     \item For any function $\phi\in \NNF(\NNinput=d\NNspace\NNwidthvec=[N_1,N_2,\cdots,N_L]\NNspace\NNoutput=1)$, if we set $N_0=d$ and $N_{L+1}=1$, then the architecture of the network implementing $\phi$ can be briefly described as follows:
    \begin{equation*}
    \begin{aligned}
    \bm{x}=\widetilde{\bm{h}}_0 
    \myto{2.2}^{\bm{W}_0,\ \bm{b}_0}_{\calL_0} \bm{h}_1
    \myto{1.25}^{\sigma} \tilde{\bm{h}}_1 \ \cdots\ \myto{2.7}^{\bm{W}_{L-1},\ \bm{b}_{L-1}}_{\calL_{L-1}} \bm{h}_L
    \myto{1.25}^{\sigma} \tilde{\bm{h}}_L
    \myto{2.2}^{\bm{W}_{L},\ \bm{b}_{L}}_{\calL_L} \bm{h}_{L+1}=\phi(\bm{x}),
    \end{aligned}
    \end{equation*}
    where $\bm{W}_i\in \R^{N_{i+1}\times N_{i}}$ and $\bm{b}_i\in \R^{N_{i+1}}$ are the weight matrix and the bias vector in the $i$-th affine linear transformation $\calL_i$, respectively, i.e., 
    \[\bm{h}_{i+1} =\bm{W}_i\cdot \tilde{\bm{h}}_{i} + \bm{b}_i\eqqcolon \calL_i(\tilde{\bm{h}}_{i}),\quad \tn{for $i=0,1,\cdots,L$,}\]  
    and
    \[
       \tilde{\bm{h}}_i=\sigma(\bm{h}_i),\quad \tn{for $i=1,2,\cdots,L$.}
    \]
    In particular, $\phi$ can be represented in a form of function compositions as follows.
    \begin{equation*}
        \phi =\calL_L\circ\sigma\circ\calL_{L-1}\circ \sigma\circ \ \cdots \  \circ \sigma\circ\calL_1\circ\sigma\circ\calL_0,
    \end{equation*}
    which has been illustrated in Figure~\ref{fig:ReLUeg}.
    \begin{figure}[!htp]        
     	\centering
     		\centering            \includegraphics[width=0.7\textwidth]{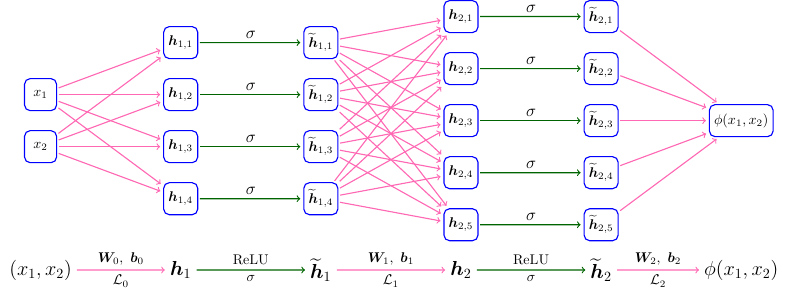}
     	\caption{An example of a ReLU network with width $5$ and depth $2$. }
     	\label{fig:ReLUeg}
     \end{figure}

     \item The expression ``a network with width $N$ and depth $L$'' means
     \begin{itemize}
         \item The maximum width of this network for all \textbf{hidden} layers is no more than $N$.
         \item The number of \textbf{hidden} layers of this network is no more than $L$.
     \end{itemize} 
\end{itemize}

\subsection{Proof of Theorem~\ref{thm:main}}
\label{sec:proof:main}

The key point is to construct piecewise constant functions to approximate continuous functions in the proof. However, it is impossible to construct a piecewise constant function implemented by a ReLU network due to the continuity of ReLU networks. Thus, we introduce the trifling region  $\Omega([0,1]^d,K,\delta)$, defined in Equation~\eqref{eq:triflingRegionDef}, and use ReLU networks to implement piecewise constant functions outside the trifling region.
To prove Theorem~\ref{thm:main}, we first introduce a weaker variant of Theorem~\ref{thm:main}, showing how to construct ReLU networks to pointwisely approximate continuous functions except for the trifling region. 
\begin{theorem}
	\label{thm:mainGap}
	Given  a function $f\in C([0,1]^d)$, for any $N\in \N^+$ and $L\in \N^+$,
	there exists a function $\phi$ implemented by a  ReLU network with width $\max\big\{8d\lfloor N^{1/d}\rfloor+3d,\, 16N+30\big\}$
	and depth $11L+18$
	such that $ \|\phi\|_{L^\infty(\R^d)}\le |f(\bmzero)|+ \omega_f(\sqrt{d})$ and 
	\begin{equation*}
	|f(\bmx)-\phi(\bmx)|\le 130\sqrt{d}\,\omega_f\Big(\big(N^2L^2 \log_3(N+2)\big)^{-1/d}\Big),\quad \tn{for any $\bmx\in [0,1]^d\backslash\Omega([0,1]^d,K,\delta)$},
	\end{equation*}
	where $K=\lfloor N^{1/d}\rfloor^2\lfloor L^{1/d}\rfloor^2 \big\lfloor \lfloor \log_3(N+2)\rfloor^{1/d}\big\rfloor$ and $\delta$ is an arbitrary number in $(0,\tfrac{1}{3K}]$.
\end{theorem}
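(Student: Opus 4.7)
The plan is to build $\phi$ as the composition of two ReLU subnetworks: a \emph{classifier} that identifies which of $K^d$ congruent subcubes contains $\bmx$, and a \emph{fitter} that outputs a precomputed approximation value for that cube. Concretely, partition $[0,1]^d$ into cubes $Q_{\bmk}=\prod_{j=1}^d [k_j/K,(k_j+1)/K]$ indexed by $\bmk=[k_1,\ldots,k_d]^T\in\{0,\ldots,K-1\}^d$, with $K=\lfloor N^{1/d}\rfloor^2\lfloor L^{1/d}\rfloor^2\lfloor\lfloor\log_3(N+2)\rfloor^{1/d}\rfloor$. Fix a corner point $\bmx_{\bmk}\in Q_{\bmk}$ and set $y_{\bmk}=f(\bmx_{\bmk})$. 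The geometric content of the trifling region $\Omega([0,1]^d,K,\delta)$ is precisely that $Q_{\bmk}\setminus\Omega$ is a disjoint family of shrunken cubes separated by thin slabs of width $\delta$, so any continuous map that is constant on each $Q_{\bmk}\setminus\Omega$ can be realized by a ReLU network.

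First I would invoke Proposition~\ref{prop:stepFunc} to obtain a univariate ReLU network $\psi$ implementing a step function that sends $[k/K,(k+1)/K-\delta]$ to $k$ for every $k\in\{0,\ldots,K-1\}$. Applying $\psi$ in parallel to each coordinate of $\bmx$ produces a vector $\bmpsi(\bmx)=[\psi(x_1),\ldots,\psi(x_d)]^T$ that equals $\bmk$ on $Q_{\bmk}\setminus\Omega$. A single affine layer then collapses $\bmpsi(\bmx)$ into the scalar base-$K$ index $i(\bmx)=\sum_{j=1}^{d}\psi(x_j)K^{j-1}\in\{0,\ldots,K^d-1\}$. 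Because $K^d$ is dominated by $N^2L^2\log_3(N+2)$ up to a controlled loss from the three $\lfloor\cdot\rfloor$'s, Proposition~\ref{prop:pointFitting} supplies a ReLU network $\phi_2$ with width $\calO(N)$ and depth $\calO(L)$ that exactly (or $\varepsilon$-accurately) fits the map $i\mapsto y_{\bmk(i)}$. Setting $\phi(\bmx)=\phi_2(i(\bmx))$ makes $\phi$ constant on each $Q_{\bmk}\setminus\Omega$ with value $y_{\bmk}$.

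The error estimate is then immediate: for $\bmx\in Q_{\bmk}\setminus\Omega$,
\begin{equation*}
|f(\bmx)-\phi(\bmx)|\le |f(\bmx)-f(\bmx_{\bmk})|\le \omega_f(\sqrt{d}/K),
\end{equation*}
since $\mathrm{diam}(Q_{\bmk})=\sqrt{d}/K$. Using the standard submultiplicativity $\omega_f(ar)\le(\lceil a\rceil)\omega_f(r)$ of the modulus of continuity together with the lower bound
\[
K\ge \tfrac{1}{c}\bigl(N^2L^2\log_3(N+2)\bigr)^{1/d}
\]
for an explicit absolute constant $c$, the right-hand side is absorbed into the stated $130\sqrt{d}\,\omega_f\bigl((N^2L^2\log_3(N+2))^{-1/d}\bigr)$ bound, with the factor $130\sqrt{d}$ accommodating the diameter $\sqrt{d}$, the three floor losses in $K$, and the small constant blow-up from subadditivity. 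The $L^\infty$-bound $\|\phi\|_\infty\le |f(\bmzero)|+\omega_f(\sqrt{d})$ follows because every output value of $\phi_2$ is one of the $y_{\bmk}$'s, each of which satisfies $|y_{\bmk}|\le|f(\bmzero)|+\omega_f(\|\bmx_{\bmk}\|_2)\le|f(\bmzero)|+\omega_f(\sqrt{d})$; one clips $\phi_2$ by a final $\sigma$-pair to enforce this bound without increasing depth.

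The main obstacle is the careful bookkeeping of widths and depths so that the composition fits inside $\max\{8d\lfloor N^{1/d}\rfloor+3d,\,16N+30\}$ width and $11L+18$ depth, and this is exactly what forces the two separate terms in the width formula: the coordinate-wise step-function block contributes width $8d\lfloor N^{1/d}\rfloor+3d$ (dominated by Proposition~\ref{prop:stepFunc} applied $d$ times in parallel), whereas the point-fitting block contributes $16N+30$ (dominated by Proposition~\ref{prop:pointFitting}). The two blocks share the depth budget $11L+18$, with the $+18$ covering affine gluing and the final clipping. Everything hinges on the quantitative constants delivered by Propositions~\ref{prop:stepFunc} and~\ref{prop:pointFitting}; once those are in hand, the construction above is essentially a matter of concatenating networks, reindexing, and arithmetic with $\omega_f$.
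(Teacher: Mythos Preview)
Your overall architecture matches the paper's: classify via coordinate-wise step functions, collapse to a one-dimensional index, then fit with Proposition~\ref{prop:pointFitting}. But there is a genuine gap in how you invoke Proposition~\ref{prop:pointFitting}. That proposition does not fit arbitrary samples: it requires $y_j\ge 0$ and, crucially, $|y_j-y_{j-1}|\le\varepsilon$ for \emph{consecutive} indices $j$. Under your base-$K$ enumeration $i=\sum_j k_j K^{j-1}$, the passage from $i=K-1$ (cube $(K-1,0,\ldots,0)$) to $i=K$ (cube $(0,1,0,\ldots,0)$) jumps across nearly the full unit cube, so $|y_{K}-y_{K-1}|$ is only bounded by $\omega_f(\sqrt{d})$, not by $\omega_f(\sqrt{d}/K)$. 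Taking $\varepsilon$ large enough to cover these wrap-arounds destroys the approximation rate; taking it small violates the hypothesis. The same problem recurs at every carry in the base-$K$ representation.

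The paper's proof confronts exactly this obstacle in its Step~3.2. It uses a non-standard linear index $\psi_1(\bmbeta)=\tfrac{\beta_d}{2K^d}+\sum_{i=1}^{d-1}\tfrac{\beta_i}{K^i}$ that sends $\{0,\ldots,K-1\}^d$ into a set $\calA_1\subset\{j/(2K^d):j=0,\ldots,2K^d\}$ occupying only half of the grid, then inserts a buffer set $\calA_2$ of $K$ extra points in each gap where a wrap-around occurs. A piecewise linear $g$ is defined on the full grid $\calA_1\cup\calA_2\cup\{1\}$: on $\calA_1$ it matches $\tilde f(\bmx_\bmbeta)$, and on $\calA_2$ it interpolates linearly so that each one-step increment is at most $\max\{\omega_f(1/K),\,\omega_f(\sqrt{d})/K\}\le\omega_f(\sqrt{d}/K)$. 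Only after this padding does the sample sequence satisfy the consecutive-difference hypothesis with $\varepsilon=\omega_f(\sqrt{d}/K)$, and this is why the paper applies Proposition~\ref{prop:pointFitting} with $J\le 2K^d\le N^2\lceil\sqrt{2}L\rceil^2\lfloor\log_3(N+2)\rfloor$ (note the extra factor $2$). A secondary point you also skip: Proposition~\ref{prop:pointFitting} requires $y_j\ge 0$, which is why the paper first shifts to $\tilde f=f-f(\bmzero)+\omega_f(\sqrt{d})\ge 0$ and undoes the shift at the end; this is also what makes the bound $\|\phi\|_{L^\infty(\R^d)}\le|f(\bmzero)|+\omega_f(\sqrt{d})$ fall out of part~(ii) of the proposition rather than from an ad hoc clipping.
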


With Theorem~\ref{thm:mainGap} that will be proved in Section~\ref{sec:proof:mainGap}, we can easily prove Theorem~\ref{thm:main} for the case $p\in[1,\infty)$. To attain the rate in $L^\infty$-norm, we need to control the approximation error in the trifling region. 
To this end, we introduce a theorem to deal with the approximation inside the trifling region $\Omega([0,1]^d,K,\delta)$.

\begin{theorem}[Theorem $3.7$ of \cite{shijun:thesis} or Theorem $2.1$ of \cite{shijun3}]
	\label{thm:Gap}
	Given any $\varepsilon>0$, $N,L,K\in \N^+$, and $\delta\in (0, \tfrac{1}{3K}]$,
	assume $f$ is a continuous function in $C([0,1]^d)$ and $\tildephi$ can be implemented by a ReLU network with width $N$ and depth $L$. If 
	\begin{equation*}
	|f(\bmx)-\tildephi(\bmx)|\le \varepsilon,\quad \tn{for any $\bmx\in [0,1]^d\backslash \Omega([0,1]^d,K,\delta)$,}
	\end{equation*}
	then there exists a function $\phi$  implemented by a new ReLU network with width $3^d(N+4)$ and depth $L+2d$ such that 
	\begin{equation*}
	|f(\bmx)-\phi(\bmx)|\le \varepsilon+d\cdot\omega_f(\delta),\quad \tn{for any $\bmx\in [0,1]^d$.}
	\end{equation*}
\end{theorem}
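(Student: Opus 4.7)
The plan is to absorb the approximation gap on the trifling region $\Omega$ by an iterated ``median of shifts'' construction, peeled off one coordinate direction at a time. The basic gadget is the ternary median
\begin{equation*}
\middleValue(a,b,c) \;=\; a+b+c \;-\; \max\{a,b,c\}\;-\; \min\{a,b,c\},
\end{equation*}
which is implementable by a fixed-size ReLU subnetwork via $\max\{u,v\}=u+\sigma(v-u)$ and $\min\{u,v\}=u-\sigma(u-v)$; this costs $O(1)$ extra neurons and exactly $2$ additional hidden layers.

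Setting $\phi_0 := \tildephi$, for $j=1,\dots,d$ I define
\begin{equation*}
\phi_j(\bmx) \;=\; \middleValue\Big(\phi_{j-1}(\bmx + s_{-1}\bme_j),\; \phi_{j-1}(\bmx + s_0\bme_j),\; \phi_{j-1}(\bmx + s_{+1}\bme_j)\Big),
\end{equation*}
where the three shifts are $\{-\delta,0,+\delta\}$ for interior $x_j$, and the one-sided triples $\{0,\delta,2\delta\}$ or $\{-2\delta,-\delta,0\}$ in the boundary strips $x_j<\delta$ and $x_j>1-\delta$, so that every shifted input remains in $[0,1]^d$; this selection is realised by prepending a simple ReLU preprocessing layer of constant width that uses $\sigma(\delta-x_j)$ and $\sigma(x_j-(1-\delta))$ as position gates. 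The inductive claim is that, for every $j\in\{0,1,\dots,d\}$ and every $\bmx\in [0,1]^d$ whose coordinates $x_{j+1},\dots,x_d$ avoid the bad intervals $\bigcup_{k=1}^{K-1}(\tfrac{k}{K}-\delta,\tfrac{k}{K})$,
\begin{equation*}
|\phi_j(\bmx) - f(\bmx)| \;\le\; \varepsilon + j\,\omega_f(\delta).
\end{equation*}
The base case $j=0$ is the hypothesis. For the inductive step, the assumption $\delta\le \tfrac{1}{3K}$ ensures that at least two of the three shifted points $\bmy = \bmx + s_\star\bme_j$ satisfy $y_j\notin\bigcup_{k=1}^{K-1}(\tfrac{k}{K}-\delta,\tfrac{k}{K})$ (directly, since a bad interval of length $\delta$ can be escaped on at least two of the three shifts as $2\delta<\tfrac{1}{K}$). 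Such $\bmy$ automatically inherits the good coordinates $j+1,\dots,d$ of $\bmx$ and satisfies $\|\bmy-\bmx\|_2\le\delta$. The induction hypothesis then gives $|\phi_{j-1}(\bmy)-f(\bmy)|\le\varepsilon+(j-1)\omega_f(\delta)$, and combined with $|f(\bmy)-f(\bmx)|\le\omega_f(\delta)$ this shows that two of the three inputs to $\middleValue$ lie within $\varepsilon + j\omega_f(\delta)$ of $f(\bmx)$; the median of three values, two of which are within some tolerance of a target, is also within that tolerance. Setting $j=d$ yields the desired bound for $\phi := \phi_d$ on all of $[0,1]^d$.

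For the network budget, each iteration glues three parallel copies of the current network (inputs differing only by a shift encoded in the first affine map) onto a constant-size $\middleValue$ gadget, so the width multiplies by $3$ and the depth grows by $2$; after $d$ iterations this gives width at most $3^d(N+4)$ and depth at most $L+2d$, matching the statement. The most delicate point I foresee is the honest handling of the shifts near $\partial[0,1]^d$: for interior $\bmx$ the three-shift argument is immediate, but when $x_j\in[0,\delta)\cup(1-\delta,1]$ a naive shift of $\pm\delta$ can leave the cube, where the hypothesis on $\tildephi$ gives no control at all. The one-sided shift scheme above fixes this, yet to keep everything inside the advertised budget I will need to verify that the position-dependent branch indeed costs only $O(1)$ neurons per coordinate, and that after the adjustment at least two of the three shifted points still lie in $[0,1]^d\setminus\Omega$ so that the induction can proceed unchanged.
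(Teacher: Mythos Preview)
The paper does not actually prove this theorem here; it quotes it verbatim from the references \cite{shijun:thesis,shijun3} and uses it as a black box. Your median-of-shifts construction, peeled off one coordinate at a time, \emph{is} the argument in those references, so the overall strategy is the right one and matches the original proof.

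There is, however, one genuine wrinkle in your boundary handling. A ReLU network cannot make a discrete choice between three different shift triples $\{-\delta,0,\delta\}$, $\{0,\delta,2\delta\}$, $\{-2\delta,-\delta,0\}$ depending on the region containing $x_j$; the ``position gates'' $\sigma(\delta-x_j)$ and $\sigma(x_j-(1-\delta))$ are continuous, not $\{0,1\}$-valued, and you cannot multiply by them. What the cited proof does instead is replace the three shifted $j$-th coordinates by the clamped values
\[
y_j^{-}=\sigma(x_j-\delta),\qquad y_j^{0}=x_j,\qquad y_j^{+}=1-\sigma(1-x_j-\delta),
\]
which are honest CPwL functions of $x_j$, always lie in $[0,1]$, and always satisfy $|y_j^{\pm}-x_j|\le\delta$ (so your $\omega_f(\delta)$ step is preserved without the $2\delta$ overshoot). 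One then checks directly---using only $\delta\le\tfrac{1}{3K}$---that at most one of $y_j^{-},y_j^{0},y_j^{+}$ can fall in a bad interval $(\tfrac{k}{K}-\delta,\tfrac{k}{K})$, regardless of whether $x_j$ is interior or near the boundary. With this single modification your induction and your width/depth bookkeeping go through exactly as written.
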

Now we are ready to prove Theorem~\ref{thm:main} by assuming Theorem~\ref{thm:mainGap} is true, which  will be proved later in Section~\ref{sec:proof:mainGap}.
\begin{proof}[Proof of Theorem~\ref{thm:main}]		
	We may assume $f$ is not a constant function since it is  a trivial case. Then $\omega_f(r)>0$ for any $r>0$. 
	Let us first consider the case $p\in [1,\infty)$. Set 
	$K=\lfloor N^{1/d}\rfloor^2\lfloor L^{1/d}\rfloor^2\big\lfloor \lfloor \log_3(N+2)\rfloor^{1/d}\big\rfloor$ and choose a small $\delta\in (0,\tfrac{1}{3K}]$ such that
	\begin{equation*}
	\begin{split}
	Kd\delta\big( \black{2}|f(\bmzero)|+\black{2}\omega_f(\sqrt{d})\big)^p
	&=\lfloor N^{1/d}\rfloor^2\lfloor L^{1/d}\rfloor^2 \big\lfloor \lfloor \log_3(N+2)\rfloor^{1/d}\big\rfloor 
	d\delta \big( 2|f(\bmzero)|+2\omega_f(\sqrt{d})\big)^p\\
	&\le \bigg(\omega_f\Big(\big(N^2L^2 \log_3(N+2)\big)^{-1/d}\Big)\bigg)^p. 
	\end{split}
	\end{equation*}
	By Theorem~\ref{thm:mainGap}, there exists a function $\phi$ implemented by a ReLU network with width \[\max\big\{8d\lfloor N^{1/d}\rfloor+3d,\, 16N+30\big\}\le 16\max \big\{d\lfloor N^{1/d}\rfloor,\, N+2\big\}\]
	and depth $11L+18$ such that \black{$\|\phi\|_{L^\infty(\R^d)}\le |f(\bmzero)|+\omega_f(\sqrt{d})$} and 
	\begin{equation*}
	|f(\bmx)-\phi(\bmx)|\le 130\sqrt{d}\,\omega_f\Big(\big(N^2L^2 \log_3(N+2)\big)^{-1/d}\Big),\quad \tn{for any $\bmx\in [0,1]^d\backslash\Omega([0,1]^d,K,\delta)$}.
	\end{equation*}
	It follows from $\mu(\Omega([0,1]^d,K,\delta))\le Kd\delta$ and \black{$\|f\|_{L^\infty([0,1]^d)}\le |f(\bmzero)|+\omega_f(\sqrt{d})$} that
	\begin{equation*}
	\begin{split}
	\|f-\phi\|_{L^p([0,1]^d)}^p
	&=\int_{\Omega([0,1]^d,K,\delta)}|f(\bmx)-\phi(\bmx)|^p\tn{d}\bmx+\int_{[0,1]^d\backslash\Omega([0,1]^d,K,\delta)}|f(\bmx)-\phi(\bmx)|^p\tn{d}\bmx\\
	&\le Kd\delta\big( \black{2}|f(\bmzero)|+\black{2}\omega_f(\sqrt{d})\big)^p+ \bigg(130\sqrt{d}\,\omega_f\Big(\big(N^2L^2 \log_3(N+2)\big)^{-1/d}\Big)\bigg)^p\\
	&\le \bigg(\omega_f\Big(\big(N^2L^2 \log_3(N+2)\big)^{-1/d}\Big)\bigg)^p
			+ \bigg(130\sqrt{d}\,\omega_f\Big(\big(N^2L^2 \log_3(N+2)\big)^{-1/d}\Big)\bigg)^p\\
	&\le \bigg(131\sqrt{d}\,\omega_f\Big(\big(N^2L^2 \log_3(N+2)\big)^{-1/d}\Big)\bigg)^p.
	\end{split}
	\end{equation*}
	Hence, $\|f-\phi\|_{L^p([0,1]^d)}\le 131\sqrt{d}\,\omega_f\Big(\big(N^2L^2 \log_3(N+2)\big)^{-1/d}\Big)$.
	
	Next, let us discuss the case $p=\infty$. Set $K=\lfloor N^{1/d}\rfloor^2\lfloor L^{1/d}\rfloor^2\big\lfloor \lfloor \log_3(N+2)\rfloor^{1/d}\big\rfloor$ and choose a small $\delta\in(0,\tfrac{1}{3K}]$ such that 
	\begin{equation*}
	d\cdot \omega_f(\delta)\le \omega_f\Big(\big(N^2L^2 \log_3(N+2)\big)^{-1/d}\Big).
	\end{equation*}	
	By Theorem~\ref{thm:mainGap}, there exists a function $\tildephi$ implemented   by a  ReLU network with width $\max\big\{8d\lfloor N^{1/d}\rfloor+3d,\, 16N+30\big\}$
	and depth $11L+18$ such that
	\begin{equation*}
	|f(\bmx)-\tildephi(\bmx)|\le 130\sqrt{d}\,\omega_f\Big(\big(N^2L^2 \log_3(N+2)\big)^{-1/d}\Big)\eqqcolon \varepsilon,
	\end{equation*}
\tn{for any $\bmx\in [0,1]^d\backslash\Omega([0,1]^d,K,\delta)$}.
	By Theorem~\ref{thm:Gap}, there exists a  function $\phi$ implemented by a ReLU network  with width 
	\begin{equation*}
	3^d\Big(\max\big\{8d\lfloor N^{1/d}\rfloor+3d,\, 16N+30\big\}+4\Big)\le 3^{d+3}\max\big\{d\lfloor N^{1/d}\rfloor,\, N+2\big\}
	\end{equation*}
	and depth $11L+18+2d$ such that 
	\begin{equation*}
	|f(\bmx)-\phi(\bmx)|\le \varepsilon+d\cdot \omega_f(\delta)\le  131\sqrt{d}\,\omega_f\Big(\big(N^2L^2 \log_3(N+2)\big)^{-1/d}\Big),\quad \tn{for any $\bmx\in [0,1]^d$}.
	\end{equation*}
	So we finish the proof.
\end{proof}

\subsection{Optimality}\label{sec:optimality}

This section will show that the approximation rates in Theorem~\ref{thm:main} and Corollary~\ref{coro:main} are optimal and there is no room to improve for the function class $\holder{\lambda}{\alpha}$. 
\black{Therefore, the approximation rate for the whole continuous functions space in terms of width and depth in Theorem~\ref{thm:main} cannot be improved.}
A typical method to characterize the optimal approximation theory of neural networks is to study the connection
between the approximation error and Vapnik–Chervonenkis (VC) dimension \cite{yarotsky2017,yarotsky18a,shijun3,shijun:thesis,shijun2}. This method relies on the VC-dimension upper bound given in \cite{pmlr-v65-harvey17a}. In this paper, we adopt this method with several modifications to simplify the proof.

Let us first present the definitions of VC-dimension and related concepts. 
Let  $H$ be a class of functions mapping from a general domain $\mathcal{X}$ to $\{0,1\}$. 
We say $H$ shatters the set $\{\bmx_1,\bmx_2,\cdots,\bmx_m\}\subseteq \mathcal{X}$ if
\begin{equation*}
	\Big| \Big\{\big[h(\bmx_1),h(\bmx_2),\cdots,h(\bmx_m)\big]^T\in \{0,1\}^m: h\in H\Big\}\Big|=2^m,
\end{equation*}
where $|\cdot|$ denotes the size of a set.
This equation means, given any $\theta_i\in \{0,1\}$ for $i=1,2,\cdots,m$, there exists $h\in H$ such that
$h(\bmx_i)=\theta_i$ for all $i$. For a general  function set $\scrF$ mapping from $\mathcal{X}$ to $\R$, we say $\scrF$ shatters $\{\bmx_1,\bmx_2,\cdots,\bmx_m\}\subseteq \mathcal{X}$ if $\calT\circ \scrF$ does, where \begin{equation*}
		\calT(t)\coloneqq \left\{
		\begin{array}{ll}
	    1,&\  t\ge 0,\\ 0, &\ t< 0\phantom{,}
        \end{array}
		\right. 
		\quad \tn{and}\quad \calT\circ \scrF\coloneqq \{\calT\circ f: f\in \scrF\}.
\end{equation*}

For any $m\in \N^+$, we define the growth function of $H$ as 
\begin{equation*}
	\Pi_H(m)\coloneqq \max_{\bmx_1,\bmx_2,\cdots,\bmx_m\in \mathcal{X}} \Big| \Big\{\big[h(\bmx_1),h(\bmx_2),\cdots,h(\bmx_m)\big]^T\in \{0,1\}^m: h\in H\Big\}\Big|.
\end{equation*}

\begin{definition}[VC-dimension]
	Let $H$ be a class of functions from $\mathcal{X}$ to $\{0,1\}$.
	The VC-dimension of $H$, denoted by  $\vcd(H)$, is the size of the largest shattered set, namely, 
	\[\vcd(H)\coloneqq \sup \{m\in\N^+:\Pi_H(m)=2^m \}\] 
	if $\{m\in\N^+:\Pi_H(m)=2^m \}$ is not empty. In the case of $\{m\in\N^+:\Pi_H(m)=2^m \}=\emptyset$, we may define $\vcd(H)=0$.
	
	
	Let $\scrF$ be a class of functions from $\mathcal{X}$ to $\R$. The VC-dimension of $\scrF$, denoted by $\vcd(\scrF)$, is defined by $\vcd(\scrF)\coloneqq\vcd(\calT\circ\scrF)$, where
	\begin{equation*}
		\calT(t)\coloneqq \left\{
				\begin{array}{ll}
	    1,&\  t\ge 0,\\ 0, &\ t< 0\phantom{,}
        \end{array}
        \right.
		\quad \tn{and}\quad \calT\circ \scrF\coloneqq \{\calT\circ f: f\in \scrF\}.
	\end{equation*}
	In particular, the expression ``VC-dimension of a network (architecture)'' means the VC-dimension of the function set that consists of all functions implemented by this network (architecture).
\end{definition}

We remark that one may also define $\vcd(\scrF)$ as $\vcd(\scrF)\coloneqq\vcd(\widetilde{\calT}\circ\scrF)$, where
	\begin{equation*}
	\widetilde{\calT}(t)\coloneqq
	\left\{
		\begin{array}{ll}
	    1,&\  t> 0,\\ 0, &\ t\le 0\phantom{,}
        \end{array}
        \right.
	\quad \tn{and}\quad \widetilde{\calT}\circ \scrF\coloneqq \{\widetilde{\calT}\circ f: f\in \scrF\}.
\end{equation*}
 Note that function spaces generated by networks  are closed under linear transformation. Thus, these two definitions of VC-dimension are equivalent.

%
The theorem below, similar to Theorem~$4.17$ of \cite{shijun:thesis}, reveals the connection between VC-dimension and the approximation rate.
\begin{theorem}
	\label{thm:linkVcdRate}
	Assume $\scrF$ is a  set of functions mapping from $[0,1]^d$ to $\R$. For any $\varepsilon >0$, if  $\vcd(\scrF)\ge 1$ and
	\begin{equation}
	\label{eq:distLipScrF}
	\inf_{\phi\in \scrF}\|\phi-f\|_{L^\infty([0,1]^d)}\le \varepsilon,\quad \tn{for any $f\in \Lip([0,1]^d,\alpha,1)$,}
	\end{equation}
	then $\vcd(\scrF)\ge (9\varepsilon)^{-d/\alpha}$.
\end{theorem}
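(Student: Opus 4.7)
The plan is to derive a lower bound on $\vcd(\scrF)$ by exhibiting a large shattered subset of $[0,1]^d$; the shattering is produced by bump-shaped Hölder functions that $\scrF$ must approximate within $\varepsilon$. First I would place a regular grid of $M=k^d$ points in $[0,1]^d$ with spacing at least $(3\varepsilon)^{1/\alpha}$, then show that any $\{0,1\}$-pattern on the grid is encoded in the sign of a suitable $\Lip([0,1]^d,\alpha,1)$ function, deduce that the corresponding approximator $\phi\in\scrF$ reproduces this sign pattern (and hence the binary pattern via $\calT$), and finally verify the numerical bound $k^d\ge (9\varepsilon)^{-d/\alpha}$.

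Concretely, set $k:=\lfloor (3\varepsilon)^{-1/\alpha}\rfloor$ and put $x_{\bm{j}}:=\bm{j}/k$ for $\bm{j}\in\{1,\dots,k\}^d$, so any two distinct grid points satisfy $\|x_{\bm{i}}-x_{\bm{j}}\|_2\ge 1/k\ge (3\varepsilon)^{1/\alpha}$. For each $\bm{b}\in\{0,1\}^M$ I define
\begin{equation*}
f_{\bm{b}}(x)\coloneqq \max_{\bm{j}:\,b_{\bm{j}}=1}\max\!\bigl\{0,\ 3\varepsilon-\|x-x_{\bm{j}}\|_2^\alpha\bigr\}\ -\ \tfrac{3\varepsilon}{2},
\end{equation*}
with the convention that an empty max equals $0$. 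The elementary inequality $|a^\alpha-b^\alpha|\le |a-b|^\alpha$ for $a,b\ge 0$ and $\alpha\in(0,1]$ shows each inner bump is $1$-Hölder of order $\alpha$; since the pointwise max and additive translation both preserve this Hölder modulus, $f_{\bm{b}}\in\Lip([0,1]^d,\alpha,1)$. The grid separation forces $f_{\bm{b}}(x_{\bm{j}})=\tfrac{3\varepsilon}{2}$ when $b_{\bm{j}}=1$ and $f_{\bm{b}}(x_{\bm{j}})=-\tfrac{3\varepsilon}{2}$ when $b_{\bm{j}}=0$. By hypothesis some $\phi_{\bm{b}}\in\scrF$ satisfies $\|\phi_{\bm{b}}-f_{\bm{b}}\|_{L^\infty([0,1]^d)}\le \varepsilon$, so $\phi_{\bm{b}}(x_{\bm{j}})\ge \varepsilon/2>0$ or $\phi_{\bm{b}}(x_{\bm{j}})\le -\varepsilon/2<0$ according as $b_{\bm{j}}=1$ or $0$. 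Therefore $\calT(\phi_{\bm{b}}(x_{\bm{j}}))=b_{\bm{j}}$ for every $\bm{j}$, and as $\bm{b}$ ranges over $\{0,1\}^M$ the grid $\{x_{\bm{j}}\}$ is shattered, yielding $\vcd(\scrF)\ge M=k^d$.

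It only remains to check $k^d\ge (9\varepsilon)^{-d/\alpha}$. If $\varepsilon\ge 1/9$ the right-hand side is at most $1$ and the hypothesis $\vcd(\scrF)\ge 1$ already suffices. For $\varepsilon<1/9$ one has $(3\varepsilon)^{-1/\alpha}\ge 3^{1/\alpha}\ge 3$, hence
\begin{equation*}
k\ge (3\varepsilon)^{-1/\alpha}-1=\bigl(1-(3\varepsilon)^{1/\alpha}\bigr)(3\varepsilon)^{-1/\alpha}\ge \bigl(1-3^{-1/\alpha}\bigr)(3\varepsilon)^{-1/\alpha}\ge 3^{-1/\alpha}(3\varepsilon)^{-1/\alpha}=(9\varepsilon)^{-1/\alpha},
\end{equation*}
where the final step uses $1-3^{-1/\alpha}\ge 3^{-1/\alpha}$, i.e.\ $3^{1/\alpha}\ge 2$, valid for all $\alpha\in(0,1]$. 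The main obstacle is calibrating the bump height, the grid spacing, and the floor-function loss so that the precise constant $9$ emerges; using a pointwise max of bumps (rather than a sum) avoids a $2^d$ penalty that would arise if disjoint supports were demanded, and this is what permits setting the grid spacing equal to $(3\varepsilon)^{1/\alpha}$ instead of twice that quantity.
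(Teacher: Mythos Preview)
Your overall strategy matches the paper's exactly: build sign-encoding H\"older functions on a grid, pull the sign pattern back to $\scrF$ via the approximation hypothesis, and count. Your max-of-bumps construction is a clean variant of the paper's sum-of-tent-functions over sub-cubes, and your numerics recovering the constant $9$ are correct.

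There are, however, two technical points the paper handles that you skip, and the second one is a real gap.

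\textbf{The infimum need not be attained.} You write ``some $\phi_{\bm b}\in\scrF$ satisfies $\|\phi_{\bm b}-f_{\bm b}\|_{L^\infty}\le\varepsilon$'', but the hypothesis only gives $\inf_{\phi\in\scrF}\|\phi-f_{\bm b}\|_{L^\infty}\le\varepsilon$. The paper deals with this by allowing a small slack, picking $\phi_\chi$ with error at most $\varepsilon+\varepsilon/81$. Your margin $3\varepsilon/2$ easily absorbs such a slack, so the fix is routine---but it should be stated.

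\textbf{The $L^\infty$ norm is an essential supremum.} This is the substantive issue. Even once you have $\|\phi_{\bm b}-f_{\bm b}\|_{L^\infty([0,1]^d)}\le\varepsilon'$, you cannot conclude $|\phi_{\bm b}(x_{\bm j})-f_{\bm b}(x_{\bm j})|\le\varepsilon'$ at your \emph{pre-chosen} grid points $x_{\bm j}$: the bound may fail on a Lebesgue-null set, and nothing in the theorem rules out $\scrF$ containing discontinuous functions that misbehave exactly on your grid. The paper resolves this by not fixing the shatter points in advance. After selecting the (finitely many) approximants $\phi_\chi$, it takes the union $\calH$ of the corresponding null exceptional sets, and then for each cube picks a shatter point in a small central region $\tfrac{1}{10}Q_{\bmbeta}\setminus\calH$, where the bump is still large. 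Your argument can be repaired the same way: replace each grid point $x_{\bm j}$ by a nearby point (within, say, distance $(3\varepsilon)^{1/\alpha}/10$) chosen to avoid the union of $2^M$ null sets; your bumps are continuous and still exceed $\varepsilon'$ in absolute value there. But as written, the step ``$\phi_{\bm b}(x_{\bm j})\ge\varepsilon/2$ or $\le-\varepsilon/2$'' is unjustified.
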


This theorem demonstrates the connection between VC-dimension of $\scrF$ and the approximation rate using elements of $\scrF$ to approximate functions in $\Lip([0,1]^d,\alpha,\lambda  )$. To be precise, the VC-dimension of $\scrF$ determines an approximation rate lower bound  $\vcd(\scrF)^{-\alpha/d}/9$, which is the best possible approximation rate. 
Denote the best approximation error of functions in $\Lip([0,1]^d,\alpha,1)$ approximated by ReLU networks with width $N$ and depth $L$ as 
\begin{equation*}
	\calE_{\alpha,d}(N,L) \coloneqq \sup_{f\in \Lip([0,1]^d,\alpha,1)} \bigg(\inf_{\phi\in \NNF(\NNwidth\le N;\,\NNdepth \le L)} \|\phi-f\|_{L^\infty([0,1]^d)}\bigg).
\end{equation*}
We have three remarks listed below.
\begin{enumerate}[(i)]
	\item A large VC-dimension cannot guarantee a good approximation rate. For example, it is easy to verify that
	\begin{equation*}
		\vcd\Big(\big\{f: f(x)=\cos(ax), \ a\in\R \big\}\Big)=\infty.
	\end{equation*}
However, functions in $\big\{f: f(x)=\cos(ax), \ a\in\R \big\}$ cannot approximate H\"older continuous functions well.
	\item A large VC-dimension is necessary for a good approximation rate, because the best possible approximation rate is controlled by an expression of VC-dimension, as shown in Theorem~\ref{thm:linkVcdRate}.
	\black{It is shown in Theorem~$6$ and $8$ of \cite{pmlr-v65-harvey17a} that the VC-dimension of ReLU networks has two types of upper bounds: $\calO(WL\ln W)$ and $\calO(WU)$. Here, $W$, $L$, and  $U$ are the numbers of parameters, layers, and neurons, respectively. If we let $N$ denote the maximum width of the network, then $W=\calO(N^2L)$ and $U=\calO(NL)$, implying that
	\begin{equation*}
	    WL\ln W=\calO\big(N^2L\cdot L\ln (N^2L)\big)=\calO\big(N^2L^2\ln (NL)\big)
	\end{equation*} and 
	\begin{equation*}
	    WU=\calO(N^2L\cdot NL)=\calO(N^3L^2).
	\end{equation*}
	It follows that  
	\begin{equation*}
		\vcd\big(\NNF(\NNwidth\le N\NNspace \NNdepth\le L)\big)\le \min\Big\{ \calO\big(N^2L^2\ln(NL)\big), \calO(N^3L^2)\Big\},
	\end{equation*}
	}
	deducing
	\begin{equation}\label{eq:error:lb:ub}
		\mathResize[0.9]{
		\underbrace{C_1(\alpha,d) \Big(\min\{ N^2L^2\ln(NL), N^3L^2\} \Big)^{-\alpha/d} \le }_{ \tn{implied by Theorem~\ref{thm:linkVcdRate}}  }
		\,\calE_{\alpha,d}(N,L) \,
		\underbrace{\le  C_2(\alpha,d)\Big(N^2L^2\ln N\Big)^{-\alpha/d}}_{ \tn{implied by Corollaries~\ref{coro:tildeNL} and \ref{coro:main}}  },
	}
	\end{equation}	
where $C_1(\alpha,d)$ and $C_2(\alpha,d)$ are two positive constants determined by $s,d$, and $C_2(s,d)$ can be explicitly expressed.
\begin{itemize}
    \item 
When $L=L_0$ is fixed, Equation~\eqref{eq:error:lb:ub} implies
\begin{equation*}
	C_1(\alpha,d,L_0)(N^2\ln N)^{-\alpha/d}\,\le \,\calE_{\alpha,d}(N,L_0) \,\le\, C_2(\alpha,d,L_0) (N^2\ln N)^{-\alpha/d},
\end{equation*}
where $C_1(\alpha,d,L_0)$ and $C_2(\alpha,d,L_0)$ are two positive constants determined by $\alpha,d,L_0$.
\item 
When $N=N_0$ is fixed, Equation~\eqref{eq:error:lb:ub} implies
\begin{equation*}
	C_1(\alpha,d,N_0)L^{-2\alpha/d}\,\le \,\calE_{\alpha,d}(N_0,L) \,\le\, C_2(\alpha,d,N_0) L^{-2\alpha/d},
\end{equation*}
where $C_1(\alpha,d,N_0)$ and $C_2(\alpha,d,N_0)$ are two positive constants determined by $\alpha,d,N_0$.
\black{
\item 
It is easy to verify that Equation~\eqref{eq:error:lb:ub} is tight except for the following region 
\[\big\{(N,L)\in \N^2: C_3(\alpha,d)\le N\le L^{C_4(\alpha,d)}\big\},\]
$C_3=C_3(\alpha,d)$ and $C_4=C_4(\alpha,d)$ are two positive constants. See Figure~\ref{fig:opt:NLplane} for an illustration for the case $C_3=1000$ and $C_4=1/100$.}
\end{itemize}
\end{enumerate}


Finally, let us present the detailed proof of  Theorem~\ref{thm:linkVcdRate}.
\begin{proof}[Proof of Theorem~\ref{thm:linkVcdRate}]
	Recall that the VC-dimension of a function set is defined as the size of the largest set of points that this class of functions can shatter. So our goal is to find a subset of $\scrF$ to shatter $\calO(\varepsilon^{-d/\alpha})$ points in $[0,1]^d$, which can be divided into two steps.
	\begin{itemize}
		\item Construct $\{f_\chi:\chi\in \mathscr{B}\}\subseteq\Lip([0,1]^d,\alpha,1)$ that scatters $\calO(\varepsilon^{-d/\alpha})$ points, where $\scrB$ is a set defined later.
		\item Design $\phi_\chi\in \scrF$, for each $\chi\in \scrB$, based on $f_\chi$ and Equation~\eqref{eq:distLipScrF} such that	$\{\phi_\chi:\chi\in\scrF\}\subseteq\scrF$ also shatters $\calO(\varepsilon^{-d/\alpha})$ points.
	\end{itemize}  
	The details of these two steps can be found below.
	
	\mystep{1}{Construct $\{f_\chi:\chi\in \mathscr{B}\}\subseteq\Lip([0,1]^d,\alpha,1)$ that scatters $\calO(\varepsilon^{-d/\alpha})$ points.}
	
	We may assume $\varepsilon\le 2/9$ since the case $\varepsilon>2/9$ is trivial. In fact, $\varepsilon>2/9$ implies
	\[\vcd(\scrF)\ge 1\ge 1/2\ge 2^{-d/\alpha}> (9\varepsilon)^{-d/\alpha}.\]
	Let $K=\lfloor \left(9\varepsilon/2\right)^{-1/\alpha}\rfloor\in \N^+$  and divide $[0,1]^d$ into $K^d$ non-overlapping sub-cubes $\{Q_{\bmbeta}\}_{\bmbeta}$ as follows: 
	\[Q_{\bmbeta}\coloneqq \big\{{\bm{x}}=[x_1,x_2,\cdots,x_d]^T\in[0,1]^d: x_i\in [\tfrac{\beta_i}{K},\tfrac{\beta_i+1}{K}],\ i=1,2,\cdots,d\big\},\] for any index vector  ${\bmbeta}= [\beta_1,\beta_2,\cdots,\beta_d]^T\in \{0,1,\cdots,K-1\}^d$.
	
	Let $Q (\bm{x}_0,\eta)$ denote the closed cube with  center  ${\bm{x}}_0\in\R^d$ and sidelength $\eta>0$. 
	Define a function $\zeta_Q$ on $[0,1]^d$ corresponding to $Q=Q (\bm{x}_0,\eta)\subseteq [0,1]^d$  such that:
	\begin{itemize}
		\item $\zeta_{Q} (\bm{x}_0)= (\eta/2)^\alpha/2$;
		\item $\zeta_{Q} (\bm{x})=0$ for any $\bmx\notin Q \backslash\partial Q $, where $\partial Q $ is the boundary of $Q$;
		\item $\zeta_{Q }$ is linear on the line that connects ${\bm{x}}_0$ and ${\bm{x}}$ for any ${\bm{x}}\in \partial Q$.
	\end{itemize}

	Define
	\begin{equation*}
	\mathscr{B}\coloneqq \big\{\chi: \chi \tn{ is  a map from } \{0,1,\cdots,K-1\}^d \tn{   to } \{-1,1\}\big\}.
	\end{equation*}
	For each $\chi\in \mathscr{B}$, we define
	\begin{equation*}
	f_\chi (\bm{x})\coloneqq \sum_{{\bmbeta}\in \{0,1,\cdots,K-1\}^d} \chi ({\bmbeta})\zeta_{Q_{\bmbeta}} (\bm{x}),
	\end{equation*}
	where $\zeta_{Q_{\bmbeta}} (\bm{x})$ is the associated function introduced just above.  
	It is easy to check that  $ \{f_\chi:\chi\in \mathscr{B}\}\subseteq \Lip([0,1]^d,\alpha,1)$ 
	can shatter
	$K^d=\calO(\varepsilon^{-d/\alpha})$
	points in $[0,1]^d$.


	\mystep{2}{Construct $\{\phi_\chi:\chi\in\mathscr{B}\}$ that also scatters $\calO(\varepsilon^{-d/\alpha})$ points.}

	By Equation~\eqref{eq:distLipScrF}, for each $\chi\in\scrB$, there exists $\phi_\chi \in \scrF$ such that 
	\begin{equation*}
	\|\phi_\chi-f_\chi\|_{L^\infty([0,1]^d)}\le \varepsilon+\varepsilon/81.
	\end{equation*}
	Let $\mu(\cdot)$ denote the Lebesgue measure of a set. Then, for each $\chi\in \scrB$, there exists $\calH_\chi\subseteq [0,1]^d$ with $\mu(\calH_\chi)=0$ such that
	\begin{equation*}
	|\phi_\chi(\bmx)-f_\chi(\bmx)|\le \tfrac{82}{81}\varepsilon,\quad \tn{for any $\bmx\in [0,1]\backslash \calH_\chi$}.
	\end{equation*}
	Set $\calH=\cup_{\chi\in \scrB} \calH_\chi$, then we have $\mu(\calH)=0$ and 
	\begin{equation}
	\label{eq:fBetaPhiBeta}
	|\phi_\chi(\bmx)-f_\chi(\bmx)|\le \tfrac{82}{81}\varepsilon,\quad \tn{for any $\chi\in \scrB$ and $\bmx\in [0,1]\backslash \calH$}.
	\end{equation}
	
	Since $Q_{\bmbeta}$ has a sidelength $\tfrac{1}{K}=\tfrac{1}{\lfloor (9\varepsilon/2)^{-1/\alpha}\rfloor}$,  we have, for each ${\bmbeta}\in \{0,1,\cdots,K-1\}^d$ and any $\bm{x} \in \tfrac{1}{10}Q_{\bmbeta}$\footnote{$\tfrac1{10}Q_\bmbeta$ denotes the closed cube whose sidelength is $1/10$ of that of $Q_\bmbeta$ and which shares the same center of $Q_\bmbeta$.},
	\begin{equation}
	\label{eq:fBetaLowerBound}
	|f_\chi (\bm{x})|=|\zeta_{Q_{\bmbeta}}(\bm{x})|\ge \tfrac{9}{10}|\zeta_{Q_{\bmbeta}} (\bm{x}_{Q_{\bmbeta}})|=\tfrac{9}{10}(\tfrac1{2\lfloor (9 \varepsilon/2)^{-1/\alpha}\rfloor})^\alpha/2\ge \tfrac{81}{80}\varepsilon,
	\end{equation}
	where $\bm{x}_{Q_{\bmbeta}}$ is the center of $Q_{\bmbeta}$.
	
	Note that $(\tfrac{1}{10}Q_\bmbeta)\backslash\calH$ is not empty, since $\mu\big((\tfrac{1}{10}Q_\bmbeta)\backslash\calH\big)>0$ for each $\bmbeta\in\{0,1,\cdots,K-1\}^d$. Together with Equations~\eqref{eq:fBetaPhiBeta} and \eqref{eq:fBetaLowerBound},  there exists $\bmx_\bmbeta\in (\tfrac{1}{10} Q_\bmbeta) \backslash \calH$ such that, for each $\bmbeta\in \{0,1,\cdots,K-1\}^d$ and each $\chi\in \scrB$,	
	\begin{equation*}
	|f_\chi (\bmx_{\bmbeta})|\ge  \tfrac{81}{80} \varepsilon> \tfrac{82}{81}\varepsilon\ge  |f_\chi (\bmx_{\bmbeta})-\phi_\chi (\bmx_{\bmbeta})|.\label{inq3}
	\end{equation*}

	Hence,  $f_\chi (\bmx_{\bmbeta})$ and $\phi_\chi (\bmx_{\bmbeta})$ have the same sign for each $\chi\in\mathscr{B}$ and ${\bmbeta}\in \{0,1,\cdots,K-1\}^d$. Then  $ \{\phi_\chi:\chi\in\mathscr{B}\}$ shatters $\big\{\bmx_\bmbeta:\bmbeta\in \{0,1,\cdots,K-1\}^d\big\}$ since $\{f_\chi:\chi\in\mathscr{B}\}
	$ shatters $\big\{\bmx_\bmbeta:\bmbeta\in \{0,1,\cdots,K-1\}^d\big\}$. Therefore, 
	\begin{equation*}
	\tn{VCDim}(\scrF)\ge \tn{VCDim}\big( \{\phi_\chi:\chi\in\mathscr{B}\} \big)\geq K^d=\lfloor (9\varepsilon/2)^{-1/\alpha}\rfloor^d\ge (9\varepsilon)^{-d/\alpha},
	\end{equation*}
	where the last inequality comes from the fact $\lfloor x\rfloor \ge x/2\ge x/(2^{1/\alpha})$ for any $x\in [1,\infty)$ and $\alpha\in(0,1]$.
	So we finish the proof.  
\end{proof}

\subsection{Approximation in irregular domain}
We extend our analysis to general continuous functions defined on any irregular bounded set in $\R^d$. The key idea is to extend the target function to a hypercube while preserving the modulus of continuity.
\black{The extension of continuous (smooth) functions has been widely studied, e.g., \cite{smooth:extension} for smooth functions and \cite{continuous:extension} for continuous functions. For simplicity, we use Lemma $4.2$ of \cite{shijun2}. The proof can be found therein. }
For a general set $E\subseteq\R^d$, the modulus of continuity of $f\in C(E)$ is defined via
\begin{equation*}
	\omega_f^E(r)\coloneqq \sup\big\{|f(\bm{x})-f(\bm{y})|:\bm{x},\bm{y}\in E,\ \|\bm{x}-\bm{y}\|_2\le r\big\},\quad \tn{for any $r\ge 0$}.
\end{equation*} 
In particular, $\omega_f(\cdot)$ is short of $\omega_f^E(\cdot)$ in the case of $E=[0,1]^d$.
Then, Theorem~\ref{thm:main} can be generalized to $f\in C(E)$ for any bounded set $E\subseteq [-R,R]^d$ with $R>0$, as shown in the following theorem.

\begin{theorem}\label{thm:main:E}
	Given any bounded continuous function $f\in C(E)$ with $E\subseteq [-R,R]^d$ and $R>0$, for any $N\in \N^+$, $L\in \N^+$, and $p\in [1,\infty]$, 
	there exists a function $\phi$ implemented by a ReLU network with width $C_1\max\big\{d\lfloor N^{1/d}\rfloor,\, N+2\big\}$
	and depth $11L+C_2$
	such that 
	\begin{equation*}
		\|f-\phi\|_{L^p(E)}\le 131(2R)^{d/p} \sqrt{d}\,\omega_f^E\Big(2R\big(N^2L^2 \log_3(N+2)\big)^{-1/d}\Big),
	\end{equation*}
	where $C_1=16$ and $C_2=18$ if $p\in [1,\infty)$;  $C_1=3^{d+3}$ and $C_2=18+2d$ if $p=\infty$.
\end{theorem}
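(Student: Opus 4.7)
The plan is to reduce the problem to Theorem~\ref{thm:main} by first extending $f$ to all of $[-R,R]^d$ without enlarging its modulus of continuity, and then rescaling $[-R,R]^d$ affinely to $[0,1]^d$. The continuous extension is exactly what Lemma~$4.2$ of \cite{shijun2} provides: we can produce $\tildef \in C([-R,R]^d)$ with $\tildef|_E = f$ and $\omega_{\tildef}^{[-R,R]^d}(r)=\omega_f^E(r)$ for every $r\ge 0$. This step converts the irregular-domain problem into a standard hypercube problem.

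Next I would perform the affine change of variables $\bmy = \tfrac{\bmx + R\bm{1}}{2R} \in [0,1]^d$, where $\bm{1}=[1,\cdots,1]^T$, and define $g(\bmy)\coloneqq \tildef(2R\bmy - R\bm{1})$ for $\bmy\in[0,1]^d$. A one-line calculation shows
\begin{equation*}
\omega_g(r) \;=\; \sup\{|\tildef(\bmu)-\tildef(\bmv)|:\bmu,\bmv\in[-R,R]^d,\ \|\bmu-\bmv\|_2\le 2Rr\}\;=\;\omega_f^E(2Rr).
\end{equation*}
Applying Theorem~\ref{thm:main} to $g\in C([0,1]^d)$ yields a ReLU network $\psi$ of the stated width $C_1\max\{d\lfloor N^{1/d}\rfloor,N+2\}$ and depth $11L+C_2$ with
\begin{equation*}
\|g-\psi\|_{L^p([0,1]^d)} \;\le\; 131\sqrt{d}\,\omega_g\!\Big((N^2L^2\log_3(N+2))^{-1/d}\Big)\;=\;131\sqrt{d}\,\omega_f^E\!\Big(2R(N^2L^2\log_3(N+2))^{-1/d}\Big).
\end{equation*}

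I would then define the final approximant by $\phi(\bmx)\coloneqq \psi\!\big(\tfrac{\bmx + R\bm{1}}{2R}\big)$ for $\bmx\in E$. Since this composition with an affine map can be absorbed into the weights and biases of the first hidden layer of $\psi$, the network realizing $\phi$ has exactly the same width and depth bounds as $\psi$. For $p\in[1,\infty)$, the change of variables $\bmx = 2R\bmy - R\bm{1}$ gives $d\bmx = (2R)^d\,d\bmy$, so that
\begin{equation*}
\|f-\phi\|_{L^p(E)}^p \;\le\; \int_{[-R,R]^d}|\tildef(\bmx)-\phi(\bmx)|^p\,d\bmx \;=\; (2R)^d\,\|g-\psi\|_{L^p([0,1]^d)}^p,
\end{equation*}
yielding the claimed factor $(2R)^{d/p}$; the case $p=\infty$ requires no Jacobian and gives $(2R)^{d/\infty}=1$, matching the convention in the statement.

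The argument is essentially bookkeeping once the extension lemma is invoked, so there is no serious obstacle. The only points requiring care are (i) verifying that the modulus of continuity transforms cleanly under the linear rescaling $\bmy\mapsto 2R\bmy-R\bm{1}$, producing the factor $2R$ inside $\omega_f^E$, and (ii) checking that prepending the affine input transformation to $\psi$ does not alter the network's width or depth, so that the architecture counts from Theorem~\ref{thm:main} carry over verbatim.
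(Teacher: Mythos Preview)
Your proposal is correct and follows essentially the same route as the paper: extend $f$ to $[-R,R]^d$ via Lemma~4.2 of \cite{shijun2}, affinely rescale to $[0,1]^d$, apply Theorem~\ref{thm:main}, then pull back and pick up the Jacobian factor $(2R)^{d/p}$. The only differences are notational (the paper calls the extension $g$ and the rescaled function $\tildeg$, whereas you call them $\tildef$ and $g$).
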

\begin{proof}
		Given any bounded continuous function $f\in C(E) $, by Lemma $4.2$ of \cite{shijun2} via setting $S=[-R,R]^d$, there exists $g\in C([-R,R]^d)$ such that 
	\begin{itemize}
		\item $g(\bmx)=f(\bmx)$ for any $\bmx\in E\subseteq S= [-R,R]^d$;
		\item $\omega_g^{S}(r)=\omega_f^{E}(r)$ for any $r\ge 0$.
	\end{itemize} 
	
	Define 
	\[\tildeg(\bmx)\coloneqq  g(2R\bmx-R),\quad \tn{for any $\bmx\in [0,1]^d$.}\] 
	By applying Theorem~\ref{thm:main} to $\tildeg\in  C([0,1]^d)$, 
		there exists a function $\tildephi$ implemented by a ReLU network with width $C_1\max\big\{d\lfloor N^{1/d}\rfloor,\, N+2\big\}$
	and depth $11L+C_2$
	such that 
	\begin{equation*}
		\|\tildephi-\tildeg\|_{L^p([0,1]^d)}\le 131\sqrt{d}\,\omega_\tildeg\Big(\big(N^2L^2 \log_3(N+2)\big)^{-1/d}\Big),
	\end{equation*}
	where $C_1=16$ and $C_2=18$ if $p\in [1,\infty)$;  $C_1=3^{d+3}$ and $C_2=18+2d$ if $p=\infty$.
		
	Note that  $f(\bmx)=g(\bmx)=\tildeg(\tfrac{\bmx+R}{2R})$ for any $\bmx\in E\subseteq S=[-R,R]^d$ and
	\[\omega_\tildeg(r)=\omega_g^{S}(2Rr)=\omega_f^E(2Rr),\quad \tn{ for any $r\ge 0$}.\]
	Define $\phi(\bmx)\coloneqq\tildephi(\tfrac{\bmx+R}{2R})=\tildephi\circ \calL(\bmx)$ for any $\bmx\in\R^d$, where $\calL:\R^d\to\R^d$ is an affine linear map given by $\calL(\bmx)=\tfrac{\bmx+R}{2R}$. Clearly,
	$\phi$  can be implemented by a ReLU network with width $C_1\max\big\{d\lfloor N^{1/d}\rfloor,\, N+2\big\}$
	and depth $11L+C_2$,
	where $C_1=16$ and $C_2=18$ if $p\in [1,\infty)$;  $C_1=3^{d+3}$ and $C_2=18+2d$ if $p=\infty$.	
	Moreover, for any $\bmx\in E\subseteq S=[-R,R]^d$, we have $\tfrac{\bmx+R}{2R}\in [0,1]^d$, implying
	\begin{equation*}
		\begin{split}
			\|\phi-f\|_{L^p(E)}&=\|\phi-g\|_{L^p(E)}  
			= \|\tildephi\circ \calL-\tildeg\circ \calL\|_{L^p(E)}\\
			&\le \|\tildephi\circ \calL-\tildeg\circ \calL\|_{L^p([-R,R]^d)} 
					=(2R)^{d/p}\|\tildephi-\tildeg\|_{L^p([0,1]^d)}\\
			&\le 131(2R)^{d/p} \sqrt{d}\,\omega_\tildeg\Big(\big(N^2L^2 \log_3(N+2)\big)^{-1/d}\Big)\\
			&= 131(2R)^{d/p} \sqrt{d}\,\omega_f^E\Big(2R\big(N^2L^2 \log_3(N+2)\big)^{-1/d}\Big).
		\end{split}
	\end{equation*}
	With the discussion above, we have proved Theorem~\ref{thm:main:E}.
\end{proof}

\section{Proof of Theorem~\ref{thm:mainGap}}
\label{sec:proof:mainGap}

We will prove Theorem~\ref{thm:mainGap} in this section. We first present the key ideas in Section~\ref{sec:keyIdea}. The detailed proof is presented in Section~\ref{sec:detaied:proof:mainGap}, based on two propositions in Section~\ref{sec:keyIdea}, 
the proofs of which can be found in Section~\ref{sec:proof:props}.

\subsection{Key ideas of proving Theorem~\ref{thm:mainGap}}
\label{sec:keyIdea}

Given an arbitrary $f\in C([0,1]^d)$, our goal is to construct an almost piecewise constant function $\phi$ implemented by a ReLU network
to approximate $f$ well.
To this end, we introduce a piecewise constant function $f_p\approx f$ serving as an intermediate approximant in our construction in the sense that
\[
f\approx f_p \tn{ on $[0,1]^d$} \quad \tn{and}\quad f_p\approx \phi \tn{ on $[0,1]^d\backslash\Omega([0,1]^d,K,\delta)$} .
\]
The approximation in $f\approx f_p$ is a simple and standard technique in constructive approximation. 
The most technical part is to design  a  ReLU network with the desired width and depth to implement a function $\phi$ with $\phi\approx f_p$ outside $\Omega([0,1]^d,K,\delta)$. See Figure~\ref{fig:Qml} for an illustration.
The introduction of the trifling  region is to ease the construction of $\phi$, which is a continuous piecewise linear  function, to approximate the discontinuous function $f_p$ by removing the difficulty near discontinuous points, essentially smoothing $f_p$ by restricting the approximation domain in $[0,1]^d\backslash\Omega([0,1]^d,K,\delta)$.

\begin{figure}[!htp]
	\centering
	\includegraphics[width=0.872\textwidth]{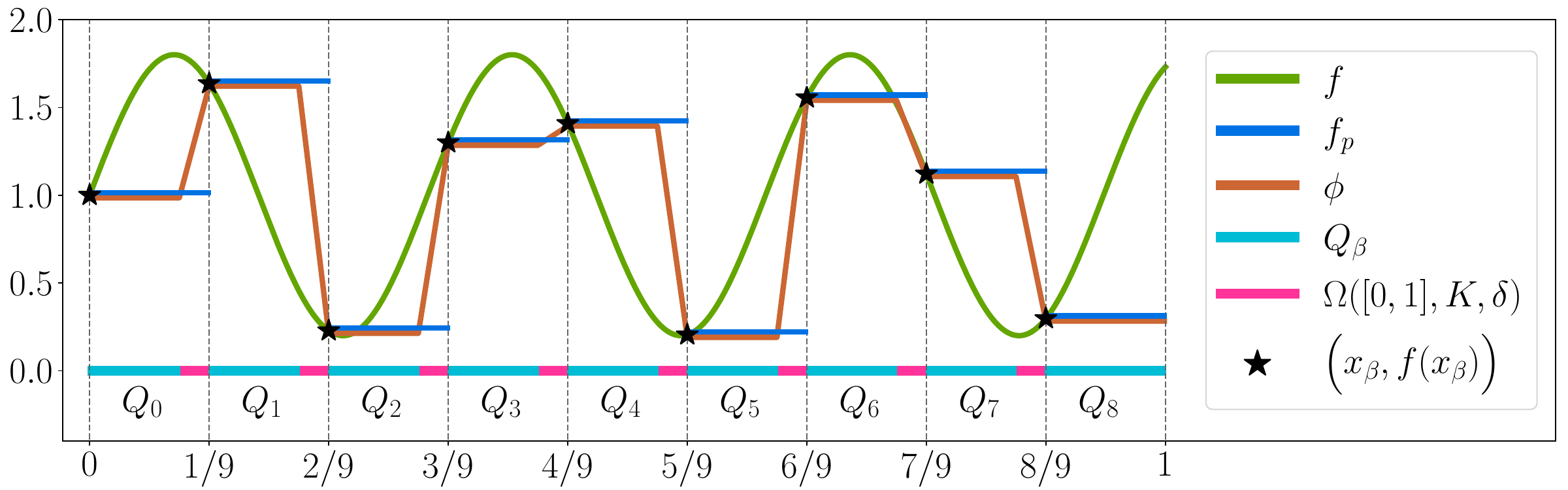}
	\caption{An illustration of $f$, $f_p$, $\phi$, $x_\beta$, $Q_{\beta}$, and the trifling  region $\Omega([0,1]^d,K,\delta)$ in the one-dimensional case for $\beta\in \{0,1,\cdots,K-1\}^d$, where $K=N^2L^2\lfloor \log_3(N+2)\rfloor$ and  $d=1$ with $N=1$ and $L=3$. $f$ is the target function; $f_p$ is the piecewise constant function approximating $f$; $\phi$ is a function, implemented by a ReLU network,  approximating $f$; and $x_\beta$ is a representative of $Q_\beta$. The measure of $\Omega([0,1]^d,K,\delta)$ can be arbitrarily small as we shall see in the proof of Theorem~\ref{thm:main}.}
	\label{fig:Qml}
\end{figure}

Now let us discuss the detailed steps of construction.
\begin{enumerate}[(i)]
	\item First, divide $[0,1]^d$  into a union of important regions $\{Q_\bmbeta\}_\bmbeta$ and the trifling  region $\Omega([0,1]^d,K,\delta)$, where each $Q_\bmbeta$ is associated with a representative $\bm{x}_\bmbeta\in Q_\bmbeta$ such that $f_p(\bm{x}_\bmbeta)=f(\bm{x}_\bmbeta)$ for each index vector $\bmbeta\in \{0,1,\cdots,K-1\}^d$, where $K=\calO((N^{2}L^{2}\ln N)^{1/d})$ is the partition number per dimension (see Figure~\ref{fig:Q+TR} for examples for $d=1$ and $d=2$). 
	
	\item Next, we design a vector function $\bmPhi_1(\bmx)$ constructed via \[\bmPhi_1(\bmx)=\big[\phi_1(x_1),\,\phi_1(x_2),\,\cdots,\,\phi_1(x_d)\big]^T\]
	 to project the whole cube $Q_\bmbeta$ to a $d$-dimensional index $\bmbeta$ for each $\bmbeta$, where each one-dimensional function $\phi_1$  is a step function implemented by a ReLU network. 
	 
	 \item The third step is to solve a point fitting problem. To be precise, we construct a function $\phi_2$ implemented by a ReLU network to map $\bmbeta\in \{0,1,\cdots,K-1\}^d$ approximately to $f_p(\bmx_\bmbeta)=f(\bmx_\bmbeta)$. Then $\phi_2\circ\bmPhi_1(\bmx)=\phi_2(\bmbeta)\approx f_p(\bmx_\bmbeta)=f(\bmx_\bmbeta)\approx f(\bmx)$ for any $\bmx\in Q_\bmbeta$ and each $\bmbeta$, implying 
	$\phi\coloneqq \phi_2\circ\bmPhi_1 \approx f_p\approx f $ on $ [0,1]^d\backslash\Omega([0,1]^d,K,\delta)$. We would like to point out that we only need to care about the values of $\phi_2$ at a set of points $\{0,1,\cdots,K-1\}^d$ in the construction of $\phi_2$ according to our design $\phi=\phi_2\circ\bmPhi_1$ as illustrated in Figure~\ref{fig:idea}. Therefore, it is not necessary to care about the values of $\phi_2$ sampled outside the set $\{0,1,\cdots,K-1\}^d$, which is a key point to ease the design of a ReLU network to implement $\phi_2$ as we shall see later. 
\end{enumerate}

\begin{figure}[!htp]        
	\centering
	\includegraphics[width=0.75\textwidth]{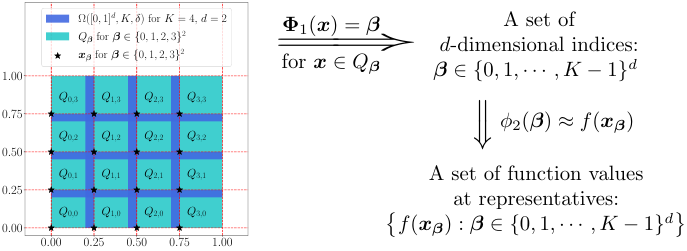}
	\caption{An illustration of the desired function $\phi=\phi_2\circ\bmPhi_1$. Note that $\phi\approx f$ on $[0,1]^d\backslash \Omega([0,1]^d,K,\delta)$, since $\phi(\bmx)=\phi_2\circ\bmPhi_1(\bmx)=\phi_2(\bmbeta)\approx f(\bmx_\bmbeta)\approx f(\bmx)$ for any $\bmx\in Q_\bmbeta$ and each $\bmbeta\in \{0,1,\cdots,K-1\}^d$.}
	\label{fig:idea}
\end{figure}

\black{
We remark that in Figure~\ref{fig:idea}, we have 
\begin{equation*}
    \phi(\bmx)=\phi_2\circ\bmPhi_1(\bmx)=\phi_2(\bmbeta)\mathop{\approx}\limits^{\scrE_1} f(\bmx_\bmbeta)\mathop{\approx}\limits^{\scrE_2} f(\bmx)
\end{equation*}
for any $\bmx\in Q_\bmbeta$ and each $\bmbeta\in \{0,1,\cdots,K-1\}^d$. Thus, $\phi-f$ is bounded by $\scrE_1+\scrE_2$ outside the trifling region. Observe that $\scrE_2$ is bounded by $\omega_f(\sqrt{d}/K)$. As we shall see later in Section~\ref{sec:detaied:proof:mainGap}, $\scrE_1$ can also be bounded by $\omega_f(\sqrt{d}/K)$ by applying Proposition~\ref{prop:pointFitting}. Hence, $\phi-f$ is controlled by $2\omega_f(\sqrt{d}/K)$ outside the trifling region, which deduces the desired approximation error. 
}

Finally, we discuss how to implement $\bmPhi_1$ and $\phi_2$ by deep ReLU networks with width $\calO(N)$ and depth $\calO(L)$ using two propositions as we shall prove in Sections~\ref{sec:proofProp1} and \ref{sec:proofProp2} later. 
We first show how to  construct a ReLU network with the desired width and depth  by Proposition~\ref{prop:stepFunc} to implement a one-dimensional step function $\phi_1$. Then $\bmPhi_1$ can be attained via defining \[\bmPhi_1(\bmx)\coloneqq\big[\phi_1(x_1),\,\phi_1(x_2),\,\cdots,\,\phi_1(x_d)\big]^T,\quad \tn{for any $\bmx=[x_1,x_2,\cdots,x_d]^T\in \R^d$.}\]

\begin{proposition}
	\label{prop:stepFunc}
	For any $N,L,d\in \N^+$ and $\delta\in(0, \tfrac{1}{3K}]$ with  
	\begin{equation*}
		K=\lfloor N^{1/d}\rfloor^2 \lfloor L^{1/d}\rfloor^2 \lfloor n^{1/d}\rfloor,\quad \tn{where} \  n=\lfloor \log_3(N+2) \rfloor,
	\end{equation*}
	there exists a one-dimensional function $\phi$ implemented by a ReLU network  with width $8\lfloor N^{1/d}\rfloor +3$ and depth $2\lfloor L^{1/d}\rfloor+5$ such that
	\begin{equation*}
	\phi(x)=k,\quad \tn{if $x\in [\tfrac{k}{K},\tfrac{k+1}{K}-\delta\cdot \one_{\{k\le K-2\}}]$,\quad for $k=0,1,\cdots,K-1$.}
	\end{equation*} 
\end{proposition}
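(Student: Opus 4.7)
The plan is to construct $\phi$ as a combination of three sub-networks whose outputs, scaled and added in a final affine layer, encode the index $k$ of the interval containing $x$ in mixed radix. Writing each $k\in\{0,\dots,K-1\}$ as
\[
k \;=\; k_1\,\lfloor L^{2/d}\rfloor\lfloor n^{1/d}\rfloor \;+\; k_2\,\lfloor n^{1/d}\rfloor \;+\; k_3,
\]
with $k_1\in\{0,\dots,\lfloor N^{1/d}\rfloor^2-1\}$, $k_2\in\{0,\dots,\lfloor L^{2/d}\rfloor-1\}$, and $k_3\in\{0,\dots,\lfloor n^{1/d}\rfloor-1\}$, the three sub-networks read off $k_1$, $k_2$, $k_3$ from $x$ and a final linear combination assembles $k$. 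Since sub-networks placed in parallel share inputs, the overall width is the maximum of the three sub-widths plus a constant, and the overall depth is essentially the deepest sub-network plus an affine layer; so it suffices to realize each digit within its own portion of the $\bigl(8\lfloor N^{1/d}\rfloor+3,\,2\lfloor L^{1/d}\rfloor+5\bigr)$ budget.

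First I would handle the factor $\lfloor N^{1/d}\rfloor^2$ through a width-efficient, constant-depth step function. Writing $k_1 = k_1^{(1)}\lfloor N^{1/d}\rfloor + k_1^{(2)}$, I build a primitive step function with $\lfloor N^{1/d}\rfloor$ plateaus by summing $\calO(\lfloor N^{1/d}\rfloor)$ saturated ReLU bumps of the form $\sigma(\alpha(x-a))-\sigma(\alpha(x-a'))$, which uses width $\calO(\lfloor N^{1/d}\rfloor)$ and one or two hidden layers; a second primitive step function of the same size, applied to the residual $x-k_1^{(1)}/\lfloor N^{1/d}\rfloor$ selected by the first, produces $k_1^{(2)}$, and a linear combination returns $k_1$. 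Total width $\calO(\lfloor N^{1/d}\rfloor)$, depth $\calO(1)$.

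To obtain the factor $\lfloor L^{2/d}\rfloor$ for $k_2$, I would spend the depth budget. A constant-width sub-network with a few hidden layers realizes a step function with $\lfloor L^{1/d}\rfloor$ plateaus, and composing $\lfloor L^{1/d}\rfloor$ such blocks serially -- each one refining the residual inside the plateau picked out by the previous one -- multiplies the resolution by $\lfloor L^{1/d}\rfloor$ per composition, yielding $\lfloor L^{2/d}\rfloor$ plateaus with depth $2\lfloor L^{1/d}\rfloor$ plus a small constant. The constant per-block width of this stage slots inside the width already allocated to the $k_1$ sub-network.

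The main obstacle is the remaining factor $\lfloor n^{1/d}\rfloor$ with $n=\lfloor\log_3(N+2)\rfloor$: this logarithmic boost is exactly the gap between the present theorem and the $N^2L^2$ rate of \cite{shijun2}, and it has to come from a bit-extraction style gadget rather than a plain width or depth count. My plan is to view the offset of $x$ within the finest plateau selected by the $k_1$-$k_2$ stages as a real number in $[0,1)$, write its base-$3$ expansion, and peel off its leading digits one at a time using ReLU gadgets of width $\calO(\lfloor N^{1/d}\rfloor)$; a width of this size is enough to implement the operation ``multiply by $3$, round down to an integer in $\{0,1,2\}$, subtract'' on values with up to $n$ ternary digits of resolution, so $\lfloor n^{1/d}\rfloor$ consecutive peelings recover $k_3$. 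The role of the margin $\delta\in(0,1/(3K)]$ is critical here: because we only need $\phi(x)=k$ on $[k/K,(k+1)/K-\delta\one_{\{k\le K-2\}}]$, at every peeling step the scaled offset stays strictly inside a single ternary cell, so each ``rounded'' value produced by a ReLU gadget is an exact integer rather than a value across a discontinuity. Once $k_1,k_2,k_3$ are computed, a single linear layer outputs $k$ by the mixed-radix formula; accounting for the three sub-networks shows that the total width is at most $8\lfloor N^{1/d}\rfloor+3$ and the total depth is at most $2\lfloor L^{1/d}\rfloor+5$, as required.
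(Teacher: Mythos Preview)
Your three–digit decomposition is a reasonable starting picture, but the architecture you describe does not hold together. Two concrete gaps:

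\textbf{Parallel versus sequential.} If the three sub-networks for $k_1,k_2,k_3$ sit in parallel, the width is the \emph{sum} of their widths, not the maximum; your width budget would be blown. In fact they cannot be parallel at all: $k_2$ is not a function of $x$ with $\lfloor L^{2/d}\rfloor$ plateaus, it is a function of the residual $x-k_1\cdot(\text{block length})$, so you must compute $k_1$ first, then feed the residual into the $k_2$ stage, and likewise for $k_3$. That forces a sequential architecture, and then the depths add rather than max. Your $k_2$ stage also does not parse: a constant-width, constant-depth ReLU block cannot realise a step function with $\lfloor L^{1/d}\rfloor$ plateaus for arbitrary $L$, so you cannot ``compose $\lfloor L^{1/d}\rfloor$ such blocks'' to get $\lfloor L^{2/d}\rfloor$ plateaus within width $O(1)$ and depth $2\lfloor L^{1/d}\rfloor$.

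\textbf{The $\lfloor n^{1/d}\rfloor$ factor.} No bit extraction is needed here. Since $n=\lfloor\log_3(N+2)\rfloor\le N$, one has $\lfloor n^{1/d}\rfloor\le\lfloor N^{1/d}\rfloor$, so a step function with $\lfloor n^{1/d}\rfloor$ coarse cells fits comfortably inside the width budget $8\lfloor N^{1/d}\rfloor+3$. The bit-extraction gadget in this paper is used in the proof of Proposition~\ref{prop:pointFitting}, not here.

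The paper's proof is considerably simpler and uses only two digits. Write $K=\tildeM\tildeL$ with $\tildeM=\lfloor N^{1/d}\rfloor^2\lfloor L^{1/d}\rfloor$ and $\tildeL=\lfloor L^{1/d}\rfloor\lfloor n^{1/d}\rfloor$, and $k=m\tildeL+\ell$. The key tool is Lemma~\ref{lem:widthPower}, which fits $N_1(N_2+1)+1$ samples with a two-hidden-layer network of widths $[2N_1,2N_2+1]$, together with Lemma~\ref{lem:wideToDeep}, which reshapes a width vector $[N,NL]$ into width $2N+2$ and depth $L+1$. Applying these with $(N_1,N_2)=(\lfloor N^{1/d}\rfloor,\,2\lfloor N^{1/d}\rfloor\lfloor L^{1/d}\rfloor-1)$ gives $\phi_1$ computing $m$ with width $8\lfloor N^{1/d}\rfloor+2$ and depth $\lfloor L^{1/d}\rfloor+1$; applying them again with $(N_1,N_2)=(\lfloor n^{1/d}\rfloor,\,2\lfloor L^{1/d}\rfloor-1)$ gives $\phi_2$ computing $\ell$ from the residual $x-m/\tildeM$ with width $8\lfloor n^{1/d}\rfloor+2\le 8\lfloor N^{1/d}\rfloor+2$ and depth $\lfloor L^{1/d}\rfloor+1$. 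Composing $\phi_1$ and $\phi_2$ sequentially (carrying $x$ and $m$ through on one extra channel) and outputting $m\tildeL+\ell$ gives width $8\lfloor N^{1/d}\rfloor+3$ and depth $2\lfloor L^{1/d}\rfloor+5$, as claimed.
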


The setting $K=\lfloor N^{1/d}\rfloor^2 \lfloor L^{1/d}\rfloor^2 \lfloor n^{1/d}\rfloor=\calO(N^{2/d}L^{2/d}n^{1/d})$ is not neat here, but it is very convenient for later use.
The construction of $\phi_2$ is a direct result of Proposition~\ref{prop:pointFitting} below, the proof of which relies on 
the bit extraction technique in   \cite{Bartlett98almostlinear}. 

\begin{proposition}
	\label{prop:pointFitting}
	Given any $\varepsilon>0$ and  arbitrary $N,L,J\in \N^+$ with $J\le N^2L^2\lfloor \log_3(N+2) \rfloor$, assume   
	${y}_j\ge 0$ for $j=0,1,\cdots,J-1$ are samples  with 
	\[|y_{j}-y_{j-1}|\le \varepsilon,\quad \tn{for $j=1,2,\cdots,J-1.$}\] 
	Then there exists $\phi\in \NN(\NNinput=1\NNspace\NNwidth\le 16N+30\NNspace\NNdepth\le 6L+10\NNspace\NNoutput=1)$ such that
	\begin{enumerate}[(i)]
		\item $|\phi(j)-{y}_j|\le \varepsilon$ for $j=0,1,\cdots,J-1$.
		\item $0\le \phi(x)\le  \max\{y_{j}:j=0,1,\cdots,J-1\}$ for any $x\in\R$.
	\end{enumerate}
\end{proposition}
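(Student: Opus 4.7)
The plan is to reduce fitting the sequence $(y_j)_{j=0}^{J-1}$ to reconstructing prefix sums of a ternary string, and then to implement that reconstruction with a ReLU network of the advertised size. First I would quantize: setting $\widetilde{y}_j := \varepsilon\lfloor y_j/\varepsilon\rfloor$ gives $|\widetilde{y}_j - y_j| < \varepsilon$ and $0 \le \widetilde{y}_j \le \max_k y_k$, while the hypothesis $|y_j - y_{j-1}| \le \varepsilon$ forces the increments $s_j := (\widetilde{y}_j - \widetilde{y}_{j-1})/\varepsilon$ to lie in $\{-1,0,1\}$. After a harmless affine shift that puts the increments into $\{0,1,2\}$, the task becomes: given a fixed ternary sequence $t_1,\dots,t_{J-1}\in\{0,1,2\}$ of length at most $N^2L^2\lfloor\log_3(N+2)\rfloor - 1$, build a ReLU network $\phi$ that recovers the partial sum $T_j := \sum_{k=1}^{j} t_k$ at every integer input $j$ within the stated width/depth budget.

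Next I would organize this string into $M := N^2 L^2$ blocks of length $n := \lfloor\log_3(N+2)\rfloor$; each block is encoded as an integer in $\{0,\dots,3^n-1\}\subseteq\{0,\dots,N+1\}$, a range that fits in a ReLU network of width $\calO(N)$. I would then build $\phi$ as the composition of four pieces: (i) an indexing subnetwork extracting from $x$ near $j$ the block index $m=\lfloor j/n\rfloor\in\{0,\dots,M-1\}$ and the within-block offset $k=j-mn\in\{0,\dots,n-1\}$, implementable with width $\calO(N)$ and depth $\calO(L)$ by the construction behind Proposition~\ref{prop:stepFunc}; (ii) a ``memory'' subnetwork mapping the block index $m$ to both the integer encoding of block $m$ and the completed-block prefix sum $T_{mn}$, realized as a continuous piecewise linear interpolant through $M$ prescribed points via a bit-extraction-style construction; (iii) an intra-block accumulator which, from the block encoding and $k$, returns $\sum_{i=1}^{k} t_{mn+i}$ -- since $3^n\le N+2$, this reduces to a shallow lookup of width $\calO(N)$ into a precomputed table; and (iv) a final summation plus a clipping step $\min\{\max\{\,\cdot\,,0\},\max_j y_j\}$, which enforces claim (ii).

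The main obstacle will be piece (ii): packing $M = N^2L^2$ prescribed integer values into a ReLU network of width at most $16N+30$ and depth at most $6L+10$. This is the regime in which digit extraction is information-theoretically tight, so the construction must multiplex $\calO(N^2)$ parallel channels per layer while re-using the depth $L$ both to enumerate storage cells and to amplify $N$ raw weights into $N^2$ useful scalar quantities. The crucial device is base-$3$ (rather than binary) encoding, which is exactly matched to the block alphabet size $3^n\le N+2$ and which distinguishes the present construction from the classical bit-extraction of earlier work; I would track additive constants carefully so that the total widths and depths of the four pieces above add to at most $16N+30$ and $6L+10$ respectively. Once the storage and extraction are in place, claim (i) follows from the quantization bound $|\widetilde{y}_j - y_j|<\varepsilon$ combined with the exact prefix-sum reconstruction at integer inputs, while claim (ii) is immediate from the final clipping.
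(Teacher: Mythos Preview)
Your quantization step, the reduction to prefix sums of increments in $\{-1,0,1\}$, and the final clipping are all correct and match the paper. The gap is in your block decomposition and hence in piece~(ii), which as stated cannot be built within the width/depth budget.

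You take $M=N^2L^2$ blocks of length $n=\lfloor\log_3(N+2)\rfloor$, so your memory subnetwork must, from the block index $m\in\{0,\dots,N^2L^2-1\}$, output the completed prefix sum $T_{mn}$. These are $N^2L^2$ essentially arbitrary non-negative integers (each can be as large as $2J$). The piecewise-linear interpolation behind Lemma~\ref{lem:widthPower} together with Lemma~\ref{lem:wideToDeep} fits only $\mathcal{O}(N^2L)$ arbitrary samples with width $\mathcal{O}(N)$ and depth $\mathcal{O}(L)$, so you are short by a factor of $L$. Invoking a ``bit-extraction-style construction'' does not close this gap: bit extraction lets a depth-$L$ network read $\mathcal{O}(Ln)$ bits out of each of $\mathcal{O}(N^2L)$ stored real numbers, but each $T_{mn}$ needs $\Theta(\log J)$ bits, and $N^2L^2\cdot\log J$ exceeds the $\mathcal{O}(N^2L^2n)$ bits you can decode. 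The paper avoids this by taking $M=N^2L$ blocks of length $\widehat{L}=Ln$ instead. Then only $N^2L$ block-starting values $a_{m,0}=\lfloor y_{m\widehat{L}}/\varepsilon\rfloor$ are stored by interpolation, which fits; the within-block work is pushed entirely into bit extraction over the $Ln$ increments of each block (Lemmas~\ref{lem:BitsWidthDepth}--\ref{lem:pointFittingOld}).

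A second, smaller point: the paper does \emph{not} use ternary encoding. The increments $b_{m,k}\in\{-1,0,1\}$ are written as $b_{m,k}=c_{m,k}-d_{m,k}$ with $c_{m,k},d_{m,k}\in\{0,1\}$, and two ordinary \emph{binary} strings of length $Ln$ are stored per block; their prefix sums are recovered separately and subtracted. The base~$3$ in the statement enters only through the width bookkeeping: the bit-extraction subnetwork of Lemma~\ref{lem:BitsWidthDepth} has width $(n+3)2^{n+1}+4$, and the choice $n=\lfloor\log_3(N+2)\rfloor$ is exactly what makes this bounded by $6N+14$.
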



\black{
\subsection{Construction of final network}
\label{sec:sketch:proof:mainGap} 
We will  discuss the construction of the final network approximating the target function with the same setting as in Section~\ref{sec:keyIdea}.  There are two main parts: 1) Construct the final network architecture based on Propositions~\ref{prop:stepFunc} and \ref{prop:pointFitting}; 2) Implement the network architectures in Propositions~\ref{prop:stepFunc} and \ref{prop:pointFitting}.

\subsubsection*{Final network architecture based on Propositions~\ref{prop:stepFunc} and \ref{prop:pointFitting}}
By the idea mentioned in Figure~\ref{fig:idea}, the final network architecture can be implemented as shown in  Figure~\ref{fig:final:net}.

\begin{figure}[!htp]        
	\centering
	\includegraphics[width=0.8\textwidth]{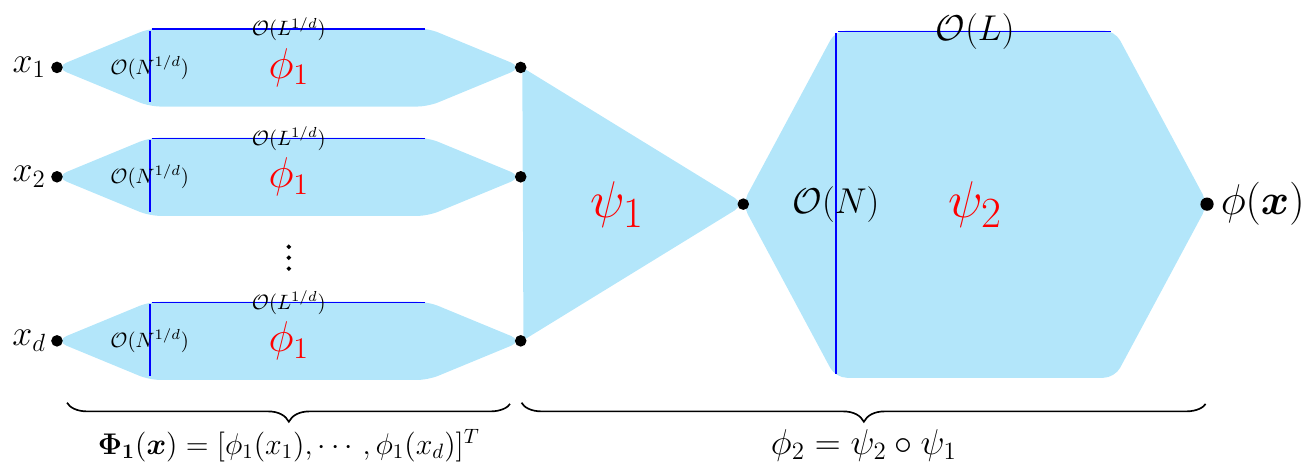}
	\caption{An illustration of the final network architecture with width $\max\{\calO(dN^{1/d}),\,\calO(N)\}$ and depth $\calO(L)$.  $\psi_1:\R^d\to \R$ is a linear function. $\phi_1$ and $\psi_2$ are implemented via Propositions~\ref{prop:stepFunc} and \ref{prop:pointFitting}, respectively.}
	\label{fig:final:net}
\end{figure}

Note that $\phi_1$ in Figure~\ref{fig:final:net} is a step function mapping $x\in [\tfrac{k}{K},\tfrac{k+1}{K}-\delta\cdot \one_{\{k\le K-2\}}]$  to $k$ for each $k\in \{0,1,\cdots,K-1\}$. It can be easily implemented via Proposition~\ref{prop:stepFunc}. Clearly, by defining $\bmPhi_1(\bmx)=\big[\phi_1(x_1),\phi_1(x_2),\cdots,\phi_1(x_d)\big]^T$, $\bmPhi_1$ maps $\bmx\in Q_\bmbeta$ to $\bmbeta$.

As shown in Figure~\ref{fig:idea}, we need to design a network to compute $\phi_2$ mapping $\bmbeta\in \{0,1,\cdots,K-1\}^d$ approximately to $f(\bmx_\bmbeta)$.  To this end, we first construct a \textbf{linear} function $\psi_1:\R^d\to \R$ mapping $\bmbeta\in \{0,1,\cdots,K-1\}^d$ to $\R$ for the purpose of  converting a $d$-dimensional point-fitting problem to a one-dimensional one, and then construct a network to compute $\psi_2$ with $\psi_2(\psi_1(\bmbeta))\approx f(\bmx_\bmbeta)$ via applying Proposition~\ref{prop:pointFitting}. Thus, we have $\phi_2(\bmbeta)\coloneqq \psi_2\circ\psi_1(\bmbeta)\approx f(\bmx_\bmbeta)$ as desired.

\subsubsection*{Network architectures in Propositions~\ref{prop:stepFunc} and \ref{prop:pointFitting}}
To prove Proposition~\ref{prop:stepFunc}, we need to construct a ReLU network with width $\calO(N^{1/d})$ and depth $\calO(L^{1/d})$  to compute a step function with $\calO\big((N^2L^2\ln N)^{1/d}\big)$ ``steps'' outside the trifling region. It is easy to construct a ReLU network with $\calO(W)$ parameters to compute a step function with $W$ ``steps'' outside a small region. As we shall see later in Section~\ref{sec:proofProp1}, the composition architecture of ReLU networks can help to implement step functions with much more ``steps''. Refer to Section~\ref{sec:proofProp1} for the detailed proof of Proposition~\ref{prop:stepFunc}.

Proposition~\ref{prop:pointFitting} essentially solves a point-fitting problem with $N^2L^2\lfloor \log_3(N+2)\rfloor$ points via a ReLU network with width $\calO(N)$ and depth $\calO(L)$. Set $M=N^2L$, $\hatL=L\lfloor \log_3(N+2)\rfloor$, and represent $j\in\{0,1,\cdots,M\hatL-1\}$ via $j=m\hatL+k$, where $m\in\{0,1,\cdots,M-1\}$ and $k\in\{0,1,\cdots,\hatL-1\}$. 

Define $a_{m,k}\coloneqq\lfloor y_{m,k}/\varepsilon\rfloor$ where $y_{m,k}=y_{m\hatL+k}$. Then 
\begin{equation*}
    |a_{m,k}\varepsilon -y_{m,k}|
=\big|\lfloor y_{m,k}/\varepsilon\rfloor \varepsilon- y_{m,k}\big|\le \varepsilon.
\end{equation*}
It suffices to prove $\phi(m,k)=a_{m,k}$. The assumption $|y_j-y_{j-1}|\le  \varepsilon$ implies that $b_{m,k}\coloneqq a_{m,k}-a_{m,k-1}\in \{-1,0,1\}$. Thus, there exist $c_{m,k}\in \{0,1\}$ and $d_{m,k}\in\{0,1\}$ such that $b_{m,k}=c_{m,k}-d_{m,k}$.

Note that 
\begin{equation*}
    a_{m,k}=a_{m,0}+\sum_{j=1}^k(a_{m,j}-a_{m,j-1})=a_{m,0}+\sum_{j=1}^k  b_{m,j}=a_{m,0}+\sum_{j=1}^k c_{m,j}-\sum_{j=1}^k d_{m,j}.
\end{equation*}

It is easy to construct a ReLU network with width $\calO(N)$ and depth $\calO(L)$ ($\calO(N^2L)$ parameters in total)  to compute $\phi_1$ such that $\phi_1(m)=a_{m,0}$ for each $m\in\{0,1,\cdots,M-1\}$ with $M=N^2L$. By the bit extraction technique in \cite{Bartlett98almostlinear}, one could construct $\phi_2,\phi_3\in \NN(\NNwidth\le \calO(N)\NNspace\NNdepth\le \calO(L))$
 such that $\phi_2(m,k)=\sum_{j=1}^k c_{m,j}$ and $\phi_3(m,k)=\sum_{j=1}^k d_{m,j}$. Thus, $\phi(m,k)\coloneqq \phi_1(m)+\phi_2(m,k)-\phi_3(m,k)=a_{m,k}$ as desired.
 
 In order to use the bit extraction technique (two types of bits $0$ or $1$) to solve the point-fitting problem, we essentially simplify the target as discussed above. That is, 
 \begin{equation*}
     \begin{split}
         \tn{non-negative number $y_{m,k}$} 
         &\longrightarrow \tn{integer $a_{m,k}=\lfloor y_{m,k}/\varepsilon\rfloor \mathop{\approx}^{\varepsilon} y_{m,k}$}\\
         &\longrightarrow 
         \tn{ $b_{m,k}=a_{m,k}-a_{m,k-1}\in\{-1,0,1\} $}\\
         &\longrightarrow \tn{ $b_{m,k}=c_{m,k}-d_{m,k}$ with $c_{m,k},d_{m,k}\in\{0,1\} $}.
     \end{split}
 \end{equation*}
 The detailed proof of Proposition~\ref{prop:pointFitting} can be found in Section~\ref{sec:proofProp2}.
 }

\subsection{Detailed proof}
\label{sec:detaied:proof:mainGap}

We essentially construct an almost piecewise constant function implemented by a ReLU network with width $\calO(N)$ and depth $\calO(L)$ to approximate $f$. \black{We may assume $f$ is not a constant function since it is a trivial case. Then $\omega_f(r)>0$ for any $r>0$.}
It is clear that $|f(\bm{x})-f(\bmzero)|\le \omega_f(\sqrt{d})$ for any ${\bm{x}}\in [0,1]^d$. Define $\tildef\coloneqq  f-f(\bmzero)+\omega_f(\sqrt{d})$, then $0\le \tildef (\bm{x}) \le 2\omega_f(\sqrt{d})$ for any ${\bm{x}}\in [0,1]^d$. 

Let $M=N^2L$,  $n=\lfloor \log_3(N+2) \rfloor$, $K=\lfloor N^{1/d}\rfloor^2 \lfloor L^{1/d}\rfloor^2 \lfloor n^{1/d} \rfloor$, and $\delta$ be an arbitrary number in $(0,\tfrac{1}{3K}]$.
The proof can be divided into four steps as follows:
\begin{enumerate}
	\item Normalize $f$ as $\tildef$, divide $[0,1]^d$ into a union of sub-cubes $\{Q_{\bm{\beta}}\}_{\bm{\beta}\in \{0,1,\cdots,K-1\}^d}$ and the trifling region $\Omega([0,1]^d,K,\delta)$, and denote $\bmx_\bmbeta$ as the vertex of $Q_\bmbeta$ with minimum $\|\cdot\|_1$ norm;
	
	\item Construct a sub-network to implement a vector function $\bmPhi_1$ projecting the whole cube $ Q_\bmbeta$ to the $d$-dimensional index $\bmbeta$ for each $\bmbeta$, i.e., $\bmPhi_1(\bmx)=\bmbeta$ for all $\bmx\in Q_\bmbeta$;
	
	\item Construct a sub-network to implement a  function $\phi_2$  mapping the index $\bmbeta$ approximately to $\tildef(\bmx_\bmbeta)$. This core step can be further divided into three sub-steps:	
	\begin{enumerate}
		\item[3.1.] Construct a sub-network to implement $\psi_1$ bijectively mapping the index set $\{0,1,\cdots,K-1\}^d$ to an auxiliary set $\mathcal{A}_1\subseteq \big\{\tfrac{j}{2K^d}:j=0,1,\cdots,2K^d\big\}$ defined later (see Figure~\ref{fig:g+A12} for an illustration);
		
		\item[3.2.] Determine a continuous piecewise linear function $g$ with  a set of breakpoints $\calA_1\cup\calA_2\cup\{1\}$  satisfying: 1) assign the values of $g$ at breakpoints in $\calA_1$ based on $\{\tildef(\bmx_\bmbeta)\}_\bmbeta$, i.e., $g\circ \psi_1(\bmbeta)=\tildef(\bmx_\bmbeta)$; 2) assign the values of $g$ at breakpoints in $\calA_2\cup\{1\}$ to reduce the variation of $g$  for applying Proposition~\ref{prop:pointFitting};
		
		\item[3.3.] Apply Proposition~\ref{prop:pointFitting} to construct a sub-network to implement a function $\psi_2$ approximating $g$ well on $\calA_1\cup\calA_2\cup\{1\}$. Then the desired function $\phi_2$ is given by  $\phi_2=\psi_2\circ\psi_1$ satisfying $\phi_2(\bmbeta)=\psi_2\circ\psi_1(\bmbeta)\approx g\circ\psi_1(\bmbeta)=\tildef(\bmx_\bmbeta)$;
	\end{enumerate}
	
	\item Construct the final  network to implement the desired function $\phi$ such that $\phi(\bmx)= \phi_2 \circ \bmPhi_1(\bmx) +f(\bmzero)-\omega_f(\sqrt{d})\approx \tildef(\bmx_\bmbeta) +f(\bmzero)-\omega_f(\sqrt{d}) = f(\bmx_\bmbeta)\approx f(\bmx)$ for any $\bmx\in Q_\bmbeta$ and $\bmbeta\in \{0,1,\cdots,K-1\}^d$.
\end{enumerate}    

The details of these steps can be found below.
\mystep{1}{Divide $[0,1]^d$ into  $\{Q_{\bm{\beta}}\}_{\bm{\beta}\in \{0,1,\cdots,K-1\}^d}$ and  $\Omega([0,1]^d,K,\delta)$.}

Define  $\bmx_\bmbeta \coloneqq \bmbeta/K$ and 
\[
Q_{\bm{\beta}}\coloneqq\Big\{{\bm{x}}= [x_1,x_2,\cdots,x_d]^T\in [0,1]^d:x_i\in[\tfrac{\beta_i}{K},\tfrac{\beta_i+1}{K}-\delta\cdot \one_{\{\beta_i\le K-2\}}], \quad i=1,2,\cdots,d\Big\}
\]
for each $d$-dimensional index  ${\bm{\beta}}= [\beta_1,\beta_2,\cdots,\beta_d]^T\in \{0,1,\cdots,K-1\}^d$. Recall that $\Omega([0,1]^d,K,\delta)$ is the trifling region defined in Equation~\eqref{eq:triflingRegionDef}. Apparently, $\bmx_\bmbeta$ is the vertex of $Q_\bmbeta$ with minimum $\|\cdot\|_1$ norm and 
\[[0,1]^d= \big(\cup_{\bm{\beta}\in \{0,1,\cdots,K-1\}^d}Q_{\bm{\beta}}\big)\bigcup \Omega([0,1]^d,K,\delta).\]
See Figure~\ref{fig:Q+TR} for illustrations.

\begin{figure}[!htp]
	\centering
	\begin{minipage}{0.8\textwidth}
		\centering
		\begin{subfigure}[b]{0.35\textwidth}
			\centering
			\includegraphics[width=0.999\textwidth]{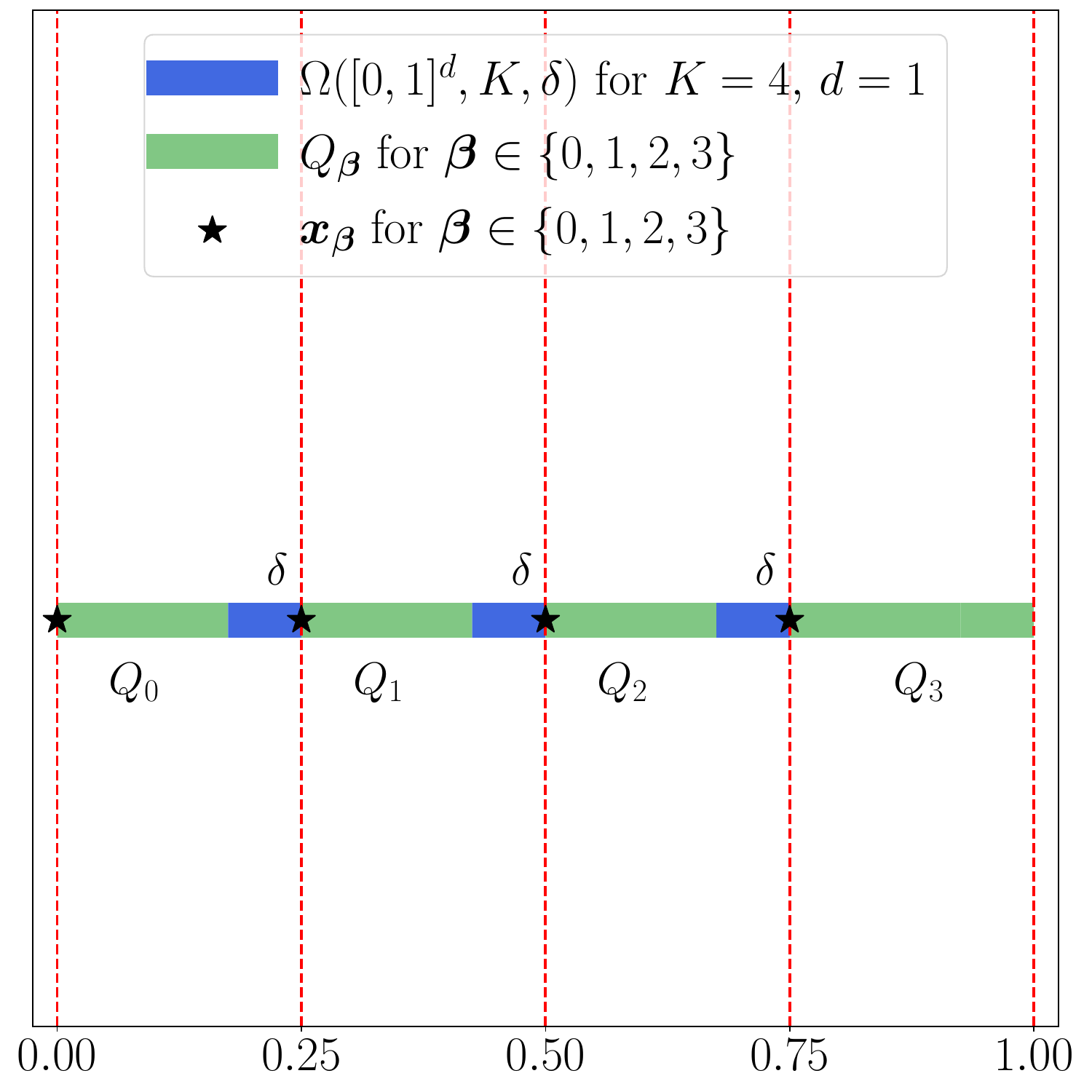}
			\subcaption{}
		\end{subfigure}
			\begin{minipage}{0.064\textwidth}
				\
			\end{minipage}
		\begin{subfigure}[b]{0.35\textwidth}
			\centering
			\includegraphics[width=0.999\textwidth]{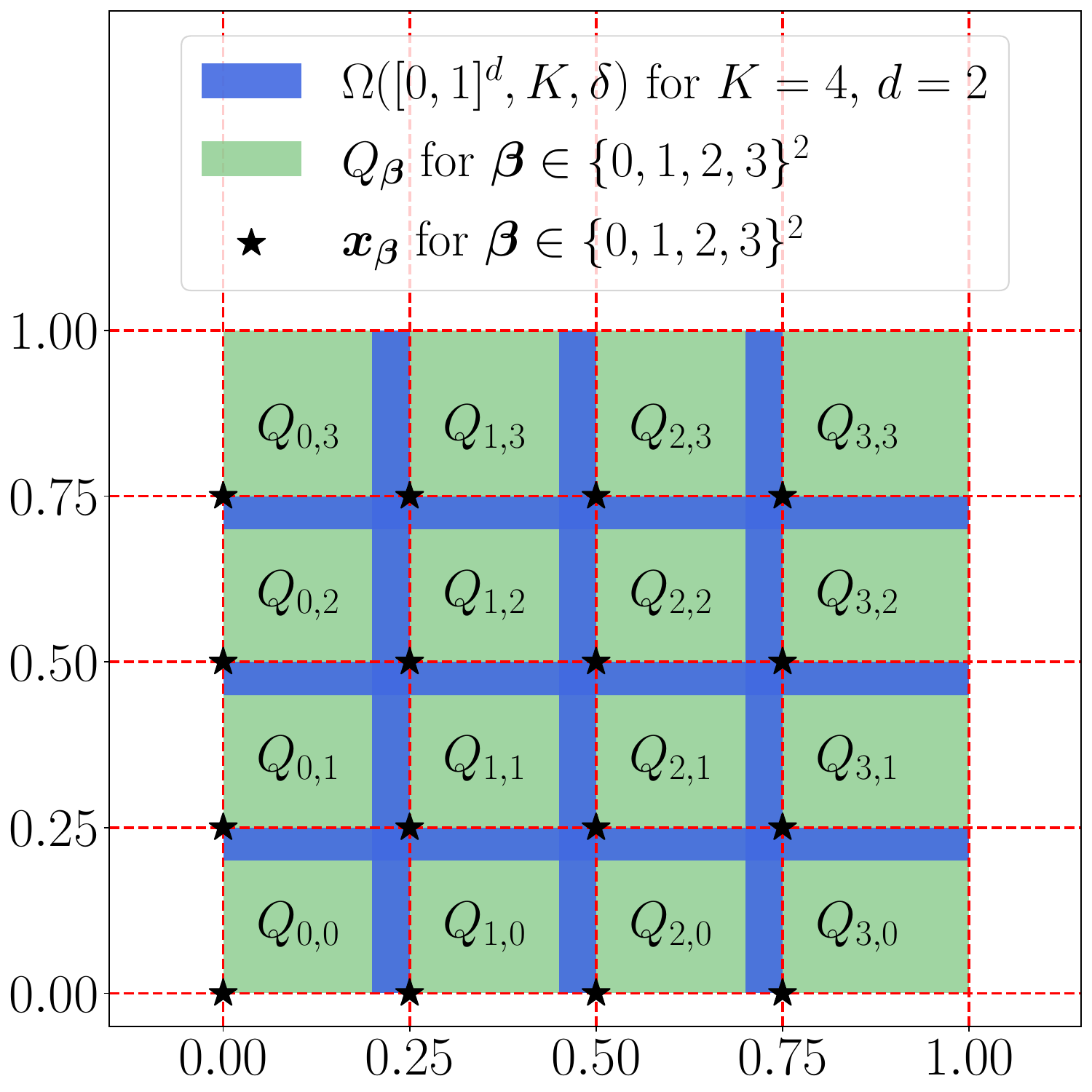}
			\subcaption{}
		\end{subfigure}
	\end{minipage}
	\caption{Illustrations of  $\Omega([0,1]^d,K,\delta)$,  $Q_\bmbeta$, and $\bmx_\bmbeta$ for $\bmbeta\in \{0,1,\cdots,K-1\}^d$. (a) $K=4$ and $d=1$. (b) $K=4$ and $d=2$. }
	\label{fig:Q+TR}
\end{figure}

\mystep{2}{Construct $\bmPhi_1$ mapping $\bmx\in Q_\bmbeta$ to  $\bmbeta$.}

By Proposition~\ref{prop:stepFunc}, there exists $\phi_1\in \NNF(
\NNwidth \le 8\lfloor N^{1/d}\rfloor +3\NNspace\NNdepth\le 2\lfloor L^{1/d}\rfloor+5)$ such that
\begin{equation*}
\phi_1(x)=\black{k},\quad \tn{if $x\in [\tfrac{k}{K},\tfrac{k+1}{K}-\delta\cdot \one_{\{k\le K-2\}}]$,\quad for $k=0,1,\cdots,K-1$.}
\end{equation*}    
It follows that $\phi_1(x_i)=\beta_i$ if $\bmx=[x_1,x_2,\cdots,x_d]^T\in Q_\bmbeta$ for each $\bmbeta=[\beta_1,\beta_2,\cdots,\beta_d]^T$.

By defining
\begin{equation*}
\bmPhi_1(\bmx)\coloneqq \big[\phi_1(x_1),\,\phi_1(x_2),\,\cdots,\,\phi_1(x_d)\big]^T,\quad \tn{for any } \bmx=[x_1,x_2,\cdots,x_d]^T\in \R^d,
\end{equation*}
we have $\bmPhi_1(\bmx)=\bmbeta$ if $\bmx\in Q_\bmbeta$ for each $\bmbeta\in \{0,1,\cdots,K-1\}^d$.

\mystep{3}{Construct $\phi_2$ mapping $\bmbeta$ approximately to $\tildef(\bmx_\bmbeta)$.}

The construction of the sub-network implementing $\phi_2$ is essentially based on Proposition~\ref{prop:pointFitting}. 
To meet the requirements of applying Proposition~\ref{prop:pointFitting}, we first define two auxiliary sets $\calA_1$ and $\calA_2$ as 
\begin{equation*}
\calA_1\coloneqq \big\{\tfrac{i}{K^{d-1}}+\tfrac{k}{2K^d}:i=0,1,\cdots,K^{d-1}\black{-1}\tn{\quad and \quad} k=0,1,\cdots,K-1\big\}
\end{equation*}
and 
\begin{equation*}
\calA_2\coloneqq \big\{\tfrac{i}{K^{d-1}}+\black{\tfrac{K+k}{2K^d}}:i=0,1,\cdots,K^{d-1}\black{-1}\tn{\quad and \quad}k=0,1,\cdots,K-1\big\}.
\end{equation*}
Clearly, $\calA_1\cup\calA_2\cup\{1\}=\{\tfrac{j}{2K^d}:j=0,1,\cdots,2K^d\}$ and $\calA_1\cap\calA_2=\emptyset$. See Figure~\ref{fig:Q+TR} for an illustration of $\calA_1$ and $\calA_2$. Next, we further divide this step into three sub-steps.

\mystep{3.1}{Construct $\psi_1$ bijectively mapping  $\{0,1,\cdots,K-1\}^d$ to $\mathcal{A}_1$.}

Inspired by the binary representation, we define
\begin{equation}
\psi_1(\bmx)\coloneqq \frac{x_d}{2K^d}+\sum_{i=1}^{d-1}\frac{x_i}{K^i},\quad \tn{for any $\bmx=[x_1,x_2,\cdots,x_d]^T\in \R^d$.}
\end{equation}
Then $\psi_1$ is a linear function bijectively mapping the index set $\{0,1,\cdots,K-1\}^d$ to
\begin{equation*}
\begin{split}
&\quad \Big\{\tfrac{\beta_d}{2K^d}+\sum_{i=1}^{d-1}\tfrac{\beta_i}{K^i}:\bm{\beta}\in \{0,1,\cdots,K-1\}^d\Big\}\\
&=\big\{\tfrac{i}{K^{d-1}}+\tfrac{k}{2K^d}:i=0,1,\cdots,K^{d-1}\black{-1}\tn{\quad and\quad } k=0,1,\cdots,K-1\big\}=\calA_1.
\end{split}
\end{equation*}

\mystep{3.2}{Construct $g$ to satisfy $g\circ\psi_1(\bmbeta)=\tildef(\bmx_\bmbeta)$ and to meet the requirements of applying Proposition~\ref{prop:pointFitting}.}

Let $g:[0,1]\to \R$ be a continuous piecewise linear function with a set of breakpoints $\left\{\tfrac{j}{2K^d}: j=0,1,\cdots,2K^d\right\}=\calA_1\cup\calA_2\cup\{1\}$ and the values of $g$ at these breakpoints satisfy the following properties:

\begin{itemize}
	\item The values of $g$ at the breakpoints in $\mathcal{A}_1=\big\{\psi_1(\bmbeta):\bm{\beta}\in \{0,1,\cdots,K-1\}^d\big\}$ are set as
	\begin{equation}
	\label{eq:ftog}
	g(\psi_1(\bmbeta))=\tildef(\bmx_\bmbeta),\quad \tn{for any  $\bm{\beta}\in \{0,1,\cdots,K-1\}^d$;}
	\end{equation} 
	
	\item At the breakpoint $1$, let $g(1)=\tildef(\bm{1})$, where $\bm{1}=[1,1,\cdots,1]^T\in \R^d$;
	
	\item 
	The values of $g$ at the breakpoints in $\mathcal{A}_2$ are assigned to reduce the variation of $g$, which is a requirement of applying Proposition~\ref{prop:pointFitting}. Note that 
	\begin{equation*}
	\big\{\tfrac{i}{K^{d-1}}-\tfrac{K+1}{2K^d},\ \tfrac{i}{K^{d-1}}\big\}\subseteq\calA_1\cup\{1\},\quad \tn{for $i=1,2,\cdots,K^{d-1}$,}
	\end{equation*}
	implying the values of $g$ at $\tfrac{i}{K^{d-1}}-\tfrac{K+1}{2K^d}$ and $\tfrac{i}{K^{d-1}}$ have been assigned for $i=1,2,\cdots,K^{d-1}$. Thus, the values of $g$ at the breakpoints in $\calA_2$ can be successfully assigned by letting $g$ linear on each interval 
	$[\tfrac{i}{K^{d-1}}-\tfrac{K+1}{2K^d},\, \tfrac{i}{K^{d-1}}]$ for $i=1,2,\cdots,K^{d-1}$, since $\calA_2\subseteq \bigcup_{i=1}^{K^{d-1}}[\tfrac{i}{K^{d-1}}-\tfrac{K+1}{2K^d},\, \tfrac{i}{K^{d-1}}]$. See Figure~\ref{fig:g+A12} for an illustration.
\end{itemize}
\begin{figure}
	\centering
	\includegraphics[width=0.8\textwidth]{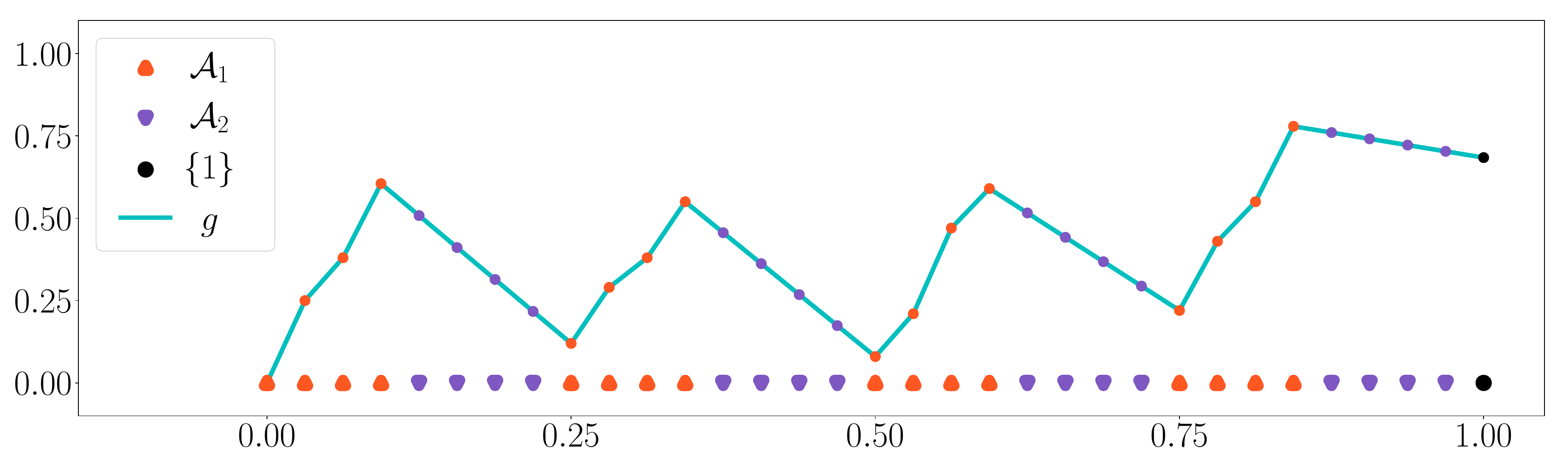}
	\caption[An illustration of $\mathcal{A}_1$, $\mathcal{A}_2$, $\{1\}$, and $g$ for $d=2$ and $K=4$]{An illustration of $\mathcal{A}_1$, $\mathcal{A}_2$, $\{1\}$, and $g$ for $d=2$ and $K=4$.}
	\label{fig:g+A12}
\end{figure}

Apparently, such a function $g$ exists (see Figure~\ref{fig:g+A12} for an example) and satisfies
\begin{equation*}
\label{eq:gErrorEstimation}
\left|g(\tfrac{j}{2K^d})-g(\tfrac{j-1}{2K^d})\right|\le \max\big\{\omega_f(\tfrac{1}{K}),\omega_f({\sqrt{d}})/K\big\}\le  \omega_f(\tfrac{\sqrt{d}}{K}),\quad \tn{for } j=1,2,\cdots,2K^d,
\end{equation*} 
and
\begin{equation*}
0\le g(\tfrac{j}{2K^d})\le 2\omega_f(\sqrt{d}), \quad \tn{for}\ j=0,1,\cdots,2K^d.
\end{equation*}

\mystep{3.3}{Construct $\psi_2$ approximating $g$ well on $\calA_1\cup\calA_2\cup\{1\}$.}

Note that 
\begin{equation*}
	2K^d=2 \big(\lfloor N^{1/d}\rfloor^2 \lfloor L^{1/d}\rfloor^2 \lfloor n^{1/d}\rfloor\big)^d\le 2\big(N^2L^2n\big)
	\le {N}^2\lceil \sqrt{2}L\rceil^2\lfloor \log_3(N+2)\rfloor.
\end{equation*}
By Proposition~\ref{prop:pointFitting} \black{(set $y_j=g(\tfrac{j}{2K^2})$ and $\varepsilon=\omega_f(\tfrac{\sqrt{d}}{K})>0$ therein)}, there exists 
\begin{equation*}
	\begin{split}
		\tildepsi_2  \in \NNF(\NNinput=1\NNspace\NNwidth\le 16N+30\NNspace\NNdepth\le 6\lceil \sqrt{2}L\rceil+10\NNspace\NNoutput=1)
	\end{split}
\end{equation*}
such that
\begin{equation*}
|\tildepsi_2(j)-g(\tfrac{j}{2K^d})|\le \omega_f(\tfrac{\sqrt{d}}{K}),\quad  \tn{for } j=0,1,\cdots,2K^d-1,
\end{equation*}
and 
\begin{equation*}
\begin{split}
0\le \tildepsi_2(x) \le  \max\{g(\tfrac{j}{2K^d}):j=0,1,\cdots,2K^d-1\}\le 2\omega_f(\sqrt{d}), \quad \tn{for any $x\in\R$.}
\end{split}
\end{equation*}

By defining $\psi_2(x)\coloneqq \tildepsi_2(2K^dx)$ for any $x\in \R$, we have $\psi_2\in \NNF(\NNinput=1\NNspace\NNwidth\le 16N+30\NNspace\NNdepth\le 6\lceil \sqrt{2}L\rceil+10\NNspace\NNoutput=1)$,
\begin{equation}
\label{eq:phi3tUB}
\begin{split}
0\le \psi_2(x)=\tildepsi_2(2K^dx) \le 2\omega_f(\sqrt{d}), \quad \tn{for any } x\in\R,
\end{split}
\end{equation}
and 
\begin{equation}
\label{eq:phi1Minusg}
|\psi_2(\tfrac{j}{2K^d})-g(\tfrac{j}{2K^d})|=|\tildepsi_2(j)-g(\tfrac{j}{2K^d})|\le \omega_f(\tfrac{\sqrt{d}}{K}), \quad \tn{for $ j=0,1,\cdots,2K^d-1.$}
\end{equation}

Let us end Step $3$ by defining the desired function $\phi_2$ as 
$\phi_2\coloneqq \psi_2\circ \psi_1$. Note that $\psi_1:\R^d\to\R $ is a linear function and $\psi_2\in \NNF(\NNinput=1\NNspace\NNwidth\le 16N+30\NNspace\NNdepth\le 6\lceil \sqrt{2}L\rceil+10\NNspace\NNoutput=1)$. Thus, $\phi_2\in \NNF(\NNinput=1\NNspace\NNwidth\le 16N+30\NNspace\NNdepth\le 6\lceil \sqrt{2}L\rceil+10\NNspace\NNoutput=1)$.
By Equations~\eqref{eq:ftog} and \eqref{eq:phi1Minusg}, we have
\begin{equation}
\label{eq:phi2-tildef}
\begin{split}
|\phi_2(\bmbeta)-\tildef(\bmx_\bmbeta)|
=\left|\psi_2(\psi_1(\bmbeta))-g(\psi_1(\bmbeta))\right|\le \omega_f(\tfrac{\sqrt{d}}{K}),
\end{split}
\end{equation}
for any $\bmbeta\in\{0,1,\cdots,K-1\}^d$.
Equation~\eqref{eq:phi3tUB} and $\phi_2= \psi_2\circ \psi_1$ implies 
\begin{equation}
\label{eq:phi2tUB}
\begin{split}
0\le \phi_2(\bmx)\le 2\omega_f(\sqrt{d}), \quad \tn{for any } \bmx\in\R^d.
\end{split}
\end{equation}

\mystep{4}{Construct the final network to implement the desired function $\phi$.}

Define $\phi\coloneqq \phi_2\circ\bmPhi_1+f(\bmzero)-\omega_f(\sqrt{d})$. 
Since $\phi_1 \in\NNF(\NNwidth\le 8\lfloor N^{1/d}\rfloor+3\NNspace\NNdepth\le 2\lfloor L^{1/d}\rfloor+5])$, we have $\bmPhi_1\in \NNF(\NNinput=d\NNspace\NNwidth\le8d\lfloor N^{1/d}\rfloor+3d\NNspace\NNdepth\le 2L+5\NNspace\NNoutput=d)$. 
It follows from the fact $\lceil \sqrt{2}L\rceil\le \lceil \tfrac32L\rceil\le \tfrac32L+\tfrac12$ that  $6\lceil \sqrt{2}L\rceil+10\le9L+13$,  implying
\begin{equation*}
	\begin{split}
		\phi_2 &\in\NNF(\NNinput=1\NNspace\NNwidth\le 16N+30\NNspace\NNdepth\le 6\lceil \sqrt{2}L\rceil+10\NNspace\NNoutput=1)\\
		&\subseteq \NNF(\NNinput=1\NNspace\NNwidth\le 16N+30\NNspace\NNdepth\le 9L+13\NNspace\NNoutput=1).
	\end{split}
\end{equation*}
Thus, $\phi=\phi_2 \circ \bmPhi_1+f(\bmzero)-\omega_f(\sqrt{d})$ is in 
\begin{equation*}
\begin{split}
\NN\big(\NNwidth\le\max\{8d\lfloor N^{1/d}\rfloor+3d, 16N+30\}\NNspace\NNdepth\le (2L+5)+(9L+13)= 11L+18\big).
\end{split}
\end{equation*}

Now let us estimate the approximation error.
Note that $f=\tildef+f(\bmzero)-\omega_f(\sqrt{d})$. By Equation~\eqref{eq:phi2-tildef}, for any $\bmx\in Q_\bmbeta$ and $\bmbeta\in \{0,1,\cdots,K-1\}^d$, we have
\begin{equation*}
\begin{split}
|f(\bmx)-\phi(\bmx)|&=|\tildef(\bmx)-\phi_2(\bmPhi_1(\bmx))|=|\tildef(\bmx)-\phi_2(\bmbeta)|\\
&\le |\tildef(\bmx)-\tildef(\bmx_\bmbeta)|+|\tildef(\bmx_\bmbeta)-\phi_2(\bmbeta)|\\
&\le \omega_f(\tfrac{\sqrt{d}}{K})+\omega_f(\tfrac{\sqrt{d}}{K})\le 2\omega_f\Big(64\sqrt{d}\big(N^2L^2\log_3(N+2)\big)^{-1/d}\Big),
\end{split}
\end{equation*}
where the last inequality comes from the fact 
\begin{equation*}
	\begin{split}
		K=\lfloor N^{1/d}\rfloor^2\lfloor L^{1/d}\rfloor^2 \lfloor n^{1/d}\rfloor
		&\ge \tfrac{N^{2/d}L^{2/d}n^{1/d}}{32}=\tfrac{N^{2/d}L^{2/d}\lfloor \log_3(N+2)\rfloor^{1/d}}{32}\ge \tfrac{(N^{2}L^{2}\log_3(N+2))^{1/d}}{64},
	\end{split}
\end{equation*}
for any $N,L\in \N^+$. Recall the fact $\omega_f(j\cdot r)\le j\cdot\omega_f(r)$ for any $j\in\N^+$ and $r\in [0,\infty)$. Therefore, for any $\bmx\in \bigcup_{\bm{\beta}\in \{0,1,\cdots,K-1\}^d} Q_\bmbeta\black{=} [0,1]^d\backslash \Omega([0,1]^d,K,\delta)$, we have
\begin{equation*}
\begin{split}
|f(\bmx)-\phi(\bmx)|
&\le 2\omega_f\Big(64\sqrt{d}\big(N^2L^2\log_3(N+2)\big)^{-1/d}\Big)\\
&\le 2\Big\lceil 64\sqrt{d}\Big\rceil \omega_f\Big(\big(N^2L^2\log_3(N+2)\big)^{-1/d}\Big)\\
&\le 130\sqrt{d}\,\omega_f\Big(\big(N^2L^2 \log_3(N+2)\big)^{-1/d}\Big). 
\end{split}
\end{equation*}

It remains to show the upper bound of $\phi$. By Equation~\eqref{eq:phi2tUB} and  $\phi= \phi_2\circ\bmPhi_1+f(\bmzero)-\omega_f(\sqrt{d})$, it holds that
$ \|\phi\|_{L^\infty(\R^d)}\le |f(\bmzero)|+ \omega_f(\sqrt{d})$. 
Thus, we finish the proof.

\section{Proofs of propositions in Section~\ref{sec:keyIdea} }\label{sec:proof:props}

In this section, we will prove Propositions~\ref{prop:stepFunc} and \ref{prop:pointFitting}. We first introduce several basic results of ReLU networks. Next, we prove these two propositions  based on these basic results.
\subsection{Basic results of ReLU networks}\label{sec:basicReLU}

To simplify the proofs of two propositions in Section~\ref{sec:keyIdea}, we introduce three lemmas below, which are basic results of ReLU networks

\begin{lemma}
	\label{lem:widthPower}
	For any $N_1,N_2\in \N^+$, given $N_1(N_2+1)+1$ samples $(x_i,y_i)\in \R^2$ with $x_0<x_1<\cdots<x_{N_1(N_2+1)}$ and  $y_i\ge 0$ for $i=0,1,\cdots,N_1(N_2+1)$,
	there exists $\phi\in \NN(\NNinput=1\NNspace\NNwidthvec=[2N_1,2N_2+1]\NNspace\NNoutput=1)$ satisfying the following conditions.
	\begin{enumerate}[(i)]
		\item $\phi(x_i)=y_i$ for $i=0,1,\cdots,N_1(N_2+1)$.
		\item $\phi$ is linear on each interval $[x_{i-1},x_{i}]$ for $i\notin \{(N_2+1)j:j=1,2,\cdots,N_1\} $.
	\end{enumerate}
\end{lemma}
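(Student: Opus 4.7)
The plan is to construct $\phi$ as a continuous piecewise-linear function with breakpoints at all interior sample abscissas $x_i$ (so the ``free'' intervals allowed by condition (ii) are in fact also taken as single linear pieces), and to realize it by an explicit ReLU network with the prescribed widths. I would first partition the indices into $N_1$ ``blocks,'' with block $j$ spanning $i=(N_2+1)(j-1),\ldots,(N_2+1)j$, noting that the sub-intervals not constrained by (ii) are exactly the last sub-interval of each block. A budget count suggests feasibility: each of the $2N_2+1$ second-layer neurons has the form $\sigma(p(x))$ with $p$ a linear combination of the first layer's $2N_1$ outputs, so the composition can acquire up to $\mathcal{O}(N_1)$ breakpoints per neuron (the breakpoints of $p$ together with its zero-crossings) and $\mathcal{O}(N_1N_2)$ in total, matching the $N_1(N_2+1)-1$ interior breakpoints needed in $\phi$.

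For the first hidden layer of width $2N_1$, I would place the ReLU thresholds at the block-boundary abscissas $x_{(N_2+1)j}$ for $j=0,\ldots,N_1$ (with multiplicities chosen to exhaust the $2N_1$ features). Then every linear combination of first-layer outputs is a CPwL whose breakpoints are confined to the block boundaries, so it is affine on every block but free to change slope and intercept from block to block. For the second hidden layer of width $2N_2+1$, I would pair up the neurons so that for each $k=1,\ldots,N_2$ the pair $(\eta_k^+,\eta_k^-)$ plants breakpoints of $\phi$ at $\{x_{(N_2+1)(j-1)+k}\}_{j=1}^{N_1}$ with per-block tunable slope changes: the two paired neurons together provide enough freedom to pick both the zero-crossing locations inside each block and the slope-change magnitudes produced there. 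The single remaining second-layer neuron, together with the output layer's affine skeleton, supplies the global linear trend and enforces continuity across block boundaries.

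The main obstacle will be showing that the $N_1(N_2+1)+1$ interpolation equations admit a simultaneous solution with the available parameters, given that the first-layer features are shared across all second-layer neurons. The plan is to handle this block by block: the restriction of the network to block $j$ is effectively a one-hidden-layer width-$(2N_2+1)$ ReLU sub-network in the block-local affine coordinate supplied by the first layer, and such a sub-network readily interpolates any $\le 2N_2+2$ prescribed values---in particular the $N_2+2$ samples in block $j$. The delicate step will be reconciling these per-block constructions through the shared first-layer features and the output-layer skeleton so that global continuity and all interpolation constraints hold at once; I expect the bulk of the technical work to be an explicit, somewhat combinatorial choice of weights rather than a pure dimension count, possibly streamlined by induction on $N_1$ with the trivial case $N_1=1$ (where the first layer is essentially a pass-through and the second layer alone interpolates $N_2+2 \le 2N_2+2$ points) serving as the base.
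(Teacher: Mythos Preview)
The paper itself does not prove this lemma; it simply cites it as part of Theorem~3.2 in \cite{shijun:thesis} (equivalently Lemma~2.2 in \cite{shijun1}), so there is no in-text argument to compare against.

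That said, your sketch has a concrete obstruction at precisely the step you flag as delicate. With all first-layer ReLU thresholds placed at the block-boundary abscissas $x_{(N_2+1)j}$, every second-layer pre-activation $p$ is continuous piecewise linear with breakpoints confined to those $N_1+1$ points; restricted to $[x_0,x_{N_1(N_2+1)}]$ such functions form an $(N_1+1)$-dimensional space (pinned down by their values at the boundary points). Forcing $p$ to vanish at the $N_1$ prescribed interior abscissas $\{x_{(N_2+1)(j-1)+k}\}_{j=1}^{N_1}$ imposes $N_1$ independent linear conditions, leaving only a \emph{one}-dimensional family of admissible $p$. Explicitly, writing $a_j$ for the slope of $p$ on block $j$, $L_j$ for the block length, and $\Delta_{j,k}=x_{(N_2+1)(j-1)+k}-x_{(N_2+1)(j-1)}$, continuity at $x_{(N_2+1)j}$ combined with the two adjacent zero conditions forces the recursion
\[
a_{j+1}=-\,a_j\,\frac{L_j-\Delta_{j,k}}{\Delta_{j+1,k}},
\]
so all slopes are determined by $a_1$. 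Consequently the slope-change vector $(\pm a_1,\ldots,\pm a_{N_1})$ that $\sigma(p)$ contributes at those $N_1$ breakpoints lies on a fixed line, no matter how many neurons with this same zero set you pair up. For $N_1\ge 2$ and generic $y_i$ the required slope changes at $\{x_{(N_2+1)(j-1)+k}\}_j$ are $N_1$ independent numbers and will not lie on that line, so the pairing mechanism as described cannot produce them. Letting the partner neuron have different zeros does not help either: any zero outside a block-ending interval creates an illegal breakpoint violating (ii), and zeros inside block-ending intervals contribute nothing at the target abscissas.

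Your per-block observation---that on each block the network restricts to a one-hidden-layer width-$(2N_2+1)$ ReLU subnetwork in a block-local affine coordinate---is correct and is essentially the entry point of the cited construction. What fails is the specific allocation scheme (first-layer thresholds only at block boundaries, second-layer neurons indexed by interior position $k$); the cited proof instead uses the $2N_1$ first-layer neurons to give each block its own localized input channel, allowing the $N_1$ per-block interpolants (each with $\le N_2+2$ nodes, well within the $2N_2+1$ budget) to share the second layer without the cross-block coupling that defeats your scheme.
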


\begin{lemma}
	\label{lem:wideToDeep}
	Given any $N,L,d\in \N^+$, it holds that \begin{equation*}
		\begin{split}
			&\quad \, \NN(\NNinput=d\NNspace\NNwidthvec=[N,NL]\NNspace\NNoutput=1)\\
			&\subseteq \NN(\NNinput=d\NNspace\NNwidth\le 2N+2\NNspace \NNdepth\le L+1\NNspace\NNoutput=1).
		\end{split}
	\end{equation*}
\end{lemma}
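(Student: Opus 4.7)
The plan is to realize any function in $\NN(\NNinput=d\NNspace\NNwidthvec=[N,NL]\NNspace\NNoutput=1)$ by a deeper, narrower network that accumulates the output sum across $L$ stages. Any such $\phi$ can be written as
\begin{equation*}
\phi(\bmx) \;=\; b + \sum_{j=1}^{NL} a_j\,\sigma\!\big(\bm{w}_j^T \bmh_1 + c_j\big), \qquad \bmh_1 \;=\; \sigma(\bmW_0\bmx+\bmb_0)\in\R^N,
\end{equation*}
so I would partition the $NL$ indices of the wide hidden layer into $L$ blocks $I_\ell=\{(\ell-1)N+1,\dots,\ell N\}$, $\ell=1,\dots,L$, define $S_\ell := \sum_{\ell'=1}^{\ell}\sum_{j\in I_{\ell'}} a_j\,\sigma(\bm{w}_j^T\bmh_1+c_j)$ with $S_0:=0$, and reduce the task to producing $S_L+b$ by a deep narrow network.

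The construction I have in mind uses $L+1$ hidden layers. The first hidden layer, of width $N$, produces $\bmh_1$. Each subsequent hidden layer $\ell+1$ (for $\ell=1,\dots,L$) maintains three parallel registers: (i) a copy of $\bmh_1\in\R^N$ passed through ReLU unchanged, since $\bmh_1\ge\bmzero$ makes $\sigma$ act as the identity, costing $N$ neurons; (ii) the $N$ fresh activations $z_j^{(\ell)}:=\sigma(\bm{w}_j^T\bmh_1+c_j)$ for $j\in I_\ell$, computed from the preceding $\bmh_1$-register; and (iii) the sign-split pair $(S_{\ell-1}^+,S_{\ell-1}^-)=(\sigma(S_{\ell-1}),\sigma(-S_{\ell-1}))$, obtained by feeding $\pm\bigl[(S_{\ell-2}^+-S_{\ell-2}^-)+\sum_{j\in I_{\ell-1}} a_j z_j^{(\ell-1)}\bigr]$ through ReLU (with the $\ell=1$ case initializing trivially to $0$). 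The sign-split is essential, because $S_{\ell-1}$ can be negative and storing its positive and negative parts separately is the only way to propagate a possibly signed scalar through a ReLU layer. Each such layer therefore has width exactly $N+N+2=2N+2$.

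Finally, the output affine map reads off layer $L+1$ to produce $(S_{L-1}^+-S_{L-1}^-)+\sum_{j\in I_L} a_j z_j^{(L)}+b = S_L+b = \phi(\bmx)$, realizing $\phi$ with depth $L+1$ and maximum width $2N+2$, as desired. This is a bookkeeping argument rather than a genuinely hard statement; the only thing to verify carefully is that each of the three register updates depends only on quantities present in the immediately preceding layer, which follows at once from the recursion for $S_\ell$ together with the nonnegativity of $\bmh_1$ that eliminates the need for a sign split on the feature vector.
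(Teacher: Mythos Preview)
Your proposal is correct and is precisely the standard construction: the paper does not give its own proof here but cites Theorem~3.1 of \cite{shijun:thesis} (equivalently Lemma~3.4 of \cite{shijun1}), and the argument there is the same block-accumulation scheme you describe, carrying $\bmh_1$ forward via ReLU-identity channels, computing one block of $N$ second-layer activations per stage, and propagating the running partial sum through a two-neuron sign split.
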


\begin{lemma}\label{lem:n:cpl(n)}
	For any $n\in \N^+$, it holds that
	\begin{equation}\label{eq:cpl:subset:nn}
		\cpl\big(\R,n\big)\subseteq \NNF(\NNinput=1\NNspace \NNwidthvec=[n+1]\NNspace\NNoutput=1).
	\end{equation}
\end{lemma}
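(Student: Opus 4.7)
My plan is to give an explicit formula realizing any $f\in\cpl(\R,n)$ as a single hidden layer of $n+1$ ReLU neurons followed by a linear output. Let $t_1<t_2<\cdots<t_m$ denote the breakpoints of $f$ (with $m\le n$, possibly $m=0$), and write $s_0,s_1,\ldots,s_m$ for the slopes of $f$ on the intervals $(-\infty,t_1),(t_1,t_2),\ldots,(t_m,+\infty)$ respectively.

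For the generic case $m\ge 1$, I propose the ansatz
\begin{equation*}
g(x) \,\coloneqq\, f(t_1) \;-\; s_0\,\sigma(t_1-x) \;+\; s_1\,\sigma(x-t_1) \;+\; \sum_{i=2}^{m}(s_i-s_{i-1})\,\sigma(x-t_i),
\end{equation*}
which uses exactly $m+1\le n+1$ ReLU units, each acting on an affine function of $x$. I would verify $g=f$ by a direct slope computation: on $(-\infty,t_1)$ only the first ReLU is active, contributing slope $s_0$; on each interval $(t_i,t_{i+1})$ (with $t_{m+1}=+\infty$) the first ReLU vanishes identically, and the slope telescopes to $s_1+\sum_{j=2}^{i}(s_j-s_{j-1})=s_i$. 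Continuity at every $t_i$ is automatic since all ReLUs are continuous, and the normalization $g(t_1)=f(t_1)$ pins down the one remaining constant, so $g$ and $f$ agree everywhere.

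For the edge case $m=0$, $f$ is affine, $f(x)=s_0 x+c$, and I would use the standard identity $g(x)=s_0\sigma(x)-s_0\sigma(-x)+c$ employing $2\le n+1$ ReLU units (recall the hypothesis $n\in\N^+$, so $n+1\ge 2$). In either case the construction uses at most $n+1$ neurons, and one may always pad the hidden layer up to exactly $n+1$ by appending dummy neurons with zero output weight, so the resulting function lies in $\NNF(\NNinput=1\NNspace\NNwidthvec=[n+1]\NNspace\NNoutput=1)$, yielding the containment~\eqref{eq:cpl:subset:nn}.

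There is no substantive obstacle here—the lemma reduces to a slope-matching identity for continuous piecewise linear functions. The only mild subtlety is realizing the leftmost slope $s_0$: a purely breakpoint-centered ansatz of the form $\sum c_i\sigma(x-t_i)$ plus constants would force the function to be constant on $(-\infty,t_1)$, so I spend the ``extra'' $(n{+}1)$-st neuron on the left-pointing unit $\sigma(t_1-x)$, which endows the left tail with the correct slope $s_0$ while contributing $0$ on the right tail and hence not disturbing the slope-telescope argument for $x>t_1$.
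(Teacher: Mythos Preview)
Your proof is correct and is essentially the same decomposition as the paper's. The paper presents it as an induction on the number of breakpoints, peeling off the largest breakpoint $x_0$ by subtracting $(a_2-a_1)\sigma(x-x_0)$ at each step; unwinding that induction yields precisely your closed-form formula $f(t_1)-s_0\,\sigma(t_1-x)+s_1\,\sigma(x-t_1)+\sum_{i=2}^{m}(s_i-s_{i-1})\,\sigma(x-t_i)$, with your base case $m=0$ corresponding to the paper's base case $n=1$.
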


Lemma~\ref{lem:widthPower} is a part of Theorem~$3.2$ in \cite{shijun:thesis} or Lemma $2.2$ in \cite{shijun1}. 
Lemma~\ref{lem:widthPower} is Theorem~$3.1$ in \cite{shijun:thesis} or Lemma~$3.4$ in \cite{shijun1}.
It remains to prove Lemma~\ref{lem:n:cpl(n)}.
\begin{proof}[Proof of Lemma~\ref{lem:n:cpl(n)}]
We use the mathematical induction to prove Equation~\eqref{eq:cpl:subset:nn}. 
First, consider the case $n=1$.  Given any $f \in \cpl\big(\R,1\big)$, there exist $a_1,a_2,x_0\in\R$ such that
\begin{equation*}
	f(x)=\left\{
	\begin{array}{ll}
		a_1(x-x_0)+f(x_0), &\tn{if \  }  x\ge x_0,\\
		a_2(x_0-x)+f(x_0), &\tn{if \  }  x< x_0.\\
	\end{array}
\right.
\end{equation*}
Thus, $f(x)=a_1\sigma(x-x_0)+a_2\sigma(x_0-x)+f(x_0)$ for any $x\in \R$, implying 
\[f \in \NNF(\NNinput=1\NNspace \NNwidthvec=[2]\NNspace\NNoutput=1).\]
Thus, Equation~\eqref{eq:cpl:subset:nn} holds for $n=1$.

Now assume Equation~\eqref{eq:cpl:subset:nn} holds for $n=k\in \N^+$, we would like to show it is also  true for $n=k+1$.
Given any $f \in \cpl\big(\R,k+1\big)$, we may assume the biggest breakpoint of $f$ is $x_0$ since it is trivial for the case that $f$ has no breakpoint. Denote the slopes of the linear pieces left and right next to $x_0$ by $a_1$ and $a_2$, respectively.
Define 
\[\tildef(x)\coloneqq f(x)- (a_2-a_1)\sigma(x-x_0),\quad \tn{ for any $x\in\R$.}\]
 Then $\tildef$ 
 has at most $k$ breakpoints.
By the induction hypothesis, we have
\begin{equation*}
	\tildef \in \cpl\big(\R,k\big)\subseteq \NNF(\NNinput=1\NNspace \NNwidthvec=[k+1]\NNspace\NNoutput=1).
\end{equation*}
Thus, there exist   $w_{0,j},b_{0,j},w_{1,j},b_1$ for $j=1,2,\cdots,k+1$  such that
\begin{equation*}
	\tildef(x)=\sum_{j=1}^{k+1} w_{1,j}\sigma(w_{0,j}x+b_{0,j})+b_1,\quad \tn{for any $x\in\R$.}
\end{equation*}
 Therefore, for any $x\in\R$, we have
\begin{equation*}
		f(x)=(a_2-a_1)\sigma(x-x_0)+\tildef(x)
		= 	(a_2-a_1)\sigma(x-x_0)
				+\sum_{j=1}^{k+1} w_{1,j}\sigma(w_{0,j}x+b_{0,j})+b_1,
\end{equation*}
implying $f\in \NNF(\NNinput=1\NNspace \NNwidthvec=[k+2]\NNspace\NNoutput=1)$.
Thus, Equation~\eqref{eq:cpl:subset:nn} holds for $k+1$, which means we finish the induction process. So we complete the proof.
\end{proof}

\subsection{Proof of Proposition~\ref{prop:stepFunc}}\label{sec:proofProp1}

 Now, let us present the detailed proof of Proposition~\ref{prop:stepFunc}. 
Denote $K=\tildeM\cdot\tildeL$, where $\tildeM=\lfloor N^{1/d}\rfloor ^2\lfloor L^{1/d}\rfloor$, $n=\lfloor \log_3(N+2)\rfloor$, and  $\tildeL=\lfloor L^{1/d}\rfloor \lfloor n^{1/d}\rfloor$.
Consider the sample set 
\[
\begin{split}
	\big\{(1,\tildeM-1),(2,0)\big\}
	&\bigcup
	\big\{(\tfrac{m}{\tildeM},m):m=0,1,\cdots,\tildeM-1\big\}\\
	&\bigcup \big\{(\tfrac{m+1}{\tildeM}-\delta,m):m=0,1,\cdots,\tildeM-2\big\}.
	\end{split}
\]
Its size is 
\[2\tildeM+1 = 2\lfloor N^{1/d}\rfloor^2 \lfloor L^{1/d}\rfloor +1 = \lfloor N^{1/d}\rfloor\cdot\Big(\big(2\lfloor N^{1/d}\rfloor \lfloor L^{1/d}\rfloor-1\big)+1\Big)+1.\] 
By Lemma~\ref{lem:widthPower} (set $N_1=\lfloor N^{1/d}\rfloor$ and $N_2=2\lfloor N^{1/d}\rfloor \lfloor L^{1/d}\rfloor-1$ therein), there exists \[\begin{split}
	\phi_1&\in \NNF\big(\NNwidthvec=\big[2\lfloor N^{1/d}\rfloor,2(2\lfloor N^{1/d}\rfloor \lfloor L^{1/d}\rfloor-1)+1\big]\big)\\
	&=\NNF\big(\NNwidthvec=\big[2\lfloor N^{1/d}\rfloor ,4\lfloor N^{1/d}\rfloor \lfloor L^{1/d}\rfloor-1\big]\big)
\end{split}\] such that
\begin{itemize}
	\item $\phi_1(\tfrac{\tildeM-1}{\tildeM})=\phi_1(1)=\tildeM-1$ and $\phi_1(\tfrac{m}{\tildeM})=\phi_1(\tfrac{m+1}{\tildeM}-\delta)=m$ for $m=0,1,\cdots,\tildeM-2$.
	\item $\phi_1$ is linear on $[\tfrac{\tildeM-1}{\tildeM},1]$ and each interval $[\tfrac{m}{\tildeM},\tfrac{m+1}{\tildeM}-\delta]$ for $m=0,1,\cdots,\tildeM-2$. 
\end{itemize}
Then, for $m=0,1,\cdots,\tildeM-1$, we have
\begin{equation}
	\label{eq:returnmStepFunc}
	\phi_1(x)=m, \quad \tn{for any} \ x\in [\tfrac{m}{\tildeM},\tfrac{m+1}{\tildeM}-\delta\cdot \one_{\{m\le \tildeM-2\}}].
\end{equation}

Now consider another sample set
\[\begin{split}
	\big\{(\tfrac{1}{\tildeM},\tildeL-1),(2,0)\big\}
	&\bigcup\big\{(\tfrac{\ell}{\tildeM\tildeL},\ell):\ell=0,1,\cdots,\tildeL-1\big\}\\
	&\bigcup \big\{(\tfrac{\ell+1}{\tildeM\tildeL}-\delta,\ell):\ell=0,1,\cdots,\tildeL-2\big\}.
\end{split}\] 
Its size is 
\[2\tildeL+1= 2 \lfloor L^{1/d}\rfloor  \lfloor n^{1/d}\rfloor + 1  = \lfloor n^{1/d}\rfloor\cdot\big((2\lfloor L^{1/d}\rfloor-1)+1\big)+1.\]
 By Lemma~\ref{lem:widthPower} (set $N_1=\lfloor n^{1/d}\rfloor$ and $N_2=2 \lfloor L^{1/d}\rfloor-1$ therein), there exists \[\begin{split}
	\phi_2 &\in \NNF\big(\NNwidthvec=\big[2 \lfloor n^{1/d}\rfloor,2(2\lfloor L^{1/d}\rfloor-1)+1\big]\big)\\
	&=\NNF\big(\NNwidthvec=\big[2\lfloor n^{1/d}\rfloor,4\lfloor L^{1/d}\rfloor-1\big]\big)
\end{split}\] such that
\begin{itemize}
	\item $\phi_2(\tfrac{\tildeL-1}{\tildeM\tildeL})=\phi_2(\tfrac{1}{\tildeM})=\tildeL-1$ and $\phi_2(\tfrac{\ell}{\tildeM\tildeL})=\phi_2(\tfrac{\ell+1}{\tildeM\tildeL}-\delta)=\ell$ for $\ell=0,1,\cdots,\tildeL-2$.
	\item $\phi_2$ is linear on $[\tfrac{\tildeL-1}{\tildeM\tildeL},\tfrac{1}{\tildeM}]$ and each interval $[\tfrac{\ell}{\tildeM\tildeL},\tfrac{\ell+1}{\tildeM\tildeL}-\delta]$ for $\ell=0,1,\cdots,\tildeL-2$. 
\end{itemize}
It follows that, for  $m=0,1,\cdots,\tildeM-1$ and $\ell=0,1,\cdots,\tildeL-1$,
\begin{equation}
	\label{eq:returnlStepFunc}
	\phi_2(x-\tfrac{m}{\tildeM})=\ell, \quad  \tn{for any}\ x\in [\tfrac{m\tildeL+\ell}{\tildeM\tildeL},\tfrac{m\tildeL+\ell+1}{\tildeM\tildeL}-\delta\cdot \one_{\{\ell\le \tildeL-2\}}].
\end{equation}

$K=\tildeM\cdot \tildeL$  implies any $k\in \{0,1,\cdots,K-1\}$ can be unique represented by $k=m\tildeL+\ell$ for $m\in \{0,1,\cdots,\tildeM-1\}$ and $\ell\in \{0,1,\cdots,\tildeL-1\}$. 
Then the desired function $\phi$ can be implemented by a ReLU network shown in Figure~\ref{fig:stepFunc}. 

\begin{figure}[!htp]
	\centering
	\includegraphics[width=0.852\textwidth]{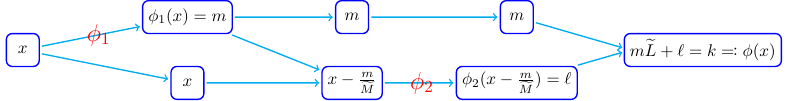}
	\caption{An illustration of the network architecture implementing $\phi$ based on Equations~\eqref{eq:returnmStepFunc} and \eqref{eq:returnlStepFunc} for $x\in [\tfrac{k}{K},\tfrac{k+1}{K}-\delta\cdot \one_{\{k\le K-2\}}]=[\tfrac{mL+\ell}{\tildeM\tildeL},\tfrac{mL+\ell+1}{\tildeM\tildeL}-\delta\cdot \one_{\{m\le \tildeM-2 \tn{ or }\ell\le \tildeL-2\}}]$, where $k=m\tildeL+\ell$ for $m=0,1,\cdots,\tildeM-1$ and $\ell=0,1,\cdots,\tildeL-1$. 
	}
	\label{fig:stepFunc}
\end{figure}

Clearly, 
\begin{equation*}
	\phi(x)=k,\quad \tn{if $x\in [\tfrac{k}{K},\tfrac{k+1}{K}-\delta\cdot \one_{\{k\le K-2\}}]$,\quad for any $k\in\{0,1,\cdots,K-1\}.$}
\end{equation*}
By Lemma~\ref{lem:wideToDeep}, we have
\begin{equation*}
	\begin{split}
		\phi_1  &\in\NNF\big(\NNinput=1\NNspace\NNwidthvec=\big[2\lfloor N^{1/d}\rfloor ,4\lfloor N^{1/d}\rfloor \lfloor L^{1/d}\rfloor -1\big]\NNspace\NNoutput=1\big)\\
		&\subseteq\NNF\big(\NNinput=1\NNspace\NNwidth\le 8\lfloor N^{1/d}\rfloor +2\NNspace\NNdepth\le \lfloor L^{1/d}\rfloor +1\NNspace\NNoutput=1\big)
	\end{split}
\end{equation*}
and
\begin{equation*}
	\begin{split}
		\phi_2 &\in\NNF\big(\NNinput=1\NNspace\NNwidthvec=\big[2\lfloor n^{1/d}\rfloor, 4\lfloor L^{1/d}\rfloor -1\big]\NNspace\NNoutput=1\big)\\
		&\subseteq\NNF\big(\NNinput=1\NNspace\NNwidth\le 8\lfloor n^{1/d}\rfloor+2\NNspace\NNdepth\le \lfloor L^{1/d}\rfloor +1\NNspace\NNoutput=1\big).
	\end{split}
\end{equation*}
Recall that $n=\lfloor \log_3(N+2)\rfloor\le N$. It follows from Figure~\ref{fig:stepFunc} that $\phi$ can be implemented by a ReLU network with width
\[ \max\big\{8\lfloor N^{1/d}\rfloor+2+1,8\lfloor n^{1/d}\rfloor+2+1\big\}=8\lfloor N^{1/d}\rfloor+3\]
and depth
\[ (\lfloor L^{1/d}\rfloor+1)+2+(\lfloor L^{1/d}\rfloor+1)+1=2\lfloor L^{1/d}\rfloor+5.\]
So we finish the proof.

\subsection{Proof of Proposition~\ref{prop:pointFitting}}
\label{sec:proofProp2}
The proof of Proposition~\ref{prop:pointFitting} is based on the bit extraction technique in \cite{Bartlett98almostlinear,pmlr-v65-harvey17a}.
To simplify the proof, we first prove Lemmas~\ref{lem:BitsWidth}, \ref{lem:BitsWidthDepth},  \ref{lem:BitExtractionMulti}, and \ref{lem:pointFittingOld}, which serve as four important intermediate steps. Next, we will apply Lemma~\ref{lem:pointFittingOld} to prove Proposition~\ref{prop:pointFitting}.
In fact, we modify this technique to extract the sum of many bits rather than one bit and this modification can be summarized in Lemmas~\ref{lem:BitsWidth} and \ref{lem:BitsWidthDepth} below.

\begin{lemma}
	\label{lem:BitsWidth}
	For any $n\in \N^+$, there exists a function $\phi$ in
	\[\NNF\big(\NNinput=2\NNspace \NNwidth\le (n+1)2^{n+1}\NNspace\NNdepth\le 3  \NNspace\NNoutput=1\big)\]
	such that: Given any $\theta_j \in \{0,1\}$ for $j=1,2,\cdots,n$,  we have 
	\begin{equation*}
		\phi(\bin  0.\theta_1\theta_2\cdots\theta_{n} ,\, i)
		=\sum_{j=1}^{i} \theta_j,
		\quad \tn{for any $i\in \{0,1,2,\cdots,n\}$}.\,\footnote{By convention, $\sum_{j=n}^m a_j=0$ if $n>m$, no matter what $a_j$ is for each $j$.}
	\end{equation*}
\end{lemma}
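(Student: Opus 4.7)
The plan is to reduce the two-variable fitting problem to a one-dimensional continuous piecewise linear (CPwL) interpolation and then invoke Lemma~\ref{lem:n:cpl(n)} directly. The key observation is that for any bits $\theta_1,\dots,\theta_n \in \{0,1\}$,
\[
2^n \cdot \bin 0.\theta_1\theta_2\cdots\theta_n \;=\; \sum_{j=1}^n \theta_j\, 2^{n-j} \;=:\; k \;\in\; \{0,1,\dots,2^n-1\},
\]
and, by construction, $\theta_j$ is precisely the $j$-th bit of $k$ in its $n$-bit binary representation. Consequently, the affine map $L(x, i) := 2^n(x + i)$ from $\R^2$ to $\R$ sends each admissible input $(\bin 0.\theta_1\cdots\theta_n, i)$ with $i \in \{0,1,\dots,n\}$ bijectively to a distinct integer $2^n i + k$ in the finite set $\{0,1,\dots,(n+1)2^n-1\}$.

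Next I will construct a one-dimensional CPwL function $F : \R \to \R$ whose nodes are exactly these $(n+1)2^n$ integers: set $F(2^n i + k) := \sum_{j=1}^i \theta_j^{(k)}$, where $\theta_1^{(k)},\dots,\theta_n^{(k)}$ denote the bits of $k$, take $F$ to be linear on each interval between consecutive nodes, and linearly extend $F$ beyond the extreme nodes. Such an $F$ has at most $(n+1)2^n$ breakpoints, so $F \in \cpl(\R, (n+1)2^n)$, and Lemma~\ref{lem:n:cpl(n)} realizes it by a ReLU network in $\NNF(\NNinput=1\NNspace \NNwidthvec=[(n+1)2^n + 1]\NNspace \NNoutput=1)$, i.e.\ of depth $1$ and width $(n+1)2^n+1$.

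Finally, I will define $\phi := F \circ L$. Since $L : \R^2 \to \R$ is affine, its weights absorb into the first affine layer of the network implementing $F$, so $\phi$ is a ReLU network of input dimension $2$, output dimension $1$, depth $1 \le 3$, and width $(n+1)2^n + 1 \le (n+1)2^{n+1}$, where the last inequality is elementary for $n \in \N^+$. By construction,
\[
\phi\bigl(\bin 0.\theta_1\cdots\theta_n,\, i\bigr) \;=\; F(2^n i + k) \;=\; \sum_{j=1}^i \theta_j,
\]
which is exactly the required identity. There is no substantive obstacle here; the proof reduces to two elementary verifications, namely the bijectivity of the affine reduction $L$ on the admissible input set and the existence of a CPwL interpolant with the claimed number of breakpoints. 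The only mild point is to make sure that, after absorbing $L$ into the first affine map of $F$'s network, the width and depth bounds in the lemma are comfortably met.
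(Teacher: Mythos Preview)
Your proof is correct and takes a genuinely different, more elementary route than the paper.

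The paper builds $\phi$ in three explicit hidden layers: the first layer computes $g(2^j\theta)$ for $j=0,1,\dots,n$ using a floor-approximating function $g\in\cpl(\R,2^{n+1}-2)$; the second layer forms the individual bits $\theta_j = g(2^j\theta)/2 - g(2^{j-1}\theta)$ together with the indicator $\calT(i-j)=\sigma(i-j+1)-\sigma(i-j)$; the third layer uses the identity $x_1x_2=\sigma(x_1+x_2-1)$ for $x_1,x_2\in\{0,1\}$ to produce $z_{i,j}=\theta_j\cdot\calT(i-j)$ and sums over $j$. This explicitly carries out ``bit extraction'' inside the network.

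You instead observe that the affine map $L(x,i)=2^n(x+i)$ sends the $(n+1)2^n$ admissible inputs bijectively to the integers $\{0,1,\dots,(n+1)2^n-1\}$, and then interpolate those integers by a single CPwL function $F\in\cpl(\R,(n+1)2^n)$ via Lemma~\ref{lem:n:cpl(n)}. Absorbing $L$ into the first affine layer of $F$ gives $\phi=F\circ L$ with depth~$1$ and width $(n+1)2^n+1\le(n+1)2^{n+1}$.

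What each approach buys: your argument is shorter and actually yields strictly better bounds (depth~$1$ rather than~$3$, and roughly half the width). The paper's construction is more transparent about the bit-extraction mechanism that motivates the whole section, but since Lemma~\ref{lem:BitsWidthDepth} invokes Lemma~\ref{lem:BitsWidth} only as a black box, your version would plug in there without change.
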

\begin{proof}
	Set $\theta=\bin 0.\theta_1\theta_2\cdots\theta_{n}$. Clearly, 
	\begin{equation*}
		\theta_j = {\lfloor 2^j \theta\rfloor}\big/{2} - \lfloor 2^{j-1} \theta\rfloor,\quad \tn{for any $j\in \{1,2,\cdots,n\}$.}
	\end{equation*}
We shall use a ReLU network to replace $\lfloor\cdot\rfloor$. 
Let $g\in \cpl(\R,2^{n+1}-2)$ be the function satisfying two conditions: 
\begin{itemize}
    \item $g$ matches set of samples
\begin{equation*}
	\bigcup_{k=0}^{2^n-1}\big\{(k,k),(k+1-\delta,k)\big\},\quad 
	\tn{where $\delta=2^{-(n+1)}$;}
\end{equation*} 
\item The breakpoint set of $g$ is 
\begin{equation*}
    \Big(\bigcup_{k=0}^{2^n-1}\big\{k,k+1-\delta \big\}\Big)\Big\backslash \Big(\{0\}\bigcup\{2^n-\delta\}\Big).
\end{equation*}
\end{itemize}
Then $g(x)=\lfloor x\rfloor$ for any $x\in \bigcup_{k=0}^{2^n-1}[k,k+1-\delta]$. Clearly, $\theta=\bin 0.\theta_1\theta_2\cdots\theta_{n}$ implies
\[2^j\theta\in \bigcup_{k=0}^{2^n-1}[k,k+1-\delta],\quad \tn{ for any $j\in \{0,1,2,\cdots,n\}$}.\]
Thus, 
\begin{equation}\label{eq:output:thetaj}
	\theta_j = {\lfloor 2^j \theta\rfloor }\big/{2} - \lfloor 2^{j-1} \theta\rfloor
	\,=\,
	{g(2^j \theta)}\big/{2} - g(2^{j-1} \theta),\quad \tn{for any $j\in \{1,2,\cdots,n\}$.}
\end{equation}

It is easy to design a ReLU network to  output $\theta_1,\theta_2,\cdots,\theta_n$ by Equation~\eqref{eq:output:thetaj} when using $\theta=\bin  0.\theta_1\theta_2\cdots\theta_n$ as the input. However, it is highly non-trivial to construct a ReLU network to output $\sum_{j=1}^{i}\theta_j$ with another input $i$, since many operations like multiplication and comparison are not allowed in designing ReLU networks.
Now let us establish a formula to represent $\sum_{j=1}^{i}\theta_j$ in a form of a ReLU network as follows.

Define $\calT(n)\coloneqq \sigma(n+1)-\sigma(n)=\left\{\begin{smallmatrix*}[l]
	1,& n\ge 0,\\ 0, & n<0
\end{smallmatrix*}\right.$ for any integer $n$. Then, by Equation~\eqref{eq:output:thetaj} and the fact $x_1x_2=\sigma(x_1+x_2-1)$ for any  $x_1,x_2\in \{0,1\}$, we have, for $i=0,1,2,\cdots,n$,
\begin{equation*}
	\begin{split}
		\sum_{j=1}^{i}\theta_j=\sum_{j=1}^{n}\theta_j\cdot\calT(i-j)
		&=\sum_{j=1}^{n}\sigma\Big(\theta_j+\calT(i-j)-1\Big)\\
		&=\sum_{j=1}^{n}\sigma\Big(\theta_j+\sigma(i-j+1)-\sigma(i-j)-1\Big)\\
		&=\sum_{j=1}^{n}\sigma\Big({g(2^j \theta)}\big/{2} - g(2^{j-1} \theta)+\sigma(i-j+1)-\sigma(i-j)-1\Big).
	\end{split}
\end{equation*} 
Define
\begin{equation}\label{eq:zij:def}
	\begin{split}
			z_{i,j}&\coloneqq  \sigma\Big(g(2^j \theta)\big/2 - g(2^{j-1} \theta)+\sigma(i-j+1)-\sigma(i-j)-1\Big),
	\end{split}
\end{equation}
for any  $i,j\in\{0,1,2,\cdots,n\}$. Then the goal is to design $\phi$ satisfying 
\begin{equation}\label{eq:phi:theta:i}
	\phi(\theta,i)=\sum_{j=1}^{i}\theta_j=\sum_{j=1}^n z_{i,j},\quad  \tn{ for any $i\in\{0,1,2,\cdots,n\}$.}
\end{equation}
See Figure~\ref{fig:BitsWidth} for the network architecture implementing the desired function $\phi$.

\begin{figure}[!htp]
	\centering
	\includegraphics[width=0.84\textwidth]{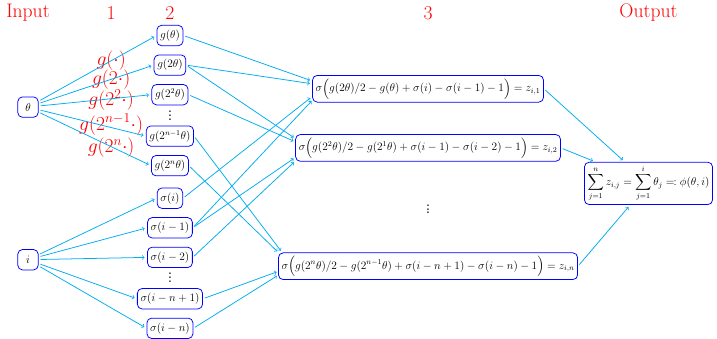}	
	\caption{An illustration of the  network implementing the desired function $\phi$ with the input $[\theta,i]^T=[\bin 0.\theta_1\theta_2\cdots\theta_{n},\, i]^T$ for any 
		$i\in \{0,1,2,\cdots,n\}$ and $\theta_1,\theta_2,\cdots,\theta_{n}\in\{0,1\}$. 
		$g(2^j\cdot)$ can be implemented by a one-hidden-layer network with width $2^{n+1}-1$ for each $j\in \{0,1,2,\cdots,n\}$.
		The red numbers above the architecture indicate the order of hidden layers. The network architecture is essentially determined by Equations~\eqref{eq:zij:def} and \eqref{eq:phi:theta:i}, which are valid no matter what $\theta_1,\theta_2,\cdots,\theta_{n}\in \{0,1\}$ are. Thus, the desired function $\phi$ is independent of $\theta_1,\theta_2,\cdots,\theta_{n}\in \{0,1\}$. We omit ReLU ($\sigma$) for a neuron if its output is non-negative without ReLU. Such a simplification is applied to similar figures in this paper.}
	\label{fig:BitsWidth}
\end{figure}

By Lemma~\ref{lem:n:cpl(n)}, we have
\begin{equation*}
	g\in \cpl(\R,2^{n+1}-2)\subseteq \NNF\big(\NNinput=1\NNspace \NNwidthvec=[2^{n+1}-1]\NNspace\NNoutput=1\big), 
\end{equation*}
implying 
\begin{equation*}
	g(2^j\cdot)\in \cpl(\R,2^{n+1}-2)\subseteq \NNF\big(\NNinput=1\NNspace \NNwidthvec=[2^{n+1}-1]\NNspace\NNoutput=1\big), 
\end{equation*}
for $j=0,1,2,\cdots,n$.
Clearly, the network in Figure~\ref{fig:BitsWidth} has width 
\[(n+1)(2^{n+1}-1)+(n+1)=(n+1)2^{n+1}\]
and depth $3$. So we finish the proof.
\end{proof}

\begin{lemma}
	\label{lem:BitsWidthDepth}
	For any $n,L\in \N^+$, there exists a function $\phi$ in
	\[\NNF\big(\NNinput=2\NNspace \NNwidth\le (n+3)2^{n+1}+4\NNspace\NNdepth\le 4L+2  \NNspace\NNoutput=1\big)\]
	such that: Given any $\theta_j \in \{0,1\}$ for $j=1,2,\cdots,Ln$,  we have 
	\begin{equation*}
		\phi(\bin  0.\theta_1\theta_2\cdots\theta_{Ln} ,\, k)
		=\sum_{j=1}^{k} \theta_j,
		\quad \tn{for any $k\in\{1,2,\cdots,Ln$\}}.
	\end{equation*}
\end{lemma}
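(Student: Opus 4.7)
The plan is to iterate Lemma~\ref{lem:BitsWidth} over $L$ disjoint blocks of $n$ bits, reusing the one‐block bit‐sum sub‐network at each stage while carrying forward a shrinking ``remainder'' of $\theta$, the input $k$, and a running partial sum. Write $k=(\ell-1)n+i$ with $\ell\in\{1,\dots,L\}$ and $i\in\{1,\dots,n\}$, and for each $m\in\{1,\dots,L\}$ define the remainder
\[
\psi^{(m-1)}\coloneqq \bin 0.\theta_{(m-1)n+1}\theta_{(m-1)n+2}\cdots\theta_{Ln},\qquad \psi^{(0)}=\theta,
\]
together with the ``block truncation'' $\theta^{(m)}\coloneqq \bin 0.\theta_{(m-1)n+1}\theta_{(m-1)n+2}\cdots\theta_{mn}$ and the ``chunk index''
\[
i_m\coloneqq \sigma\big(k-(m-1)n\big)-\sigma(k-mn)\;\in\;\{0,1,\dots,n\}.
\]
A direct case check gives $i_m=n$ for $m<\ell$, $i_\ell=i$, and $i_m=0$ for $m>\ell$, so that, letting $\phi_0$ be the function produced by Lemma~\ref{lem:BitsWidth}, we obtain
\[
\sum_{j=1}^{k}\theta_j \;=\; \sum_{m=1}^{L}\sum_{j=1}^{i_m}\theta_{(m-1)n+j}\;=\;\sum_{m=1}^{L}\phi_0\big(\theta^{(m)},\,i_m\big).
\]

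Next I would implement one pass of the recursion by a ``stage block'' of depth $4$. Reusing the function $g$ from the proof of Lemma~\ref{lem:BitsWidth} (the CPwL approximant to $\lfloor\cdot\rfloor$ with $2^{n+1}-2$ breakpoints, realisable by a one‐hidden‐layer sub‐network of width $2^{n+1}-1$), the peel step
\[
\theta^{(m)}=2^{-n}g\big(2^n\psi^{(m-1)}\big),\qquad \psi^{(m)}=2^n\psi^{(m-1)}-g\big(2^n\psi^{(m-1)}\big)
\]
takes one hidden layer of width $2^{n+1}-1$; the subsequent evaluation of $\phi_0(\theta^{(m)},i_m)$ takes the three hidden layers inherited from Lemma~\ref{lem:BitsWidth}, of total width $(n+1)2^{n+1}$. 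In parallel with each layer I would carry $k$, $\psi^{(m-1)}$ (then $\psi^{(m)}$), the already‐computed $i_m$, and the running sum $s_{m-1}$ via the standard $\sigma(x)-\sigma(-x)$ trick; since all quantities sit in a known compact range, these side channels cost only a constant number of extra neurons per layer. Stacking $L$ such stages and adding a single input layer to form $k$ and $\psi^{(0)}=\theta$ and a single output layer to return $s_L=\sum_{m=1}^{L}\phi_0(\theta^{(m)},i_m)$ yields depth $4L+2$ and width at most $(n+1)2^{n+1}+2^{n+1}+(\text{constant carries}) \le (n+3)2^{n+1}+4$, which matches the claimed bounds.

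The main obstacle I anticipate is making the peel step \emph{exact} for all $L$ iterations with a single fixed $g$ of width $O(2^{n+1})$: at stage $m$ the quantity $2^n\psi^{(m-1)}$ has fractional part $\psi^{(m)}$, which can be as close to $1$ as $1-2^{-(L-m)n}$, so the $\delta=2^{-(n+1)}$ ``good zone'' of the $g$ used inside Lemma~\ref{lem:BitsWidth} is too narrow to guarantee $g(2^n\psi^{(m-1)})=\lfloor 2^n\psi^{(m-1)}\rfloor$ for $m<L$. I would fix this by replacing $g$ with a CPwL function on $[0,2^n]$ that equals $\lfloor\cdot\rfloor$ on $\bigcup_{k=0}^{2^n-1}[k,\,k+1-\delta']$ with $\delta'\le 2^{-Ln-1}$; this still has only $2\cdot 2^n$ breakpoints, hence by Lemma~\ref{lem:n:cpl(n)} is implementable by a single hidden layer of width $2^{n+1}$, absorbed by the width budget above. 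Once this modified $g$ is in place, the identities $\theta^{(m)}=\bin 0.\theta_{(m-1)n+1}\cdots\theta_{mn}$ and $\psi^{(m)}=\bin 0.\theta_{mn+1}\cdots\theta_{Ln}$ follow by induction on $m$, and feeding the $n$‐bit number $\theta^{(m)}$ into $\phi_0$ uses Lemma~\ref{lem:BitsWidth} verbatim, completing the construction.
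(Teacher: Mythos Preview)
Your proposal is correct and follows essentially the same route as the paper: decompose the $Ln$ bits into $L$ blocks of $n$, peel off each block with a CPwL floor surrogate (the paper's $g_1$, with the same $\delta=2^{-(Ln+1)}$ fix you anticipated), compute the remainder (the paper's $g_2$), feed each block into the one-block extractor of Lemma~\ref{lem:BitsWidth}, and use a chunk index equivalent to your $i_m$ (the paper writes it as $\phi_{2,\ell}(k)=\min\{\sigma(k-\ell n),n\}$) to select how many bits to sum in each block. The depth and width accounting match, so there is nothing substantive to add.
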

\begin{proof}
Let $g_1\in \cpl(\R,2^{n+1}-2)$ be the function satisfying:
\begin{itemize}
    \item $g_1$ matches
the set of samples
\begin{equation*}
	\bigcup_{i=0}^{2^n-1}\big\{(i,i),(i+1-\delta,i)\big\},\quad 
	\tn{where $\delta=2^{-(Ln+1)}$.}
\end{equation*} 
\item The breakpoint set of $g_1$ is 
\begin{equation*}
	\Big(\bigcup_{i=0}^{2^n-1}\big\{(i,i),(i+1-\delta,i)\big\}\Big)\Big\backslash \Big(
	\{0\}\bigcup\{2^n-\delta\}\Big).
\end{equation*} 
\end{itemize}
Then $g_1(x)=\lfloor x\rfloor$ for any $x\in \bigcup_{i=0}^{2^n-1}[i,i+1-\delta]$. Note that 
\[2^n \cdot \bin 0. \theta_{\ell n+1}\cdots \theta_{Ln}\in \bigcup_{i=0}^{2^n-1}[i,i+1-\delta],\quad \tn{ for any $\ell\in \{0,1,\cdots,L-1\}$}.\]
Thus, for any $\ell\in \{0,1,\cdots,L-1\}$, we have
\begin{equation}\label{eq:theta:g1}
	\bin 0. \theta_{\ell n+1}\cdots\theta_{\ell n+n} 
	= \frac{\lfloor 2^n\cdot \bin 0. \theta_{\ell n+1}\cdots\theta_{Ln}\rfloor}{2^n} 
	=\frac{g_1(2^n\cdot \bin 0. \theta_{\ell n+1}\cdots\theta_{Ln})}{2^n}.
\end{equation}
Define $g_2(x)\coloneqq 2^n x-g_1(2^n x)$ for any $x\in\R$. Then $g_2 \in \cpl(\R,2^{n+1}-2)$ and 
\begin{equation}\label{eq:theta:g2}
\begin{split}
		&\quad 	\bin 0. \theta_{(\ell+1) n+1}\cdots\theta_{Ln} 
	= 2^n\Big(\bin 0. \theta_{\ell n+1}\cdots\theta_{Ln} - \bin 0. \theta_{\ell n+1}\cdots\theta_{\ell n+n}\Big)\\
	&= 2^n\Big(\bin 0. \theta_{\ell n+1}\cdots\theta_{Ln} - \frac{g_1(2^n\cdot \bin 0. \theta_{\ell n+1}\cdots\theta_{Ln})}{2^n}\Big)=g_2(\bin 0. \theta_{\ell n+1}\cdots\theta_{Ln}).
\end{split}
\end{equation}
By Lemma~\ref{lem:BitsWidth}, there exists 
\begin{equation*}
	\phi_1\in \NNF\big(\NNinput=2\NNspace \NNwidth\le (n+1)2^{n+1}\NNspace\NNdepth\le 3  \NNspace\NNoutput=1\big)
\end{equation*}
such that: For any $\xi_1,\xi_2,\cdots,\xi_n\in\{0,1\}$, we have
\begin{equation*}
	\phi_1(\bin  0.\xi_1\xi_2\cdots\xi_{n} ,\, i)
	=\sum_{j=1}^{i} \xi_j,
	\quad \tn{for $i=0,1,2,\cdots,n$}.
\end{equation*}
It follows that
\begin{equation}\label{eq:theta:phi1}
	\phi_1(\bin  0.\theta_{\ell n+1}\theta_{\ell n+2}\cdots\theta_{\ell n+n} ,\, i)
	=\sum_{j=1}^{i} \theta_{\ell n+j},
	\quad \tn{for $\ell=0,1,\cdots,L-1$ and $i=0,1,\cdots,n$}.
\end{equation}

Define $\phi_{2,\ell}(x)\coloneqq \min\{\sigma(x-\ell n),\,n\}$ for any $x\in\R$ and $\ell\in \{0,1,\cdots,L-1\}$. For any $k\in\{1,2,\cdots,Ln\}$, there exist $k_1\in \{0,1,\cdots,L-1\}$ and $k_2\in\{1,2,\cdots,n\}$ such that $k=k_1n+k_2$, implying 
\begin{equation}\label{eq:sum:theta:phi2}
	\begin{split}
		\sum_{i=1}^k\theta_i=\sum_{i=1}^{k_1n+k_2} \theta_i
		&=\sum_{\ell=0}^{k_1-1} \bigg(\sum_{j=1}^n \theta_{\ell n+j} \bigg)
		+ \sum_{\ell=k_1}^{k_1}\bigg(\sum_{j=1}^{k_2}\theta_{\ell n+j}\bigg)
		+ \sum_{\ell=k_1+1}^{L-1}\bigg(\sum_{j=1}^0 \theta_{\ell n+j}\bigg)\\
		&=\sum_{\ell=0}^{L-1}\bigg(\sum_{j=1}^{\min\{\sigma(k-\ell n),\,n\}} \theta_{\ell n+j}\bigg)
		=\sum_{\ell=0}^{L-1}\bigg(\sum_{j=1}^{\phi_{2,\ell}(k)} \theta_{\ell n+j}\bigg).
	\end{split}
\end{equation}
Then, the desired function $\phi$ can be implemented by the network architecture in Figure~\ref{fig:BitsWidthDepth}.

\begin{figure}[!htp]
	\centering
	\includegraphics[width=0.9999\textwidth]{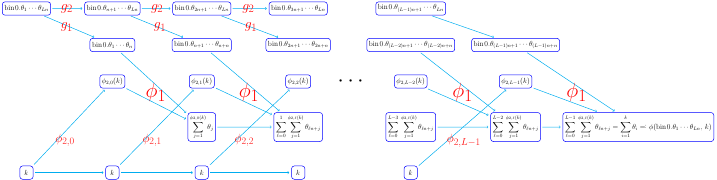}	
	\caption{An illustration of the  network implementing the desired function $\phi$ with the input $[\bin 0.\theta_1\theta_2\cdots\theta_{Ln},k]^T$ for any 
		$k\in \{1,2,\cdots,Ln\}$ and $\theta_1,\theta_2,\cdots,\theta_{Ln}\in\{0,1\}$.
	The network architecture is essentially determined by Equations~\eqref{eq:theta:g1}, \eqref{eq:theta:g2}, \eqref{eq:theta:phi1}, and \eqref{eq:sum:theta:phi2}, which are valid no matter what $\theta_1,\theta_2,\cdots,\theta_{Ln}\in \{0,1\}$ are. Thus, the desired function $\phi$ is independent of $\theta_1,\theta_2,\cdots,\theta_{Ln}\in \{0,1\}$. We omit ReLU ($\sigma$) for a neuron if its output is non-negative without ReLU. }
	\label{fig:BitsWidthDepth}
\end{figure}

By Lemma~\ref{lem:n:cpl(n)}, we have
\begin{equation*}
	g_1,g_2\in \cpl(\R,2^{n+1}-2)\subseteq \NNF\big(\NNinput=1\NNspace \NNwidthvec=[2^{n+1}-1]\NNspace\NNoutput=1\big). 
\end{equation*}

\begin{figure}[!htp]
	\centering
	\includegraphics[width=0.72\textwidth]{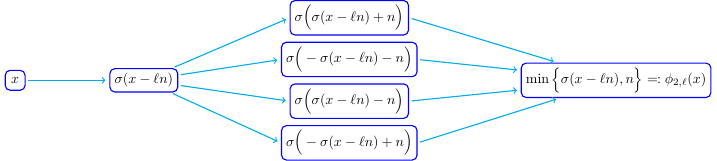}	
	\caption{An illustration of the  network implementing the desired function $\phi_{2,\ell}$ for each $\ell\in \{0,1,\cdots,L-1\}$, based on $\min\{y,n\}=\tfrac12\big(\sigma(y+n)-\sigma(-y-n)-\sigma(y-n)-\sigma(-y+n)\big)$. }
	\label{fig:phi2min}
\end{figure}

Recall that $\phi_1\in \NNF\big(\NNwidth\le (n+1)2^{n+1}\NNspace\NNdepth\le 3 \big)$.  As shown in Figure~\ref{fig:phi2min},  $\phi_{2,\ell}(x)\in \NNF(\NNwidth\le 4\NNspace\NNdepth\le 2)$ for $\ell=0,1,\cdots,L-1$. 
Therefore, the network in Figure~\ref{fig:BitsWidthDepth} has width 
\begin{equation*}
(2^{n+1}-1)\ +\  (2^{n+1}-1) \ + \  (n+1)2^{n+1} \ +\  1  \ +\  4 \ +\   1= (n+3)2^{n+1}+4	
\end{equation*}
and depth 
\begin{equation*}
	2+L(1+3)=4L+2.
\end{equation*}
So we finish the proof.
\end{proof}

Next, we introduce Lemma~\ref{lem:BitExtractionMulti} to map indices to the partial sum of given bits.
\begin{lemma}
	\label{lem:BitExtractionMulti}
		Given any $N,L\in \N^+$ and arbitrary $\theta_{m, k }\in \{0,1\}$ for $m=0,1,\cdots,M-1$ and  $ k =0,1,\cdots,Ln-1$, where $M=N^2L$ and $n=\lfloor \log_3(N+2)\rfloor$, there exists 
		\[\phi\in\NNF\big(\NNinput=2\NNspace \NNwidth\le 6N+14\NNspace\NNdepth\le 5L+4  \NNspace\NNoutput=1\big)\]
		such that
	\[\phi(m, k )=\sum_{j=0}^{ k }\theta_{m,j}, \quad \tn{
	for $m=0,1,\cdots,M-1$\quad  and\quad   $ k =0,1,\cdots,Ln-1$.}\]
\end{lemma}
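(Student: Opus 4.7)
The plan is to reduce the claim to Lemma~\ref{lem:BitsWidthDepth} by first encoding the index $m$ as a real number and then extracting the partial sum from the bit pattern. For each $m\in\{0,1,\cdots,M-1\}$, let $\xi_m\coloneqq \bin 0.\theta_{m,0}\theta_{m,1}\cdots\theta_{m,Ln-1}$, and set $\tildeN\theta_j\coloneqq \theta_{m,j-1}$ for $j\in\{1,\cdots,Ln\}$, so that $\bin 0.\tildeN\theta_1\cdots\tildeN\theta_{Ln}=\xi_m$ and $\sum_{j=1}^{k+1}\tildeN\theta_j=\sum_{j=0}^k\theta_{m,j}$. The desired $\phi$ will be the composition $\phi(m,k)=\psi_2(\psi_1(m),k+1)$, where $\psi_1$ fits the point cloud $\{(m,\xi_m)\}_m$ and $\psi_2$ is given by Lemma~\ref{lem:BitsWidthDepth}.

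First, I would build $\psi_1$ by applying Lemma~\ref{lem:widthPower} with $N_1=N$ and $N_2=NL-1$, which admits $N_1(N_2+1)+1=N^2L+1\ge M$ samples; padding $\{(m,\xi_m):m=0,1,\cdots,M-1\}$ with one arbitrary extra sample yields
\[\psi_1\in \NNF(\NNinput=1\NNspace\NNwidthvec=[2N,\,2NL-1]\NNspace\NNoutput=1)\]
with $\psi_1(m)=\xi_m$ for every $m\in\{0,1,\cdots,M-1\}$. Padding the second hidden layer from $2NL-1$ up to $2NL$ and invoking Lemma~\ref{lem:wideToDeep} then gives $\psi_1\in \NNF(\NNinput=1\NNspace\NNwidth\le 4N+2\NNspace\NNdepth\le L+1\NNspace\NNoutput=1)$.

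Second, Lemma~\ref{lem:BitsWidthDepth} furnishes
\[\psi_2\in \NNF\big(\NNinput=2\NNspace\NNwidth\le (n+3)2^{n+1}+4\NNspace\NNdepth\le 4L+2\NNspace\NNoutput=1\big)\]
with $\psi_2(\xi_m,k+1)=\sum_{j=1}^{k+1}\tildeN\theta_j=\sum_{j=0}^k\theta_{m,j}$ for every admissible $(m,k)$; crucially, this $\psi_2$ depends only on the concatenated bit stream decoded from its first argument and not on the particular $m$. The final network takes $(m,k)$, applies $\psi_1$ to the first coordinate while carrying $k+1$ through the $L+1$ hidden layers of $\psi_1$ via an identity channel of width $2$ (using $x=\sigma(x)-\sigma(-x)$), and then feeds the pair $(\psi_1(m),k+1)=(\xi_m,k+1)$ into $\psi_2$. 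Stacking depths gives $(L+1)+(4L+2)=5L+3\le 5L+4$, and the overall width is bounded by $\max\{4N+2+2,\,(n+3)2^{n+1}+4\}$.

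The main obstacle is matching the advertised width bound $6N+14$. The nontrivial inequality is $(n+3)2^{n+1}+4\le 6N+14$, equivalently $(n+3)2^{n+1}\le 6N+10$, for $n=\lfloor\log_3(N+2)\rfloor$ and $N\in\N^+$. This is tight at $N=1$ (where both sides equal $16$) and is verified in general by exploiting $3^n\le N+2$ together with the fact that $2^n$ grows strictly slower than $3^n$, so one only needs a case check on the transition values $N+2\in\{3^k\}_{k\ge 1}$. Once this width bookkeeping is in place, the rest of the construction is a routine composition: the affine shift $k\mapsto k+1$ and the identity channel for $k$ cost only constants, which are absorbed in the additive terms of the width and depth bounds.
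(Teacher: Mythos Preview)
Your proposal is correct and follows essentially the same approach as the paper: encode the bits for each $m$ into a single number via $\xi_m=\bin 0.\theta_{m,0}\cdots\theta_{m,Ln-1}$, fit these $M$ values with a two-hidden-layer network from Lemma~\ref{lem:widthPower} (then deepen via Lemma~\ref{lem:wideToDeep}), and compose with the universal bit-sum extractor from Lemma~\ref{lem:BitsWidthDepth}. The paper handles the width inequality slightly more directly by proving $(n+3)2^{n+1}\le 6\cdot 3^n-2$ for all $n\ge 1$ and then using $3^n\le N+2$, which is exactly the mechanism you sketch; your bookkeeping (width~$2$ identity channel, depth $5L+3$) differs only in harmless constants that still fit within the stated bounds.
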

\begin{proof}
	Define \[y_m\coloneqq \bin    0.\theta_{m,0}\theta_{m,1}\cdots \theta_{m,Ln-1},\quad\tn{for $m=0,1,\cdots,M-1$.}\]  
	Consider the sample set $\{(m,y_m):m=0,1,\cdots,M\}$, whose cardinality is 
	\[M+1=N\big((NL-1)+1\big)+1.\]
	 By Lemma~\ref{lem:widthPower} (set $N_1=N$ and $N_2=NL-1$ therein), there exists 
	\[\begin{split}\phi_1&\in\NNF(\NNinput=1\NNspace\NNwidthvec=[2N,2(NL-1)+1]\NNspace\NNoutput=1)\\
	&= \NN(\NNinput=1\NNspace\NNwidthvec=[2N,2NL-1]\NNspace\NNoutput=1)\end{split}\]  such that 
	\[\phi_1(m)=y_m,\quad   \tn{for  $m=0,1,\cdots,M-1$.} \]
	
	By Lemma~\ref{lem:BitsWidthDepth}, there exists 
	\[\phi_2\in \NNF\big(\NNinput=2\NNspace \NNwidth\le (n+3)2^{n+1}+4\NNspace\NNdepth\le 4L+2  \NNspace\NNoutput=1\big)\] 
	such that, for any  $\xi_1,\xi_2,\cdots,\xi_{Ln}\in \{0,1\}$, we have
	\[\phi_2(\bin   0.\xi_1\xi_2\cdots \xi_{Ln},\, k )=\sum_{j=1}^{ k }\xi_j,\quad \tn{for  $ k =1,2,\cdots,Ln$.}\]
	It follows that,
	for any  $\xi_0,\xi_1,\cdots,\xi_{Ln-1}\in \{0,1\}$, we have
	\[\phi_2(\bin   0.\xi_0\xi_1\cdots \xi_{Ln-1},\, k +1)=\sum_{j=0}^{ k }\xi_j,\quad \tn{for  $ k =0,1,\cdots,Ln-1$.}\]
	
	Thus, for $m=0,1,\cdots,M-1$ and $ k =0,1,\cdots,Ln-1$, we have
	\[\phi_2(\phi_1(m), k +1)=\phi_2(y_m, k +1)=\phi_2(0.\theta_{m,0}\theta_{m,1}\cdots \theta_{m,Ln-1},\,  k +1)=\sum_{j=0}^{ k }\theta_{m,j}.\]
	
	Hence, the desired function $\phi$ can be implemented by the network shown in Figure~\ref{fig:BitsN^2L^2n}. By Lemma~\ref{lem:wideToDeep}, $\phi_1\in \NNF(\NNwidthvec=[2N,2NL-1])\subseteq \NNF(\NNwidth\le 4N+2\NNspace\NNdepth\le L+1)$. It holds that
	\begin{equation*}
		(n+3)2^{n+1}+4\le 6\cdot (3^{n})+2= 6\cdot( 3^{\lfloor \log_3(N+2)\rfloor})+2\le 6(N+2)+2=6N+14,
	\end{equation*} 
implying
\begin{equation*}
	\begin{split}
		\phi_2&\in \NNF\big(\NNinput=2\NNspace \NNwidth\le (n+3)2^{n+1}+4\NNspace\NNdepth\le 4L+2  \NNspace\NNoutput=1\big)\\
		&\subseteq \NNF\big(\NNinput=2\NNspace \NNwidth\le 6N+14\NNspace\NNdepth\le 4L+2  \NNspace\NNoutput=1\big).
	\end{split}
\end{equation*}
Therefore,
 the network in Figure~\ref{fig:BitsN^2L^2n} is with width $\max\{(4N+2)+1,6N+14\}=6N+14$ and depth $(4L+2)+1+(L+1)=5L+4$.
	\begin{figure}
		\centering
		\includegraphics[width=0.76\textwidth]{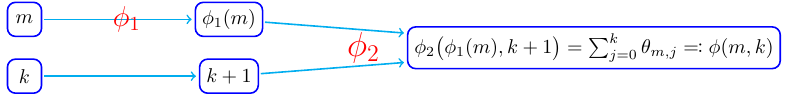}	
		\caption{An illustration of the  network implementing the desired function $\phi$ for $m=0,1,\cdots,M-1$ and $ k =0,1,\cdots,Ln-1$.  
		}
		\label{fig:BitsN^2L^2n}
	\end{figure}
	So we finish the proof.
\end{proof}

\vspace{5pt}
Next, we apply Lemma~\ref{lem:BitExtractionMulti} to prove Lemma~\ref{lem:pointFittingOld} below, which is a key intermediate conclusion to prove Proposition~\ref{prop:pointFitting}.

\vspace{5pt}
\begin{lemma}
	\label{lem:pointFittingOld}
	For any $\varepsilon>0$ and $N,L\in \N^+$, denote $M=N^2L$ and $n=\lfloor \log_3(N+2)\rfloor$. Assume   $y_{m, k }\ge 0$ for $m=0,1,\cdots,M-1 \tn{ and }  k =0,1,\cdots,Ln-1$ are samples with   
	\[|y_{m, k }-y_{m, k -1}|\le \varepsilon,\quad  \tn{for $m=0,1,\cdots,M-1\tn{\quad and\quad }  k =1,2,\cdots,Ln-1$.}\]
	Then there exists
	$\phi\in \NN(\NNinput=2\NNspace\NNwidth\le 16N+30\NNspace\NNdepth\le 5L+7\NNspace\NNoutput=1)$ such that 
	\begin{enumerate}[(i)]
		\item $|\phi(m, k )-y_{m, k }|\le \varepsilon$ for $m=0,1,\cdots,M-1$ and $ k =0,1,\cdots,Ln-1$;
		
		\item $0\le \phi(x_1,x_2)\le  \max\{y_{m, k }:m=0,1,\cdots,M-1 \tn{\quad and\quad } k =0,1,\cdots,Ln-1\}$  for any $x_1,x_2\in \R$.
	\end{enumerate}    
\end{lemma}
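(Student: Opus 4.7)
The plan is to discretize the samples $y_{m,k}$ to an $\varepsilon$-grid, express the integer quotients as telescoping sums of $\{0,1\}$-valued bit sequences, and then assemble three parallel sub-networks---one for the initial column via Lemma~\ref{lem:widthPower} and two cumulative bit-sum modules via Lemma~\ref{lem:BitExtractionMulti}---followed by a small clipping module to enforce the boundedness condition.

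Concretely, I would set $a_{m,k}\coloneqq \lfloor y_{m,k}/\varepsilon\rfloor\in\N$, so $|a_{m,k}\varepsilon-y_{m,k}|\le\varepsilon$. The hypothesis $|y_{m,k}-y_{m,k-1}|\le\varepsilon$ forces the increment $b_{m,k}\coloneqq a_{m,k}-a_{m,k-1}$ to lie in $\{-1,0,1\}$, so it decomposes as $b_{m,k}=c_{m,k}-d_{m,k}$ with $c_{m,k}\coloneqq\max\{b_{m,k},0\}$ and $d_{m,k}\coloneqq\max\{-b_{m,k},0\}$, both in $\{0,1\}$. Telescoping gives
\[a_{m,k}=a_{m,0}+\sum_{j=1}^{k} c_{m,j}-\sum_{j=1}^{k} d_{m,j},\qquad k=0,1,\ldots,Ln-1.\]

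Next I would build three sub-networks in parallel on the input $(x_1,x_2)$. Since $M+1=N^2L+1=N\cdot NL+1$ matches Lemma~\ref{lem:widthPower} with $N_1=N$, $N_2=NL-1$, we obtain $\phi_0$ with $\phi_0(m)=a_{m,0}$ inside $\NN(\NNwidthvec=[2N,2NL-1])$ and then, by Lemma~\ref{lem:wideToDeep}, inside $\NN(\NNwidth\le 4N+2;\,\NNdepth\le L+1)$. Applying Lemma~\ref{lem:BitExtractionMulti} to the bit tables $\theta_{m,0}=0$, $\theta_{m,j}=c_{m,j}$ for $j\ge 1$ (respectively with $d_{m,j}$) yields $\phi_c,\phi_d\in\NN(\NNwidth\le 6N+14;\,\NNdepth\le 5L+4)$ satisfying $\phi_c(m,k)=\sum_{j=1}^{k}c_{m,j}$ and $\phi_d(m,k)=\sum_{j=1}^{k}d_{m,j}$ on the grid. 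Assembling
\[\tildephi(x_1,x_2)\coloneqq \varepsilon\bigl(\phi_0(x_1)+\phi_c(x_1,x_2)-\phi_d(x_1,x_2)\bigr)\]
gives $\tildephi(m,k)=\varepsilon a_{m,k}$, which establishes (i). For (ii), set $Y\coloneqq\max_{m,k}y_{m,k}$ and post-compose with the clipping $\phi(x_1,x_2)\coloneqq Y-\sigma\bigl(Y-\sigma(\tildephi(x_1,x_2))\bigr)$, which is non-negative, bounded above by $Y$, and equals $\tildephi$ on $[0,Y]$; since $\tildephi(m,k)=\varepsilon a_{m,k}\in[0,Y]$, (i) is preserved. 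The parallel widths add to $(4N+2)+(6N+14)+(6N+14)=16N+30$, while the depth is dominated by $\phi_c,\phi_d$ at $5L+4$, plus at most three further layers for the final affine combination and clipping, staying within $5L+7$.

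The main technical obstacle is the tight width accounting: the two bit-extraction modules already consume $12N+28$ of the $16N+30$ budget, leaving exactly $4N+2$ for the initial-column fitter, which is precisely the output of Lemma~\ref{lem:wideToDeep} applied to the $N^2L+1$-sample instance of Lemma~\ref{lem:widthPower}. Padding the shallower branch $\phi_0$ with identity layers so that all three blocks share a common depth before summation, routing both coordinates through each branch, and verifying that the clipping preserves grid-point values are the remaining routine bookkeeping steps.
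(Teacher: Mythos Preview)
Your proposal is correct and follows essentially the same route as the paper: discretize to $a_{m,k}=\lfloor y_{m,k}/\varepsilon\rfloor$, split the increments $b_{m,k}\in\{-1,0,1\}$ into two bit tables, realize $a_{m,0}$ via Lemma~\ref{lem:widthPower}/\ref{lem:wideToDeep} and the two cumulative bit sums via Lemma~\ref{lem:BitExtractionMulti} in parallel (giving width $(4N+2)+2(6N+14)=16N+30$), then clip. The only cosmetic difference is your clipping $Y-\sigma(Y-\sigma(\cdot))$ in place of the paper's $\min\{\sigma(\cdot),y_{\max}\}$, and your depth bookkeeping is the same $5L+4$ for the parallel block plus at most three layers for combination and clipping, matching the target $5L+7$.
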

\begin{proof}
	Define 
\begin{equation*}
a_{m, k }\coloneqq \lfloor y_{m, k }/\varepsilon\rfloor 
,\quad \tn{for $m=0,1,\cdots,M-1 \tn{\quad and\quad }  k =0,1,\cdots,Ln-1$.}
\end{equation*}
We will construct a function implemented by a ReLU network to map the index $(m, k )$ to $a_{m, k }\varepsilon$ for $m=0,1,\cdots,M-1 \tn{ and }  k =0,1,\cdots,Ln-1$. 

Define $b_{m,0}\coloneqq 0$ and 
$b_{m, k }\coloneqq a_{m, k }-a_{m, k -1}$ for $m=0,1,\cdots,M-1 \tn{ and }  k =1,2,\cdots,Ln-1$. 
Since $|y_{m, k }-y_{m, k -1}|\le \varepsilon$ for all $m$ and $ k $, we have $b_{m, k }\in \{-1,0,1\}$. Hence, there exist $c_{m, k }\in\{0,1\}$ and $d_{m, k }\in\{0,1\}$ such that 
$b_{m, k }=c_{m, k }-d_{m, k }$, which implies
\begin{equation*}
\begin{split}
a_{m, k }=a_{m,0}+\sum_{i=1}^{ k }(a_{m,i}-a_{m,i-1})
&=a_{m,0}+\sum_{i=1}^{ k }b_{m,i}
=a_{m,0}+\sum_{i=0}^{ k }b_{m,i}\\
&=a_{m,0}+\sum_{i=0}^{ k }c_{m,i}-\sum_{i=0}^{ k }d_{m,i},
\end{split}
\end{equation*}
for $m=0,1,\cdots,M-1 \tn{ and }  k =0,1,\cdots,Ln-1$.

Consider the sample set 
\[\big\{(m,a_{m,0}):m=0,1,\cdots,M-1\big\}\bigcup \{(M,0)\}.\]
 Its size is $M+1=N\cdot\big((NL-1)+1\big)+1$, by Lemma~\ref{lem:widthPower} (set $N_1=N$ and $N_2=NL-1$ therein), there exists \[\psi_{1}\in \NNF(\NNwidthvec=[2N,2(NL-1)+1])=\NNF(\NNwidthvec=[2N,2NL-1])\]
  such that 
\[\psi_{1}(m)=a_{m,0},\quad \tn{for $m=0,1,\cdots,M-1$.}\]

By Lemma~\ref{lem:BitExtractionMulti}, there exist $\psi_{2}, \psi_{3}\in \NNF(\NNwidth\le 6N+14\NNspace\NNdepth\le 5L+4)$ such that \[\psi_{2}(m, k )=\sum\limits_{i=0}^{ k }c_{m,i}
\quad\tn{and}\quad  
\psi_{3}(m, k )=\sum\limits_{i=0}^{ k }d_{m,i},\]
\tn{for $m=0,1,\cdots,M-1 \tn{ and }  k =0,1,\cdots,Ln-1$.}
Hence, it holds that 
\begin{equation}
\label{eq:returnaml}
a_{m, k }=a_{m,0}+\sum_{i=0}^{ k }c_{m,i}-\sum_{i=0}^{ k }d_{m,i}=\psi_{1}(m)+\psi_{2}(m, k )-\psi_{3}(m, k ),
\end{equation}
for $m=0,1,\cdots,M-1 \tn{ and }  k =0,1,\cdots,Ln-1$.

Define 
\begin{equation*}
y_{\tn{max}}\coloneqq\max\{ y_{m, k }: m=0,1,\cdots,M-1 \tn{\quad and\quad }  k =0,1,\cdots,Ln-1\}.
\end{equation*}
Then the desired function can be implemented by two sub-networks shown in Figure~\ref{fig:contFuncPointFitting}.

\begin{figure}[!htp]
	\centering
	\begin{subfigure}[c]{0.41\textwidth}
		\centering
		\includegraphics[width=0.98\textwidth]{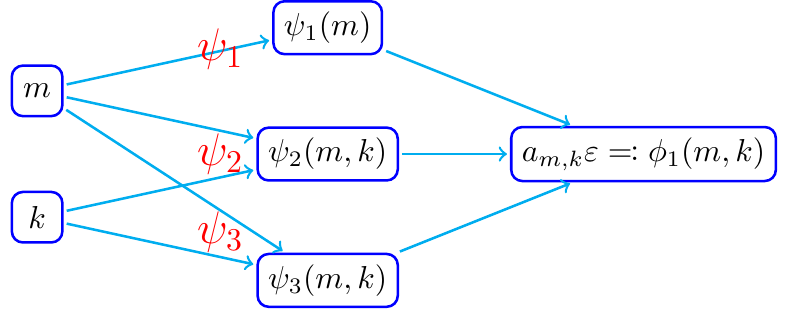}
		\subcaption{$\phi_1$}
	\end{subfigure}
	\begin{subfigure}[c]{0.5\textwidth}
		\centering
		\includegraphics[width=0.98\textwidth]{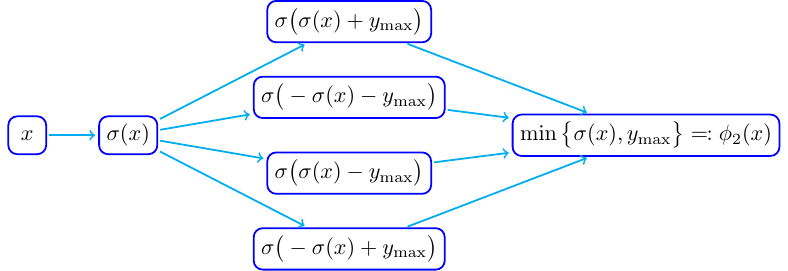}
		\subcaption{$\phi_2$}
	\end{subfigure}		
	\caption{Illustrations of two  sub-networks implementing the desired function $\phi=\phi_2\circ\phi_1$ for $m=0,1,\cdots,M-1$ and $ k =0,1,\cdots,Ln-1$, based on Equation~\eqref{eq:returnaml} and  the fact $\min\{x_1,x_2\}=\tfrac{x_1+x_2-|x_1-x_2|}{2}=\tfrac{\sigma(x_1+x_2)-\sigma(-x_1-x_2)-\sigma(x_1-x_2)-\sigma(-x_1+x_2)}{2}$. 
		}
	\label{fig:contFuncPointFitting}
\end{figure}

By Lemma~\ref{lem:wideToDeep},  
\begin{equation*}
	\begin{split}
		\psi_1 &\in \NNF(\NNinput=1\NNspace\NNwidthvec=[2N,2NL-1]\NNspace\NNoutput=1)\\
		&\subseteq \NNF(\NNinput=1\NNspace\NNwidth\le 4N+2\NNspace\NNdepth\le L+1\NNspace\NNoutput=1).
	\end{split}
\end{equation*}
Recall  that $\psi_{2}, \psi_{3}\in \NNF(\NNwidth\le 6N+14\NNspace\NNdepth\le 5L+4)$.	
Thus,   $\phi_1\in\NNF(\NNwidth\le (4N+2)+2(6N+14)= 16N+30\NNspace\NNdepth\le (5L+4)+1=5L+5)$ as shown in Figure~\ref{fig:contFuncPointFitting}.  And it is clear that $\phi_2\in \NNF(\NNwidth\le 4\NNspace\NNdepth\le 2)$, implying 
$\phi=\phi_2\circ\phi_1\in\NNF(\NNwidth\le 16N+30\NNspace\NNdepth\le (5L+5)+2=5L+7)$.

Clearly,  $0\le\phi(x_1,x_2)\le y_{\tn{max}}$ for any $x_1,x_2\in \R$, since $\phi(x_1,x_2)= \phi_2\circ\phi_1(x_1,x_2)=\max\{\sigma(\phi_1(x_1,x_2)),y_{\tn{max}}\}$. 

Note that $0\le  a_{m, k } \varepsilon= \lfloor y_{m, k }/\varepsilon\rfloor \varepsilon \le y_{\tn{max}}$. Then we have $\phi(m, k )=\phi_2\circ\phi_1(m, k )=\phi_2( a_{m, k }\varepsilon)=\max\{\sigma(a_{m, k }\varepsilon),y_{\tn{max}}\}= a_{m, k }\varepsilon$. Therefore,
\begin{equation*}
\begin{split}
|\phi(m, k )-y_{m, k }|=\left|a_{m, k }\varepsilon-y_{m, k }\right|=\big| \lfloor y_{m, k }/\varepsilon\rfloor\varepsilon-y_{m, k }\big|
\le \varepsilon,
\end{split}
\end{equation*}
for $m=0,1,\cdots,M-1$ and $ k =0,1,\cdots,Ln-1$. Hence, we finish the proof.
\end{proof}

Finally, we apply Lemma~\ref{lem:pointFittingOld} to prove Proposition~\ref{prop:pointFitting}.

\begin{proof}[Proof of Proposition~\ref{prop:pointFitting}]
	Denote $M=N^2L$, $n=\lfloor \log_3(N+2) \rfloor$, and $\hatL=Ln$. We may assume $J=MLn=M\hatL$ since we can set $y_{J-1}=y_{J}=y_{J+1}=\cdots=y_{M\hatL-1}$ if $J<M\hatL$. 

Consider the sample set
\[\big\{(m\hatL,m):m=0,1,\cdots,M\big\}\bigcup \big\{(m\hatL+\hatL-1,m):m=0,1,\cdots,M-1\big\}.\]
Its size is $2M+1=N\cdot\big((2NL-1)+1\big)+1$.
By Lemma~\ref{lem:widthPower} (set $N_1=N$ and $N_2=NL-1$ therein), there exists
\[\phi_1\in \NNF(\NNwidthvec=[2N,2(2NL-1)+1])=\NNF(\NNwidthvec=[2N,4NL-1])\] 
such that
\begin{itemize}
	\item $\phi_1(M\hatL)=M$ and $\phi_1(m\hatL)=\phi_1(m\hatL+\hatL-1)=m$ for $m=0,1,\cdots,M-1$.
	\item $\phi_1$ is linear on each interval $[m\hatL,m\hatL+\hatL-1]$ for $m=0,1,\cdots,M-1$.
\end{itemize}
It follows that 
\begin{equation}
\label{eq:phi1returnml}
\phi_1(j)=m,\quad \tn{and}\quad  j-\hatL\phi_1(j)= k ,\quad \tn{where $j=m\hatL+ k $},
\end{equation}
for $m=0,1,\cdots,M-1$ and $ k =0,1,\cdots,\hatL-1$.

Since $J=M\hatL$, any  $j\in \{0,1,\cdots,J-1\}$ can be uniquely indexed as $j=m\hatL+ k $ for $m\in \{0,1,\cdots,M-1\}$ and  $ k \in \{0,1,\cdots,\hatL-1\}$.
So we can denote $y_j=y_{m\hatL+ k }$ as  $y_{m, k }$.
Then by Lemma~\ref{lem:pointFittingOld}, there exists $\phi_2\in \NNF(\NNwidth\le 16N+30\NNspace\NNdepth\le 5L+7)$ such that 
\begin{equation}
\label{eq:phi2Mniusyml}
|\phi_2(m, k )-y_{m, k }|\le \varepsilon,\quad \tn{for $m=0,1,\cdots,M-1 \tn{\quad and\quad } k =0,1,\cdots,\hatL-1$},
\end{equation}
and 
\begin{equation}
\label{eq:phi2UB}
0\le \phi_2(x_1,x_2)\le  y_{\tn{max}},\quad \tn{for any $x_1,x_2\in \R$,}
\end{equation}
where $y_{\tn{max}}\coloneqq\max\{y_{m, k }:m=0,1,\cdots,M-1 \tn{ and }  k =0,1,\cdots,\hatL-1\}=\max\{y_{j}:j=0,1,\cdots,J-1\}$.

\begin{figure}[!htp]
	\centering
	\includegraphics[width=0.8\textwidth]{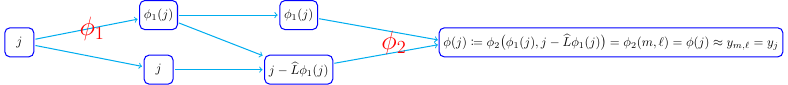}	
	\caption{An illustration of the ReLU network implementing the desired function $\phi$ based Equation~\eqref{eq:phi1returnml}. The index $j\in \{0,1,\cdots,M\hatL-1\}$ is unique represented by $j=mL+ k $ for $m\in\{0,1,\cdots,M-1\}$ and $ k \in\{0,1,\cdots,\hatL-1\}$. 
	}
	\label{fig:pointFittingThree}
\end{figure}

By Lemma~\ref{lem:wideToDeep},
\begin{equation*}
	\begin{split}
		 	\phi_1 &\in \NNF(\NNinput=1\NNspace\NNwidthvec=[2N,4NL-1]\NNspace\NNoutput=1)\\
		&\subseteq \NNF(\NNinput=1\NNspace\NNwidth\le 8N+2;\NNdepth\le L+1\NNspace\NNoutput=1).
	\end{split}
\end{equation*} 
Recall that $\phi_2\in \NNF(\NNwidth\le 16N+30\NNspace\NNdepth\le 5L+7)$. 
 So $\phi\in \NNF(\NNwidth\le 16N+30\NNspace\NNdepth\le (L+1)+2+(5L+7)= 6L+10)$ as shown in Figure~\ref{fig:pointFittingThree}.

Equation~\eqref{eq:phi2UB} implies 
\begin{equation*}
0\le \phi(x)\le  y_{\tn{max}},\quad \tn{for any $x\in \R$,}
\end{equation*}
since $\phi$ is given by $\phi(x)=\phi_2\big(\phi_1(x),x-\hatL\phi_1(x)\big)$.

Represent $j\in \{0,1,\cdots,M\hatL-1\}$ via $j=m\hatL+ k $ for $m=0,1,\cdots,M-1$ and $ k =0,1,\cdots,\hatL-1$.  Then, by Equation~\eqref{eq:phi2Mniusyml}, we have
\begin{equation*}
\begin{split}
|\phi(j)-y_j|=|\phi_2\big(\phi_1(j),j-\hatL\phi_1(j)\big)-y_j|=|\phi_2(m, k )-y_{m, k }|\le \varepsilon,
\end{split}
\end{equation*}
for any $j\in \{0,1,\cdots,M\hatL-1\}=\{0,1,\cdots,J-1\}$.
 So we finish the proof.
\end{proof}

We would like to remark that the key idea in the proof of Proposition~\ref{prop:pointFitting} is the bit extraction technique in Lemma~\ref{lem:BitsWidthDepth}, which allows us to store  $Ln$ bits in a binary number $\bin  0.\theta_1\theta_2\cdots \theta_{Ln}$ and extract each bit $\theta_i$. The extraction operator can be efficiently carried out via a deep ReLU neural network demonstrating the power of depth.

\section{Conclusion and future work}
\label{sec:conclusion}
This paper aims at a quantitative and optimal approximation rate for ReLU networks in terms of the width and depth  to approximate continuous functions. It is shown by construction that ReLU networks with width $\calO(N)$ and depth $\calO(L)$ can approximate an arbitrary continuous function $f$ on $[0,1]^d$ with an approximation rate $\calO\big(\,\omega_f\big((N^2L^2\ln N)^{-1/d}\big)\,\big)$. By connecting the approximation property to VC-dimension,  we prove that such a rate is optimal for H\"older continuous functions on $[0,1]^d$ in terms of the width and depth separately, \black{and hence this rate is also optimal for the whole continuous function class}. 
We also extend our analysis to general continuous functions on any bounded subset of $\R^d$. We would like to remark that our analysis was based on the fully connected feed-forward neural networks and the ReLU activation function. It would be very interesting to extend our conclusions to neural networks with other types of architectures (e.g., convolutional neural networks) and activation functions (e.g., tanh and sigmoid functions). 

\section*{Acknowledgments}
Z.~Shen is supported by Tan Chin Tuan Centennial Professorship.  H.~Yang was partially supported by the US National Science Foundation under award DMS-1945029. S.~Zhang is supported by a Postdoctoral Fellowship under NUS ENDOWMENT FUND (EXP WBS) (01 651).
\bibliographystyle{siam}
\bibliography{references}
\end{document}